\documentclass[11pt]{article}

\usepackage{amsthm,amsmath,bbm,amssymb,natbib,graphicx,booktabs,array,fullpage,url,mathtools,wrapfig,lipsum,mathrsfs,dsfont,titling,epstopdf,bm,relsize,caption,xcolor,algorithm,enumitem,multirow,soul,makecell}

\usepackage[noend]{algpseudocode}
\usepackage[colorlinks=true,
linkcolor=blue,
urlcolor=blue,
citecolor=blue]{hyperref}

\usepackage{xr} 

\newtheorem{conj}{Conjecture}
\newtheorem{thm}[conj]{Theorem}
\newtheorem{cor}[conj]{Corollary}
\newtheorem{prop}[conj]{Proposition}
\newtheorem{lemma}[conj]{Lemma}
\newtheorem{ass}{Assumption}

\newtheorem{definition}{Definition}

\providecommand{\customgenericname}{}
\newcommand{\newcustomtheorem}[2]{%
	\newenvironment{#1}[1]
	{%
		\renewcommand\customgenericname{#2}%
		\renewcommand\theinnercustomgeneric{##1}%
		\innercustomgeneric
	}
	{\endinnercustomgeneric}
}

\newcustomtheorem{customAss}{Assumption}

\theoremstyle{definition}\newtheorem{remark}{Remark}

\def\PP{\mathbb{P}}
\def\EE{\mathbb{E}}
\def\RR{\mathbb{R}}

\def\E{\mathcal{E}}
\def\I{\mathcal{I}}

\def\S{\mathcal{S}}
\def\N{\mathcal{N}}
\def\R{\mathcal{R}}

\def\lp{\lambda_p}
\def\sxy{\Sigma_{XY}}
\def\Y{\bm{Y}}
\def\X{\bm{X}}

\def\Z{\bm{Z}}
\def\W{\bm{W}}
\def\Eps{\bm{\eps}}
\def\U{\bm{U}}
\def\V{\bm{V}}
\def\D{\bm{D}}
\def\G{\bm{G}}

\def\Gamma{\Sigma_W}
\def\Cov{\text{Cov}}   
\def\wh{\widehat}
\def\eps{\varepsilon}
\def\e{\gamma_{\eps}}
\def\w{\gamma_w}
\def\z{\gamma_z}

\def\T{\top}
\def\op{{\rm op}}

\def\tr{{\rm tr}}

\def\i{\infty}
\def\bI{\bm{I}}

\def\wt{\widetilde}
\let\emptyset\varnothing

\def\rs{\!\!\!}
\def\C{\Sigma_Z}
\def\whC{\wh \Sigma_Z}

\def\og{\|\Gamma\|_{\op}}

\def\1{\bm{1}}
\def\pn{(p\vee n)}

\def\rank{\textrm{rank}}

\def\lk{\lambda_K}
\def\sza{A\Sigma_Z A^\top}

\def\sx{\Sigma_X}
\def\sw{\Sigma_W}
\def\sz{\Sigma_Z}

\def\sep{\sigma}
\def\errw{\delta_W}

\newcommand{\sbt}{\,\begin{picture}(-1,1)(-0.5,-2)\circle*{2.3}\end{picture}\ }

\DeclareMathOperator*{\argmin}{arg\,min}

\captionsetup{font={small},skip=5pt,width=0.8\textwidth}

\begin{document}
	\title{
	Prediction under latent factor regression: adaptive PCR, interpolating predictors  and beyond}
%	Finite sample risk bounds for prediction  under  latent factor regression models with applications to prediction from a data-dependent number of principal components} 
	\author{Xin Bing\thanks{Department of Statistics and Data Science, Cornell University, Ithaca, NY. E-mail: \texttt{xb43@cornell.edu}.}~~~~~Florentina Bunea\thanks{Department of Statistics and Data Science, Cornell University, Ithaca, NY. E-mail: \texttt{fb238@cornell.edu}.}~~~~~ Seth Strimas-Mackey\thanks{Department of Statistics and Data Science, Cornell University, Ithaca, NY. E-mail: \texttt{scs324@cornell.edu}.}~~~~~Marten Wegkamp\thanks{Departments of Mathematics, and   of Statistics and Data Science, Cornell University, Ithaca, NY. E-mail: \texttt{mhw73@cornell.edu}.} }
	\date{}
	\maketitle
	\vspace{-0.5in}
	
	\begin{abstract}
	    This work is devoted to the finite sample prediction risk analysis of a class of linear predictors of a response $Y\in \RR$ from a high-dimensional random vector $X\in \RR^p$ when $(X,Y)$ follows a latent factor regression model generated by a unobservable latent vector $Z$ of dimension less than $p$. Our primary contribution is in establishing finite sample risk bounds for prediction with the ubiquitous Principal Component Regression (PCR) method, under the factor regression model, with the number of principal components adaptively selected from the data -- a form of theoretical guarantee that is surprisingly lacking from the PCR literature. To accomplish this, we prove a master theorem that establishes a risk bound for a large class of predictors, including the PCR predictor as a special case. This approach has the benefit of providing a unified framework for the analysis of a wide range of linear prediction methods, under the factor regression setting. In particular, we use our main theorem to recover known risk bounds for the minimum-norm interpolating predictor, which has received renewed attention in the past two years, and a prediction method tailored to a subclass of factor regression models with identifiable parameters. This model-tailored method can be interpreted as prediction via clusters with latent centers. 
	    
	    To address the problem of selecting among a set of candidate predictors, we analyze a simple model selection procedure based on data-splitting, providing an oracle inequality under the factor model to prove that the performance of the selected predictor is close to the optimal candidate. We conclude with a detailed simulation study to support and complement our theoretical results.
	\end{abstract}
	
	\noindent {\bf Keywords:}  { \small High-dimensional regression, latent factor model, identifiability, principal component regression, interpolating predictor, prediction with latent-state clusters, model selection}

\section{Introduction}\label{sec_pred}

This work is devoted to the derivation and analysis of finite sample prediction   risk bounds for a class of linear predictors of a random response $Y \in \RR$ from a high-dimensional, and possibly highly correlated random vector $X \in \RR^p$, when the vector $(X, Y)$ follows a latent factor regression model, generated by a latent vector of dimension lower than $p$. We assume that  there exist  a random, unobservable, latent vector $Z \in \RR^K$, % and that there exist
a deterministic matrix $A \in \RR^{p \times K}$, and a coefficient vector $\beta \in \RR^K$ such that 
\begin{equation}\label{main_model}
    \begin{aligned}
	%\label{model_Y}
	Y &= Z^\T \beta + \eps,\\ 
	X &=  AZ + W,
	\end{aligned}
\end{equation}
with some unknown $ K <  p$. 
The random noise $\eps\in \RR$ and $W\in \RR^p$ have mean zero and second moments $\sigma^2 \coloneqq \EE[\eps^2]$ and $\Gamma \coloneqq \EE[WW^\T]$, respectively. The random variable $\eps$ and random vectors $W$ and $Z$ are mutually independent. Throughout the paper,  both $\C := \EE[ZZ^\T]$ and $A$ have rank equal to $K$.

Independently  of this model formulation, but based on the  belief that $Y$ depends chiefly on a lower-dimensional approximation of $X$,  prediction of $Y$ via principal components (PCR) is perhaps the most utilized  scheme, with a history dating back many decades \citep{Kendall,Hotelling}. Given the data $\X = (X_1, \ldots, X_n)^\T$ and $\Y = (Y_1,\ldots, Y_n)$ consisting of $n$ independent copies of $(X,Y) \in \RR^p \times \RR$, PCR-$k$  predicts $Y_*\in\RR$ after observing a  new data point $X_*\in\RR^p$  by
	\begin{eqnarray}\label{def_U}
	\wh Y^*_{\U_{k}}&=& 
	X_{*}^\T \U_{k}\left[\U_{k}^{\T}\X^\T\X\U_{k}\, \right]^+\U_{k}^{\T}\X^\T \Y \nonumber\\
	&=& 	X_{*}^\T \U_{k}\left[ \X \U_{k} \right]^{+} \Y,
	\end{eqnarray} 
where 	${\U_{k}}$ is the $p \times k$ matrix of the top eigenvectors of the sample covariance matrix $\X^\T \X/n$, relative to the largest $k$ eigenvalues, where $k$ is ideally determined in a data-dependent fashion and $M^{+}$ denotes the Moore-Penrose inverse of a matrix $M$.
	
% 	{\color{gray} I propose to use $k$, as above.  \\
	
% 	\begin{eqnarray}\label{def_U}
% 	\wh Y^*_{\U_{\wh s}}&=& 
% 	X_{*}^\T \U_{\wh s}\left[\U_{\wh s}^{\T}\X^\T\X\U_{\wh s}\, \right]^+\U_{\wh s}^{\T}\X^\T \Y \nonumber\\
% 	&=& 	X_{*}^\T \U_{\wh s}\left[ \X \U_{\wh s} \right]^{+} \Y,
% 	\end{eqnarray} 
% where 	${\U_{\wh s}}$ is the $p \times \wh s$ matrix of the top eigenvectors of the sample covariance matrix of $X$, relative to the largest $\wh s$ eigenvalues, where $\wh s$ is determined in a data-dependent fashion and $M^{+}$ denotes the Moore-Penrose generalized inverse of a matrix $M$. {\color{blue} to decide on the generic letter $\wh s$ or ...} }

Model (\ref{main_model}) provides a natural context for the theoretical analysis of PCR-$k$ prediction. It is perhaps surprising that its theoretical study so far is limited to asymptotic analyses of the out-of-sample prediction risk for PCR-$K$ as $p, n \rightarrow \infty$
% of the risk associated with the out-of-sample prediction accuracy of  a new value of $Y$
\citep{SW2002_JASA,Bai-Ng-CI}, and finite sample / asymptotic risk bounds on the in-sample prediction accuracy of PCR-$K$ \citep{Bai-factor-model-03,Bair_JASA,fan2013large,Kelly-2015,fan2017} in identifiable factor models with known and fixed $K$.  

To the best of our knowledge, finite sample prediction risk bounds for $	\wh Y^*_{\U_{k}}$, corresponding to data-dependent choices of $k$,  are lacking in the literature, and their study  under factor models of unknown $K$, possibly varying with $n$, provides motivation for this work.

To  obtain risk bounds for PCR, we prove a master theorem, Theorem \ref{thm_pred}, that establishes a finite sample prediction risk bound for linear predictors of the general form 
\begin{equation}\label{def_pred_B_intro}
\wh Y^*_{\wh B}=X_{*}^\T \wh B\left(\wh B^\T\X^\T\X\wh B\, \right)^+\wh B^\T\X^\T \Y, 
\end{equation} 
where $\wh B\in \RR^{p\times q}$ is an appropriate matrix that may be deterministic or depend on the data $\X$, with dimension $q$ allowed to be random.

This approach has the benefit of not only covering the special case of PCR, corresponding to choice $\wh B = \U_k$, but of offering a unifying analysis of other prediction schemes of the form (\ref{def_pred_B_intro}). One important example 
% covered by  (\ref{def_pred_B_intro})
corresponds to $\wh B = \bI_p$, which leads to another model agnostic predictor, the generalized least squares estimator (also known as the  minimum norm interpolating predictor), which has enjoyed revamped popularity in the last two years \citep{montanari2019generalization, bunea2020interpolation, muthukumar2019harmless,muthukumar2020classification,hastie2019surprises, feldman2019does, Belkin15849, belkin2019models,belkin2018overfitting,belkin2018understand, 
 belkin2018does, bartlett2019,liang2019}. Using the full data matrix $\X$ for prediction -- instead of just the first $k$ 
%  {\mike same $k$ here} 
 principal components as in PCR -- leads to additional bias compared to PCR prediction. However, in the high-dimensional regime $p\gg n$, this bias can become small and choosing $\wh B = \bI_p$ can become a viable alternative to PCR that requires no tuning parameters.

In addition to these two model-agnostic prediction methods, Theorem \ref{thm_pred} can be used to analyze predictors directly tailored to model (\ref{main_model}), which are shown  formally to be of type  (\ref{def_pred_B_intro}) in Section \ref{sec_pred_er}. We give a  particular expression of $\wh B$, as well as the corresponding  prediction analysis,  under further modelling restrictions that render parameters $K$, $A$ and $\beta$ identifiable.  The model specifications given in Section \ref{sec_pred_er}  allow us to view $A$ as a cluster membership matrix, making it possible  to address a third, understudied,  class of examples pertaining to prediction from low-dimensional feature representation, that of prediction of $Y$ via latent cluster centers, for features that exhibit an overlapping clustering structure corresponding to $A$. 
% predictors directly tailored to model (\ref{main_model}) can be formally shown, in Section ? below, to have the expression (\ref{def_pred_B_intro}) {\color{blue} This has disappeared :-), we have to put it back }. This, in turn, allows us to address a third, understudied,  class of examples, that of prediction of $Y$ via latent cluster centers, for features that exhibit an overlapping clustering structure that corresponds to specifications on $A$ given in Section {\color{blue} give section}. 

\subsection{Our contributions and organization of the paper} 

Our main theoretical goal is to offer sufficient conditions on $\wh B$ under which the prediction risk $\R(\wh B)$, defined as  \begin{equation}\label{def_risk}
    \R(\wh B)\coloneqq\EE[(Y_* - \wh Y_{\wh B}^*)^2],
\end{equation}
provably approaches an optimal risk benchmark, as $n$ and $p$ grow, with particular attention given to the case $p > n$. The expectation in (\ref{def_risk}) is taken with respect to the new data point $(X_*, Y_*)$.
Our main applications will be to the finite sample risk bounds of the three classes of predictors discussed in the previous section. \\

     {\bf 1. General finite sample risk bounds for linear predictors, under factor regression models.} To meet our main theoretical  goal, in Section \ref{sec_pred_fr}, we state the risk benchmark in Lemma \ref{lem_pop_X} and prove a master theorem, and  our main theoretical result, Theorem \ref{thm_pred}.  It provides a finite sample bound on  $\R(\wh B)$, for generic $\wh B$, when $(X,Y)$ follow a factor regression model (\ref{main_model}) that is fully introduced in Section \ref{sec_prelim}. 
    
    The risk bound  (\ref{bd_R_B_star}) of Theorem \ref{thm_pred} depends on random quantities $\wh r = \rank(\X P_{\wh B})$, $\wh \eta = n^{-1}\sigma_{\wh r}^2(\X P_{\wh B})$, and $\wh\psi = n^{-1}\sigma_1^2(\X P_{\wh B}^{\perp})$, where we use $\sigma_k(M)$ to denote the $k$th largest singular value for any matrix $M$.
To interpret these, note that $\wh Y_{\wh B}^* = \wh Y_{P_{\wh B}}^*$  (see Lemma \ref{lem_invariant} in Appendix \ref{sec:main proofs} for the proof), where $P_{\wh B}$ is the projection onto the range of $\wh B$. We then see that $\wh r$ is the rank of the projected data matrix $\X P_{\wh B}$ used for constructing $\wh Y_{\wh B}^*$, $\wh \eta$ captures the size of the signal that is retained in $\X$ after projection onto the range of $\wh B$, and $\wh \psi$ captures the bias introduced by using only the component of $\X$ in the range of $\wh B$ for prediction. 

The utility of Theorem \ref{thm_pred}, as a general result,  is in reducing the difficult task of bounding $\R(\wh B)$ to the relatively easier one of controlling $\wh r$, $\wh \eta$, and $\wh \psi$ corresponding to any matrix $\wh B$ of interest. \\

% {\bf Old, but some of it OK. Adjust text in light of what we do and the bold face text in the Section where we give main theorem.} 

% $\psi$, controls the bias introduced by using only the component of $\X$ in the range of $\wh B$ for prediction. The sequence $\eta_n$ gives a lower bound on the signal that is retained in $\X$ after projection onto the range of $\wh B$. Theorem \ref{thm_pred} shows how these two sequences capture the effectiveness of any choice of $\wh B$ for prediction via (\ref{def_pred_B_intro}). \\

{\bf 2. Finite sample risk bounds for PCR-$\wh s$, with data-adaptive $\wh s$ principal components.}
We use Theorem \ref{thm_pred} to analyze the prediction risk of PCR-$\wh s$ under the factor regression model, for two choices of the number of principal components $\wh s$. We first consider the \textit{theoretical elbow method}, which selects $\wh s$ corresponding to the smallest eigenvalue of $\X^\T \X/n$ above the noise level of order $\delta_W \coloneqq c(\|\sw\|_{\op} + \tr(\sw)/n)$, for an absolute constant $c > 0$.  Corollary \ref{cor_PCR_delta_w} provides the rate
    	\begin{equation}
        \R(\U_{\wh s})-\sigma^2
		~ \lesssim  ~ (K+\log n) {\sigma^2\over n}+ \delta_W \beta^\T(A^\T A)^{-1}\beta.
     \end{equation}
 The first term on the right hand side is the standard variance term of linear regression in $K$ dimensions. The second term is a bias term that arises from the fact that we predict using $X$ instead of $Z$; we show that such a term is unavoidable in Lemma \ref{lem_pop_X} of Section \ref{sec_benchmark} below.

We termed this procedure {\it theoretical} as $\delta_W$ depends on unknown quantities of the data distribution. We address this by introducing a novel method in Section \ref{pcr_newmethod}, which we show in Corollary \ref{cor_PCR_s_tilde} achieves the same rate as PCR with the theoretical elbow method, under mild additional assumptions, and is fully data-adaptive, only requiring the choice of one scale-free tuning parameter. \\

{\bf 3. Minimum-norm interpolating predictors.} In Section \ref{sec_pred_GLS} we use the master theorem to recover risk bounds for the Generalized Least Squares predictor (GLS), independently derived in \cite{bunea2020interpolation}. This predictor is also known as the minimum-norm interpolating predictor when $p > n$.\\

{\bf 4. Prediction under identifiable factor regression models: Essential regression.}   In Section \ref{sec_pred_er} we consider a particular identifiable factor regression model, the Essential Regression model introduced in \cite{ER}. The identifiability assumptions employ a type of errors-in-variables parametrization of $A$, described in Section  \ref{sec_pred_er}, that allows the components of $Z$ to be respectively matched with distinct groups of components of $X$. The latter property, combined with a further sparsity assumption on $A$, can be used to define overlapping clusters of $X$ with latent centers $Z_k$, $ 1 \leq k \leq K$ \citep{LOVE}.  Thus, of independent interest, prediction in Essential Regression is prediction via latent cluster centers. We show  formally in Section  \ref{sec_pred_er} that this model specification leads to predictors  of type (\ref{def_pred_B_intro}), with $\wh B = \wh A$, for an appropriate estimator $\wh A$ of $A$. We provide a finite sample prediction bound in Theorem \ref{thm_pred_A}, as an application of  Theorem \ref{thm_pred}. We use the derived bound  as an example that illustrates the possible benefits of sparsity in the predictor's coefficient matrix, as our matrix $\wh A$ is allowed to be sparse. \\

{\bf 5. Data-splitting under factor regression models.}
To allow for model selection among the diverse set of prediction methods in this setting, we offer a simple model selection approach in Section \ref{sec_data_split} based on data splitting. We provide an oracle inequality showing that the selected predictor performs nearly as well as the predictor with the lowest risk. \\

A preview of the results in Sections \ref{sec_pred_pcr} -- \ref{sec_pred_GLS_ER} is given in  Table \ref{table} below, which focuses on the high-dimensional regime where $p > Cn$ for a large enough constant $C>0$, and is stated under the simplifying assumptions $\lambda_K(A^\T A) \gtrsim p / K$ and $r_e(\Gamma) \asymp p$, where $r_e(\Gamma)  \coloneqq \tr(\sw)/\|\sw\|_{\op}$ is the reduced effective rank of $\sw$, the covariance matrix of $W$ from model (\ref{main_model}). The bound for Essential Regression contains the quantity $\|A_J\|_0$, which is the sparsity level of the sub-matrix $A_J$ of $A$ corresponding to \textit{non-pure} variables in the Essential Regression model, namely the variables associated with more than one latent factor $Z_k$ (see Section \ref{sec_pred_er} for a formal definition). The full set of conditions under which these bounds hold, as well as their general form is given, respectively, in each of the sections in which these methods are analyzed. For now we mention that we do not make specific distributional assumption on the data, but we do derive the rates given in the table below under the assumption that  $\eps  \in \RR$, $Z \in \RR^K$, and $W \in \RR^p$ are sub-Gaussian. 

The term $\sigma^2 K/n$  is common to all three risk bounds, and shows that all methods have the potential to adapt to the unknown, latent, $K$-dimensional model structure, provided that the remaining terms are small. Relative to PCR and ER, the GLS method has an additional variance term $\sigma^2n/p$, that arises from the fact that GLS uses the full data matrix $\X$, as opposed to a lower-dimensional projection of it; this demonstrates that GLS  has competitive performance only when $p\gg n$. The relative performance of the PCR and ER methods depends on the sparsity of the matrix $A_J$: when $\|A_J\|_0=o(p)$, for example, the ER method can outperform PCR.

We further discuss the relative merits of these predictors, in terms of their respective risk bounds and assumptions under which they hold, in Section \ref{sec_compare}.

\begin{table}[h!] 
\centering
    \begin{tabular}{|c|c|c|}
    \hline
   \rule{0pt}{0.5cm}  Prediction Method & $\wh B$ &Excess risk bound\\
    \hline
    \rule{0pt}{0.65cm} PCR & $ \U_K$ &${K\over n}\sigma^2 + {K\over p}\og \|\beta\|^2+ {K\over n}\og \|\beta\|^2$\rule[-0.5cm]{0pt}{0.6cm}\\
            \hline
    \rule{0pt}{0.6cm} GLS & $\bI_p$ &  ${K\over n}\sigma^2 +
			{n \over p}\sigma^2+  {K \over n} \og \|\beta\|^2$\rule[-0.5cm]{0pt}{0.6cm}\\
	\hline
    	\rule{0pt}{0.6cm} ER & $\wh A$ & ${K\over n}\sigma^2+ {K\over p}\og \|\beta\|^2+ {\|A_J\|_0 \over p} \times {K \over n}\og \|\beta\|^2$\rule[-0.5cm]{0pt}{0.6cm}\\
    \hline
    \end{tabular}
    \caption{Summary of bounds on $\R(\wh B) - \sigma^2$, where $\R(\wh B)$ is defined in (\ref{def_risk}), for Principal Component Regression (PCR), Generalized Least Squares (GLS), and Essential Regression (ER), stated under simplifying assumptions described in Section \ref{sec_compare}. The second column gives the choice of $\wh B$ corresponding to each method. All three bounds follow from
    %are given %in Section \ref{apps}, 
    %as corollaries of 
    the main Theorem \ref{thm_pred}. }\label{table}
\end{table}

We conclude the paper with Section \ref{sec_sims}, in which we present a detailed simulation study of the PCR-type predictors, the minimum-norm interpolating predictor, and predictors under Essential Regression, as well as the proposed model selection method. All proofs are deferred to the Appendix.

\paragraph{Notation} We use the following notation throughout the paper. For any vector $v$, we use $\|v\|_q$ denote its $\ell_q$ norm for $0\le q\le \i$. We write $\|v\| = \|v\|_2$. For an arbitrary real-valued matrix $M\in \RR^{r\times q}$, we use $M^+$ to denote the Moore-Penrose inverse of $M$, and $\sigma_1(M)\ge \sigma_2(M)\ge \cdots \ge \sigma_{\min(r,q)}(M)$ to denote the singular values of $M$ in non-increasing order. We define the operator norm $\|M\|_{\op}=\sigma_1(M)$, the Frobenius norm $\|M\|_F^2 = \sum_{i,j}M_{ij}^2$, the elementwise sup-norm $\|M\|_\i = \max_{i,j}|M_{ij}|$ and the cardinality of non-zero entries $\|M\|_0 = \sum_{i,j}1_{M_{ij}\ne 0}$. For a symmetric positive semi-definite matrix $Q\in \RR^{p\times p}$, we use $\lambda_1(Q)\ge \lambda_2(Q)\ge \cdots \ge \lambda_p(Q)$ to denote the eigenvalues of $Q$ in non-increasing order, and $\kappa(Q) = \lambda_1(Q)/\lambda_p(Q)$ to denote its condition number. 

For any two sequences $a_n$ and $b_n$, we write $a_n\lesssim b_n$ if there exists some constant $C$ such that $a_n \le Cb_n$. %We write $a_n = O(b_n)$ if $a_n/b_n \le C$ as $n\to \i$. 
The notation $a_n\asymp b_n$ stands for $a_n \lesssim b_n$ and $b_n \lesssim a_n$.

We use $\bI_d$ to denote the $d\times d$ identity matrix.
For $m\ge 1$, we let $[m] = \{1,2,\ldots,m\}$. Lastly, we use $c,c',C,C'$ to denote positive and finite absolute constants that unless otherwise indicated can change from line to line.

	\section{Bounding the risk $\R(\wh B)$}\label{sec_pred_fr}

    In this section we derive and discuss bounds on the risk $\R(\wh B)$ defined in  (\ref{def_risk}), corresponding to  the  predictor $\wh Y_{\wh B}^*$. Our results are valid for any $\wh B\in \RR^{p\times q}$ that can be either random depending on $\X$ or fixed, where $q\le p$ but is allowed to be random.

    \subsection{Preliminaries}\label{sec_prelim}

    As the risk $\R(\wh B)$ is defined relative to  the first two moments of $(X,Y)$, which are further linked to quantities $(A, \beta, \C, \Gamma, \sigma^2)$ under model (\ref{main_model}), our risk bounds are written in terms of the components of $\theta := (K,\beta,A,\C,\Gamma,\sigma^2)$. We thus start by formally defining model (\ref{main_model}) with respect to $\theta$. 

    \begin{definition}[(Sub-Gaussian) Factor Regression Model]\label{frm}
    We say the pair $(X,Y)$ follows the model FRM$(\theta)$ with $\theta=(K,\beta,A,\C,\Gamma,\sigma^2)$, and write $(X,Y)\sim \PP_\theta$ or $(X,Y)\sim \textrm{FRM}(\theta)$, when
    \begin{enumerate}
        \item[(1)] Equation (\ref{main_model}) holds with matrix $A\in \RR^{p\times K}$, vector $\beta\in \RR^K$, and random quantities $(Z, W, \eps)\in (\RR^K, \RR^p,\RR)$ that are mutually independent;
        \item[(2)] $W$ and $\eps$ are mean zero with $\EE_\theta[WW^\T] = \Gamma$ and $\EE_\theta[\eps^2] = \sigma^2$, and $Z$ is also mean zero without loss of generality, with $\EE_\theta[ZZ^\T] = \C$.
        \item[(3)] Both $A$ and $\sz$ have rank equal to $K$.
    \end{enumerate} 
    We further say $(X,Y)\sim \textrm{sG-FRM}(\theta)$ if the following holds in addition to (1) -- (3)
    \begin{enumerate} 
        \item[(4)] There exist finite, absolute positive constants $\e, \w$ and $\z$ such that
		\begin{enumerate}
			\item $\eps$ is $\sigma \e$ sub-Gaussian\footnotemark\footnotetext{A mean zero random variable $x$ is called $\gamma$ sub-Gaussian if $\EE[\exp(tx)]\le \exp(t^2\gamma^2/2)$ for all $t\in \RR$.};
			\item $Z = \C^{1/2}\wt Z$ where $\wt Z$ is $\z$ sub-Gaussian\footnotemark\footnotetext{A mean zero random vector $x$ is called $\gamma$ sub-Gaussian if $\langle x, v\rangle$ is $\gamma$ sub-Gaussian for any unit vector $v$.} with $\EE_{\theta}[\wt Z\wt Z^\T] = \bI_K$;
			\item $W = \sw^{1/2}\wt W$ where $\wt W$ is $\gamma_w$ sub-Gaussian with $\EE_\theta [\wt W\wt W^\T] = \bI_p$.
    \end{enumerate}
    \end{enumerate} 
    \end{definition}
    Since there exist multiple parameters $\theta$ for which  $(X,Y)$ has the same joint distribution, the model  is not identifiable without further restrictions on the parameter space.  As this work is devoted to  the prediction of  $Y$, and not to the estimation of $\theta$, this is not problematic. We thus allow for this lack of identifiability and our subsequent analysis of $\R(\wh B)\coloneqq\EE_\theta[(Y_* - \wh Y_{\wh B}^*)^2]$ is valid for any $\theta$ such that $(X,Y)\sim $ sG-FRM($\theta$). In particular, the analysis is applicable to  any identifiable sG-FRM$(\theta)$,  whenever further structure on $\theta$ is  added to Definition \ref{frm}. We note that $\R(\wh B)$ depends on $\theta$, but we suppress this dependence in the notation for simplicity.

    \subsection{Benchmark of $\R(\wh B)$}\label{sec_benchmark}
    
    To provide a benchmark for $\R(\wh B)$, we let \begin{equation}\label{def_blp}
    \alpha^* := \arg\min_{\alpha}\EE\left[(Y_* - X_*^\T \alpha)^2\right] = [\Cov(X)]^{+}\Cov(X,Y)
    \end{equation}
    denote the coefficient of the best linear predictor (BLP) of $Y_*$ from $X_*$, where $[\Cov(X)]^+$ is the Moore-Penrose pseudoinverse of $\Cov(X)$. For any $\theta = (K,A,\beta, \C,\Gamma, \sigma^2)$ such that $(X_*, Y_*)\sim \text{FRM}(\theta)$ with corresponding latent vector $Z_*$, we have the following chain of simple equalities from our independence assumptions
    \begin{eqnarray}\label{decomp_full}
     \R(\wh B)  &=& \EE_\theta\left[(Y_* - X_*^\T \alpha^*)^2\right] + \EE_\theta\left[(X_*^\T \alpha^* - \wh Y^*_{\wh B})^2\right]  \nonumber \\
     &= & \sigma^2 + \EE_\theta\left[
            (Z_*^\T\beta - X_*^\T \alpha^*)^2
        \right] + \EE_\theta\left[(X_*^\T \alpha^* - \wh Y^*_{\wh B})^2\right] \\\nonumber
        & = & \sigma^2 +  \EE_\theta\left[
            (Z_*^\T\beta -  \wh Y^*_{\wh B})^2
        \right].
    \end{eqnarray}
    We interpret  the term $\sigma^2 = \EE_\theta[\eps^2]$ as an oracle risk value because it is the minimal risk of predicting $Y_*$ from $Z_*$, had $Z_*$ been observable. We thus focus on bounding the difference $\R(\wh B) - \sigma^2$ and refer to it as {\em excess risk}, with the tacit understanding that the excess is relative to oracle prediction.

We further note that the term $\EE_\theta[(Z_*^\T\beta - X_*^\T \alpha^*)^2]$ in (\ref{decomp_full}) is the  minimal risk incurred by predicting $Z_*^\T\beta$ by  $X_*^\T \alpha^*$, with an observable $X_*$. Display (\ref{decomp_full}) shows that it is a population level cost that is incurred in any risk analysis of a predictor of type (\ref{def_pred_B_intro}) performed under FRM$(\theta)$. 
Lemma \ref{lem_pop_X} below quantifies its size, and makes use of the signal-to-noise ratio given by 
\begin{equation}\label{def_snr}
\xi \coloneqq {\lambda_K(A\C A^\T )/ \|\Gamma\|_{{\rm op}}}.
\end{equation}
Its proof can be found in Appendix \ref{app_proof_pred}. 
\begin{lemma} \label{lem_pop_X}
    For any $\theta=(K,A,\beta, \C,\Gamma, \sigma^2)$ with invertible $\Gamma$ such that $(X,Y)\sim \text{FRM}(\theta)$,
	\begin{equation}\label{ineq_R_star}
	    \frac{\xi}{1+\xi}\beta^\top (A^\top \sw^{-1}A)^{-1}\beta\le \EE_\theta \left[
        (Z_*^\T\beta - X_*^\T \alpha^*)^2\right] \le \beta^\top (A^\top \sw^{-1}A)^{-1}\beta.
	\end{equation}
\end{lemma}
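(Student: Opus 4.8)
The plan is to compute $\EE_\theta[(Z_*^\T\beta - X_*^\T \alpha^*)^2]$ more or less explicitly and then bound it from both sides. First I would express the best linear predictor coefficient $\alpha^*$ in closed form under the factor model. Since $X = AZ + W$ with $Z \independent W$, we have $\Cov(X) = A\C A^\T + \Gamma$ and $\Cov(X,Y) = A\C\beta$, so $\alpha^* = (A\C A^\T + \Gamma)^{-1}A\C\beta$ (using that $\Gamma$ is invertible, so $\Cov(X)$ is invertible and the pseudoinverse is the ordinary inverse). By the Woodbury identity, or by directly manipulating the normal equations, one finds the clean form $\alpha^* = \Gamma^{-1}A(A^\T\Gamma^{-1}A + \C^{-1})^{-1}\C^{-1}\C\beta = \Gamma^{-1}A(A^\T\Gamma^{-1}A + \C^{-1})^{-1}\beta$; simplifying $A^\T\alpha^*$ will be the technical heart of the calculation.

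Next I would decompose the error. Write $Z_*^\T\beta - X_*^\T\alpha^* = Z_*^\T\beta - (AZ_* + W_*)^\T\alpha^* = Z_*^\T(\beta - A^\T\alpha^*) - W_*^\T\alpha^*$. Using independence of $Z_*$ and $W_*$ and that both are mean zero,
\begin{equation*}
\EE_\theta\left[(Z_*^\T\beta - X_*^\T\alpha^*)^2\right] = (\beta - A^\T\alpha^*)^\T \C (\beta - A^\T\alpha^*) + (\alpha^*)^\T\Gamma\alpha^*.
\end{equation*}
Now substitute the closed form for $\alpha^*$. Setting $M := A^\T\Gamma^{-1}A$ (a $K\times K$ positive definite matrix since $A$ has rank $K$ and $\Gamma \succ 0$), one gets $A^\T\alpha^* = M(M + \C^{-1})^{-1}\beta$, hence $\beta - A^\T\alpha^* = \C^{-1}(M+\C^{-1})^{-1}\beta = (M\C + \bI_K)^{-1}\beta$ after rearranging. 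Plugging in and simplifying both quadratic forms — this is the routine but slightly involved linear-algebra step — should collapse the whole expression to $\beta^\T (M + \C^{-1})^{-1} M (M + \C^{-1})^{-1}\beta + \beta^\T(M+\C^{-1})^{-1}\C^{-1}(M+\C^{-1})^{-1}\beta$, and combining the two terms via $M + \C^{-1}$ gives exactly $\beta^\T(M + \C^{-1})^{-1}\beta = \beta^\T(A^\T\Gamma^{-1}A + \C^{-1})^{-1}\beta$.

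For the upper bound, $(A^\T\Gamma^{-1}A + \C^{-1})^{-1} \preceq (A^\T\Gamma^{-1}A)^{-1}$ since adding the positive definite matrix $\C^{-1}$ only increases the matrix, so the exact value is at most $\beta^\T(A^\T\Gamma^{-1}A)^{-1}\beta$, giving the right-hand inequality. For the lower bound, I would quantify the gap: $(A^\T\Gamma^{-1}A + \C^{-1})^{-1} \succeq \frac{\xi}{1+\xi}(A^\T\Gamma^{-1}A)^{-1}$. To see this, note $\lambda_{\min}(A^\T\Gamma^{-1}A) \geq \lambda_K(A\C A^\T)/(\|\Gamma\|_{\op}\lambda_{\max}(\C)) \cdot \lambda_{\max}(\C)$... more carefully, one uses $\C^{-1} \preceq \lambda_{\min}(\C)^{-1}\bI_K$ and $A^\T\Gamma^{-1}A \succeq \|\Gamma\|_{\op}^{-1}A^\T A$, together with $\lambda_K(A\C A^\T) \leq \lambda_{\max}(\C)\lambda_K(AA^\T) = \lambda_{\max}(\C)\lambda_{\min}(A^\T A)$ and $\lambda_K(A\C A^\T)\ge \lambda_{\min}(\C)\lambda_{\min}(A^\T A)$, to relate $\lambda_{\min}(A^\T\Gamma^{-1}A)$ to $\xi$ and $\C^{-1}$. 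Concretely, with $\xi = \lambda_K(A\C A^\T)/\|\Gamma\|_{\op}$, one shows $\C^{-1} \preceq \xi^{-1}\,A^\T\Gamma^{-1}A$ is not quite right in general, so the cleanest route is: for any conformable $v$, $v^\T\C^{-1}v \leq \|\C^{-1}\|_{\op}\|v\|^2$ and combine with $v^\T A^\T\Gamma^{-1}A v \geq \|\Gamma\|_{\op}^{-1}\lambda_{\min}(A^\T A)\|v\|^2$; then the Kantorovich-type bound $(S+T)^{-1}\succeq \frac{1}{1+\|T^{1/2}S^{-1}T^{1/2}\|_{\op}}S^{-1}$ with $S = A^\T\Gamma^{-1}A$, $T=\C^{-1}$ gives the $\xi/(1+\xi)$ factor once $\|T^{1/2}S^{-1}T^{1/2}\|_{\op} \le 1/\xi$ is verified from the SNR definition. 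The main obstacle I anticipate is getting the algebra in the simplification step clean enough that the two quadratic forms genuinely telescope to $\beta^\T(A^\T\Gamma^{-1}A + \C^{-1})^{-1}\beta$, and matching the exact constant $\xi/(1+\xi)$ with the given definition of $\xi$ rather than a slightly different SNR; careful bookkeeping with $\lambda_K(A\C A^\T)$ versus $\lambda_{\min}(A^\T A)\lambda_{\min}(\C)$ will be needed there.
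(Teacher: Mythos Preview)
Your approach is correct and essentially the same as the paper's: both reduce the expectation to the exact value $\beta^\top(A^\top\Gamma^{-1}A + \C^{-1})^{-1}\beta$ (the paper invokes a Woodbury computation from prior work rather than the explicit $Z_*/W_*$ decomposition you sketch) and then sandwich this quantity between multiples of $\beta^\top(A^\top\Gamma^{-1}A)^{-1}\beta$. Your concern about matching the exact constant in the lower bound is unnecessary: with $S = A^\top\Gamma^{-1}A$ and $T = \C^{-1}$, note that $T^{1/2}S^{-1}T^{1/2} = (\C^{1/2}S\C^{1/2})^{-1}$ and
\[
\lambda_K\bigl(\C^{1/2}A^\top\Gamma^{-1}A\,\C^{1/2}\bigr)\ \ge\ \|\Gamma\|_{\op}^{-1}\,\lambda_K\bigl(\C^{1/2}A^\top A\,\C^{1/2}\bigr)\ =\ \|\Gamma\|_{\op}^{-1}\,\lambda_K\bigl(A\C A^\top\bigr)\ =\ \xi,
\]
so $\|T^{1/2}S^{-1}T^{1/2}\|_{\op}\le 1/\xi$ directly, with no need to decouple $A^\top A$ from $\C$; this is exactly the paper's argument, phrased via $H := \C^{1/2}A^\top\Gamma^{-1}A\,\C^{1/2}$.
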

The inequalities above become asymptotically tight when the signal retained in $K$ dimensions by $X$ dominates the ambient noise, that is, when $\xi \rightarrow \infty$ as $p\to \i$. In general, as soon as $\xi > c$, for some $c > 0$ and $\Sigma_W$ is well conditioned such that $\kappa(\sw) = \lambda_1(\sw)/\lambda_p(\sw) < C$, we further obtain,   using (\ref{decomp_full}), for any $\wh B$, that 
\begin{equation}\label{risk_lb}
    \R(\wh B) - \sigma^2  \ge \EE_\theta \left[
        (Z_*^\T\beta - X_*^\T \alpha^*)^2\right]  \gtrsim \og \beta^\top(A^\top A)^{-1}\beta.
\end{equation}
Therefore a risk analysis of linear predictors under factor regression models, which consists in upper bounding  $\R(\wh B) - \sigma^2$, will necessarily   include terms  larger than  $\og \beta^\top(A^\top A)^{-1}\beta$  in the risk bounds, irrespective of the construction of the linear predictor.
 %   To further illustrate the importance of the signal-to-noise ratio 	$ \xi $, % for predicting $Y$. For easy of presentation, 
 If, in addition, $A\sz A^\T  $ is well-conditioned with $\lambda_1(A\sz A^\T) / \lambda_K(A\sz A^\T)\le C$,
 %suppose \[
%	    \lambda_K(A\sz A^\T) \asymp \lambda_1(A\sz A^\T), \quad \lambda_p(\sw) \asymp \lambda_1(\sw).
%	\]
then
	\begin{align*}
	      \beta^\top ( A^\top \sw^{-1} A )^{-1} \beta
	    \asymp
	    	   \|\sw\|_{\op} \beta^\T\sz^{1/2} \left(\sz^{1/2}A^\T A\sz^{1/2}\right)^{-1}\sz^{1/2}\beta
	    \asymp {\beta^\T \sz \beta \over \xi}
	\end{align*}
and	Lemma \ref{lem_pop_X} in turn implies
	\[ {\beta^\T \sz \beta \over 1+ \xi} \lesssim \EE_\theta \left[
        (Z_*^\T\beta - X_*^\T \alpha^*)^2\right]   \lesssim {\beta^\T \sz \beta \over \xi}.
	\]
This demonstrates that the signal-to-noise ratio $\xi$ must necessarily dominate $\beta^\T \sz \beta $ for
 the excess risk $\R(\wh B)-\sigma^2$ to vanish as $p\to\infty$.\\

\subsection{Upper bound of the risk $\R(\wh B)$}

To motivate our main result, we first introduce some key quantities 
that appear in the risk bound derivation for any generic  $\wh B$ leading to the predictors of type  (\ref{def_pred_B_intro}).

      The prediction risk bound depends on  $W$ in Definition \ref{frm}, specifically on the  noise level of $n^{-1}\|\W^\T \W\|_{\op}$. To quantify this noise level, we use the following deviation bound from Lemma \ref{lem_op_norm} in Appendix \ref{sec_proof_aux}. For any $\theta$ such that $(X,Y)\sim \text{sG-FRM}(\theta)$, one has
	\begin{equation}\label{bd_W_op}
	\PP_{\theta} \left\{{1\over n}\| \W^\T\W \|_{{\rm op}} \le \delta_W\right\} \ge  1 -  e^{-n}
	\end{equation}
	where $\errw$ is defined as 
	\begin{equation}\label{def_ErrW}
	%%	\textrm{Err}_W = 
	\delta_W := \errw(\theta)  =
	c \left[\|\Gamma(\theta)\|_{{\rm op}} + {\tr(\Gamma(\theta))\over n}\right],
	\end{equation}
	with $c = c(\gamma_w)$ being some positive constant. The quantity $\errw$
	will play a role in the risk bound and it could take any non-negative value in general. When $\lambda_1(\Gamma)\le C$ for some constant $C>0$, one has 
	$
	    \errw \lesssim 1 + {p/n}.
	$
	When $\lambda_p(\Gamma) \ge c$ for some constant $c>0$, we have $\errw \gtrsim 1 + p/n$. In particular, if $c\le \lambda_p(\Gamma) \le \lambda_1(\Gamma) \le C$, we have $\errw \asymp 1 + p/n$. This holds for instance when $\sw$ is diagonal with entries bounded away from 0 and $\i$, independent of $n$.

% 	To introduce another two quantities,
	We write the projection onto the column space of $\wh B$ as \[
    P_{\wh B} = \wh B [ \wh B^\T \wh B]^{+} \wh B^\T =\wh B \wh B^+,\]
    its complement as $P_{\wh B}^{\perp} = \bI_p - P_{\wh B}$ and $\wh r = \rank(\X P_{\wh B})$.
    Since $\wh B [ \X \wh B]^{+} = P_{\wh B} [ \X P_{\wh B}]^{+}$, as proved in Lemma \ref{lem_invariant} in Appendix \ref{sec:main proofs}, we find that $\wh Y_{\wh B}^* = X_*^\T \wh B [ \X\wh B]^{+} \Y=  \wh Y_{P_{\wh B}}^*$ making clear that 
    the component of the data matrix orthogonal to the range of $\wh B$, $\X P^\bot_{\wh B}$, is not used for prediction.  It is  natural therefore that the size of this component, as measured by its largest singular value, $ \sigma_1^2(\X P_{\wh B}^{\perp})$,  will affect the risk bound, and needs to be contrasted with  the size 
     of the retained signal, $\X P_{\wh B}$,  as measured by its  smallest non-zero singular value $ \sigma_{\wh r}^2(\X P_{\wh B} )$. These two quantities appear in the risk bound below.
    % {\mike REMOVE? The singular value $\sigma_1^2(\X P_{\wh B}^{\perp})$ can  be interpreted as the bias introduced by projecting $\X$, a rank $n$ matrix, onto the the span of $\wh B$, a $q$-dimensional space, whereas $ \sigma_{\wh r}^2(\X P_{\wh B} )$ can be viewed as the retained signal strength of $\X$ after projection onto the column space of $\wh B$.}
    
    % We will provide, in Section \ref{apps},  explicit values of the sequences $\psi_n$ and $\eta_n$, corresponding to particular choices for $\wh B$,  under the distributional Assumption \ref{ass_sG} stated below. However, we note that Condition \ref{cond} is general, and can be coupled with any other distributional assumptions that allow the derivation of these sequences, which will ultimately drive the prediction risk rates, as shown in Theorem \ref{thm_pred}. \\

	We now state our main theorem; its proof is deferred to 	 Appendix \ref{sec_proof_thm_pred}. Recall that $\R(\wh B)$ is the risk defined in (\ref{def_risk}). Write $a\wedge b = \min\{a, b\}$.

\begin{thm}\label{thm_pred}
    	Let $\wh B=\wh B(\X)\in \RR^{p\times q}$ for some $q\ge 1$, and set
		\begin{equation}\label{def_eta_psi}
		   \wh r := \rank\left(\X P_{\wh B}\right),\qquad  \wh \eta := {1\over n}\sigma_{\wh r}^2\left(\X P_{\wh B} \right),\qquad 
		    \wh \psi := {1\over n}\sigma_1^2\left(\X P_{\wh B}^{\perp}\right).
		\end{equation}
	For any $\theta = (K,A,\beta,\C,\Gamma, \sigma^2)$ with $K\le Cn/\log n$ for some positive constant $C=C(\z)$ such that $(X,Y)\sim \textrm{sG-FRM}(\theta)$, there exists some absolute constant $c>0$ such that
		\begin{align}\label{bd_R_B_star}
		\PP_{\theta} \bigg\{  \R(\wh B) -  \sigma^2
		&~\lesssim ~ \left[{\|\Gamma\|_{{\rm op}}\over \wh \eta}\wh r+ \left(1+ {\errw \over \wh \eta}\right)(K\wedge \wh r+\log n) \right] {\sigma^2\over n}\\ \nonumber
		&\quad + \left[ \left(1 +  {\|\Gamma\|_{\op}\over \wh \eta }\right) \errw+ \left(1+ {\errw \over \wh \eta}\right)\wh \psi \right]\beta^\T(A^\T A)^{-1}\beta\bigg\} \ge 1-c/n.
		\end{align}
	Here the symbol $\lesssim$ means the inequality holds up to a multiplicative constant possibly depending on the sub-Gaussian constants $\gamma_\eps$, $\gamma_{z}$ and $\gamma_w$.
\end{thm}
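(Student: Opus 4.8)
The plan is to start from the exact decomposition (\ref{decomp_full}), which gives $\R(\wh B)-\sigma^2 = \EE_\theta[(Z_*^\top\beta - \wh Y_{\wh B}^*)^2]$, together with Lemma \ref{lem_invariant}, by which $\wh Y_{\wh B}^* = X_*^\top\wh\alpha$ for $\wh\alpha := (\X P_{\wh B})^+\Y$, a vector lying in $\mathrm{range}(P_{\wh B})$. Abbreviate $\bm{M} := \X P_{\wh B}$. Substituting $X_* = AZ_* + W_*$ and using that $Z_*,W_*,\eps_*$ are centered and independent, taking the expectation over the test point $(X_*,Y_*)$ (which is all that (\ref{def_risk}) averages over) yields
\[
\R(\wh B)-\sigma^2 \;=\; (\beta - A^\top\wh\alpha)^\top\C(\beta - A^\top\wh\alpha)\;+\;\wh\alpha^\top\Gamma\wh\alpha.
\]
I would then split $\wh\alpha = \wh\alpha_b + \wh\alpha_v$ into a ``bias'' part $\wh\alpha_b := \bm{M}^+\Z\beta$ and a ``variance'' part $\wh\alpha_v := \bm{M}^+\Eps$, where $\Z$ and $\Eps$ collect the latent training vectors and the training response noise; by $(a+b)^2\le 2a^2+2b^2$ it suffices to bound $\mathrm{Bias} := \|\C^{1/2}(\beta - A^\top\wh\alpha_b)\|^2 + \|\Gamma^{1/2}\wh\alpha_b\|^2$ by the second bracket of (\ref{bd_R_B_star}) and $\mathrm{Var} := \|\C^{1/2}A^\top\wh\alpha_v\|^2 + \|\Gamma^{1/2}\wh\alpha_v\|^2$ by the first.

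The second step records the algebraic identities coming from $\X = \Z A^\top + \W$ and from $\wh\alpha_b,\wh\alpha_v\in\mathrm{range}(P_{\wh B})$. Writing $P_{\bm{M}} = \bm{M}\bm{M}^+$, $P_{\bm{M}}^\perp = \bI_n - P_{\bm{M}}$, and $P_\Z$ for the projection onto $\mathrm{col}(\Z)$: (i) $\Z\beta - \Z A^\top\wh\alpha_b = P_{\bm{M}}^\perp\Z\beta + \W\wh\alpha_b$; (ii) $\Z A^\top\wh\alpha_v = P_{\bm{M}}\Eps - \W\wh\alpha_v$, and since its left-hand side lies in the $K$-dimensional space $\mathrm{col}(\Z)$, also $\Z A^\top\wh\alpha_v = P_\Z P_{\bm{M}}\Eps - P_\Z\W\wh\alpha_v$; (iii) with $v_0 := A(A^\top A)^{-1}\beta$, so that $A^\top v_0 = \beta$ and $\|v_0\|^2 = \beta^\top(A^\top A)^{-1}\beta$, one has $\Z\beta = \bm{M} v_0 + \X P_{\wh B}^\perp v_0 - \W v_0$, whence $P_{\bm{M}}^\perp\Z\beta = P_{\bm{M}}^\perp(\X P_{\wh B}^\perp v_0 - \W v_0)$ and $\wh\alpha_b = (\bm{M}^+\bm{M})v_0 + \bm{M}^+(\X P_{\wh B}^\perp v_0 - \W v_0)$. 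Identity (iii) is what turns the projection bias $P_{\bm{M}}^\perp\Z\beta$ into a quantity governed by $\|\X P_{\wh B}^\perp\|_{\op} = \sqrt{n\wh\psi}$ and by $\W$, and passing to $\mathrm{col}(\Z)$ in (ii) is what will produce $K\wedge\wh r$ rather than $\wh r$ in the variance term.

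The third step gathers the probabilistic facts, each valid with probability $1-c/n$ and intersected at the end: the deviation $\|\W^\top\W\|_{\op}\le n\delta_W$ of (\ref{bd_W_op}); a two-sided bound $\tfrac12\C \preceq \tfrac1n\Z^\top\Z \preceq \tfrac32\C$ (this is where $K\le Cn/\log n$ enters), used only through $\|\C^{1/2}u\|^2\asymp \tfrac1n\|\Z u\|^2$ for $u\in\RR^K$; a Bernstein bound $\|\W v_0\|^2\lesssim n\, v_0^\top\Gamma v_0\le n\|\Gamma\|_{\op}\|v_0\|^2$ for the \emph{fixed} vector $v_0$; and, conditionally on $\X$ (so that $\Eps$ is still sub-Gaussian and independent), Hanson--Wright bounds $\Eps^\top M\Eps\lesssim \sigma^2(\tr M + \|M\|_{\op}\log n)$ for $M\in\{P_{\bm{M}},\; P_{\bm{M}}P_\Z P_{\bm{M}},\; \bm{M}^{+\top}\W^\top P_\Z\W\bm{M}^+\}$, where $\tr P_{\bm{M}} = \wh r$, $\tr(P_{\bm{M}}P_\Z P_{\bm{M}})\le K\wedge\wh r$, both have operator norm $1$, and the last matrix has rank at most $K\wedge\wh r$ and, on the event $\|\W^\top\W\|_{\op}\le n\delta_W$, operator norm at most $\sigma_{\wh r}^{-2}(\bm{M})\|\W^\top\W\|_{\op}\le\delta_W/\wh\eta$.

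Finally I would assemble the bound on the intersection of these events. For the variance, (ii) and $\|\cdot\|_\C^2\asymp\tfrac1n\|\Z\,\cdot\,\|^2$ give $\|\C^{1/2}A^\top\wh\alpha_v\|^2\lesssim\tfrac1n(\|P_\Z P_{\bm{M}}\Eps\|^2 + \|P_\Z\W\wh\alpha_v\|^2)$, where the first term is $\Eps^\top(P_{\bm{M}}P_\Z P_{\bm{M}})\Eps\lesssim\sigma^2(K\wedge\wh r+\log n)$ and the second is $\Eps^\top(\bm{M}^{+\top}\W^\top P_\Z\W\bm{M}^+)\Eps\lesssim\sigma^2(\delta_W/\wh\eta)(K\wedge\wh r+\log n)$, while $\|\Gamma^{1/2}\wh\alpha_v\|^2\le\|\Gamma\|_{\op}\sigma_{\wh r}^{-2}(\bm{M})\Eps^\top P_{\bm{M}}\Eps\lesssim(\sigma^2/n)(\|\Gamma\|_{\op}/\wh\eta)(\wh r+\log n)$; using $\|\Gamma\|_{\op}\le\delta_W$ to absorb $(\|\Gamma\|_{\op}/\wh\eta)\log n$ into $(1+\delta_W/\wh\eta)\log n$, these add up to the first bracket of (\ref{bd_R_B_star}) times $\sigma^2/n$. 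For the bias, (iii) gives $\tfrac1n\|P_{\bm{M}}^\perp\Z\beta\|^2\lesssim(\wh\psi+\delta_W)\|v_0\|^2$, and expanding $\wh\alpha_b$ through (iii), bounding $\|\W\bm{M}^+\|_{\op}\le\sqrt{\delta_W/\wh\eta}$ and $\|\W(\bm{M}^+\bm{M})v_0\|\le\|\W v_0\|+\|\W\|_{\op}\|v_0\|$ and invoking the sharp deviation of $\|\W v_0\|$, one obtains $\tfrac1n\|\W\wh\alpha_b\|^2\lesssim(\delta_W+\delta_W\wh\psi/\wh\eta+\delta_W\|\Gamma\|_{\op}/\wh\eta)\|v_0\|^2$ and $\|\Gamma^{1/2}\wh\alpha_b\|^2\le\|\Gamma\|_{\op}\|\wh\alpha_b\|^2\lesssim(\|\Gamma\|_{\op}+\|\Gamma\|_{\op}\wh\psi/\wh\eta+\|\Gamma\|_{\op}^2/\wh\eta)\|v_0\|^2$; since $\|v_0\|^2=\beta^\top(A^\top A)^{-1}\beta$ and $\|\Gamma\|_{\op}\le\delta_W$, these combine to the second bracket of (\ref{bd_R_B_star}), and summing $\mathrm{Bias}$ and $\mathrm{Var}$ finishes the proof. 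The step I expect to require the most care is precisely this assembly, for two reasons: the $\C$-norm of the variance part must be analyzed inside the $K$-dimensional subspace $\mathrm{col}(\Z)$ via (ii), since handling $\bm{M}^+\Eps$ only through $\|\bm{M}^+\|_{\op}\|P_{\bm{M}}\Eps\|$ would produce a spurious factor $\wh r$ in place of $K\wedge\wh r$; and in the bias one must avoid the too-large term $(\delta_W^2/\wh\eta)\,\beta^\top(A^\top A)^{-1}\beta$, which is exactly what the crude bound $\|\W v_0\|\le\|\W\|_{\op}\|v_0\|\le\sqrt{n\delta_W}\|v_0\|$ would give, whereas exploiting that $v_0$ is nonrandom and using $\|\W v_0\|\lesssim\sqrt{n\,v_0^\top\Gamma v_0}$ turns the offending cross term into the admissible $\delta_W\|\Gamma\|_{\op}/\wh\eta$.
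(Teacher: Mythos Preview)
Your proposal is correct and follows essentially the same route as the paper's proof: the same risk decomposition into $\|\C^{1/2}(\beta - A^\top\wh\alpha)\|^2 + \wh\alpha^\top\Gamma\wh\alpha$, the same key identity $\Z A^\top = \X - \W$ combined with $v_0 = A^{+\top}\beta$ to pass from $\Z\beta$ to $\X P_{\wh B}^\perp v_0 - \W v_0$, and the same probabilistic ingredients (a lower bound on $\tfrac1n\Z^\top\Z$, the deviation of $\|\W\|_{\op}$, the sharp bound on $\|\W v_0\|$, and Hanson--Wright for quadratic forms in $\Eps$). The only slip is that the matrices $P_{\bm{M}}P_\Z P_{\bm{M}}$ and $\bm{M}^{+\top}\W^\top P_\Z\W\bm{M}^+$ are not $\sigma(\X)$-measurable (they involve $\Z$ and $\W$ separately), so you should condition on $(\Z,\W)$ rather than on $\X$; since $\Eps\perp(\Z,\W)$, this changes nothing else in your argument.
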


% {\seth
% \begin{remark}
% Recall that the model FRM$(\theta)$ is not uniquely identified by the parameter $\theta\in \Theta$. Thus, if a given pair $(X,Y)$ has a distribution $\PP$ that is described by both $FRM(\theta_1)$ and $FRM(\theta_2)$, for $\theta_1\neq \theta_2$, then the above theorem will yield two distinct bounds for the pair $(X,Y)$.
% \end{remark}
% }
    	Since we aim to provide a unified analysis of the risk for a general $\wh B$, the bound (\ref{bd_R_B_star}) itself depends on the random quantities $\wh r$, $\wh \eta$ and $\wh \psi$. To make it informative, one needs to further control these random quantities for specific choices of $\wh B$. The main usage of Theorem \ref{thm_pred} is thus to reduce the task of bounding $\R(\wh B)$ to the relatively easier one of controlling $\wh r$, $\wh \eta$ and $\wh \psi$.  We will demonstrate this for several choices of $\wh B$ in the following sections.
	
% \begin{remark}
% We comment on the interpretation of $\wh r$, $\wh \eta$ and $\wh \psi$.
% \end{remark}
% 	We remind the readers of the dependence on $\theta$  for terms $\sigma^2$, $\Gamma$, $\errw$, $\beta$ and $A$ in (\ref{bd_R_B_star}).  	We note that various conditions on the parameters $\theta$ can be imposed to make the model (\ref{main_model}) identifiable. However, we opt to state our main risk bound for any $\theta$ satisfying the minimal requirements of the theorem, as stated. This has the following benefits:
% 	\begin{enumerate}
% 	    \item The bound can be applied as is to any setting in which further identifiability conditions are imposed, simply by applying the theorem for the unique $\theta^*\in \Theta$ that the identifiability conditions single out. An example of this can be seen in Section [...], which focuses on Essential Regression model, a particular choice of factor model that is identifiable. Theorem [main] can be directly applied to obtain a risk bound for this setting.
% 	    \item As will be seen in Section [..] on the PCR method, Theorem 2 can also be applied without any further identifiability assumptions, to obtain an oracle-type inequality that adapts to the model $\theta\in \Theta$ that leads to the minimal risk... {\seth further description to add}.
% 	\end{enumerate}

% 	{\mike Copy and paste till here. }

	Theorem \ref{thm_pred} holds for any estimator $\wh B \in \RR^{p\times q}$ that is constructed from $\X$ with any $q\ge 1$. %It relies on the random event defined in (\ref{def_event_rem}). 
	We now explain the various terms in the bound (\ref{bd_R_B_star}). Recall that $\wh Y^*_{\wh B} = X_*^\T \wh B(\X \wh B)^+\Y$ and $\Y = \Z\beta + \Eps$. To aid intuition, by adding and subtracting terms, we have
	\begin{align}\nonumber
	   \wh Y^*_{\wh B} - Z_*^\T \beta%&= X_*^\T\alpha^* - Z_*^\T \beta + X_*^\T\left[ \wh B(\X \wh B)^+\Y - \alpha^*\right]\\
	   &= X_*^\T\wh B(\X \wh B)^+\Eps + X_*^\T\alpha^* - Z_*^\T \beta+ X_*^\T\left[ \wh B(\X \wh B)^+\Z\beta - \alpha^*\right]\\\nonumber
	   &=  X_*^\T\wh B(\X \wh B)^+\Eps + \left(X_*^\T\alpha^* - Z_*^\T \beta\right) + X_*^\T\wh B(\X \wh B)^+(\Z\beta - \X \alpha^*)\\\label{decomp_error}
	   &\quad +X_*^\T\left[ \wh B(\X \wh B)^+\X -\bI_p\right]\alpha^*.
	\end{align}
	We discuss the four terms above one by one. 
	\begin{itemize}
	    \item The first term leads to the following variance term in (\ref{bd_R_B_star}):
	\[
	     \left[{\|\Gamma\|_{{\rm op}}\over \wh\eta}\ \wh r+ \left(1+ {\errw \over \wh\eta}\right)(K\wedge \wh r+\log n) \right] {\sigma^2\over n}.
	\]
	We see that the random variable $\wh \eta$ quantifies the retained signal in $\wh B(\X \wh B)^+$
	by noting that $\|\wh B(\X \wh B)^+\|_{\op}^2 = \|P_{\wh B}(\X P_{\wh B})^+\|_{\op}^2 \le (n\wh\eta)^{-1}$. The two factors $\|\Gamma\|_{\op}/\wh \eta$ and $(1+\errw / \wh\eta)$ come from bounding the second moments of $W_*$ and $AZ_*$ from $X_*=AZ_*+W_*$, respectively, relative to the retained signal $\wh\eta$. The dimension $\wh r$ reflects the complexity of $\X P_{\wh B}$ and the integer $K$ is the intrinsic dimension of the latent factor, thus only appearing in the term containing $(1+\errw/\wh\eta)$.
	
	\item The second and third terms in (\ref{decomp_error}) lead to the following term in (\ref{bd_R_B_star}), which can be interpreted as arising from the fact that $Z_*$ and $\Z$ are not observed:
	\[
	    \left(1 +  {\|\Gamma\|_{\op}\over \wh\eta }\right) {\errw  \over \|\Gamma\|_{\op}}\cdot  \|\Gamma\|_{\op} \beta^\T(A^\T A)^{-1}\beta.
	\]
	With slight abuse of terminology, we refer to this as a bias term. The factor $\|\Gamma\|_{\op} \beta^\T(A^\T A)^{-1}\beta$ is irreducible, as  argued in (\ref{risk_lb}), the term $\|\Gamma\|_{\op}/ \wh\eta$ has been explained in the first term, and the inflation factor $\errw /\|\Gamma\|_{\op}$ is due to the inflated noise level of $n^{-1}\|\W^\T\W\|_{\op}$ compared to $\|\Gamma\|_{\op}$. 
	
	\item The fourth term in (\ref{decomp_error}) quantifies the error of estimating the best linear predictor $\alpha^*$ under the factor regression  model. In this model, we note that   $\alpha^* = \Sigma^+ A\C \beta$ with $\Sigma := \Cov(X)$. Also noting that $\wh B(\X\wh B)^+\X$ is a projection matrix, the fourth term in (\ref{decomp_error}) represents the error of estimating the range space of $\Sigma^+A$, which is exactly zero if the range of $\wh B(\X\wh B)^+\X$ contains the range of $\Sigma^+ A$. In general, the bound in (\ref{bd_R_B_star}) corresponding to this term is 
	\[
	    \errw \beta^\T (A^\T A)^{-1}\beta + \left(1+ {\errw \over \wh \eta}\right)\wh \psi\cdot  \beta^\T(A^\T A)^{-1}\beta,
	\]
	where the first part is the error of estimating the range space of $P_{\wh B}\Sigma^+A$ while the second part is that of estimating the range space of $P_{\wh B}^{\perp}\Sigma^+A$, controlled by $\wh \psi$. 
	\end{itemize}
	
	\begin{remark}
	In light of the  above discussion, we make two important remarks. First, to maintain a fast rate of the risk bound in (\ref{bd_R_B_star}), we should retain enough signal in $\X P_{\wh B}$ relative to the noise $\errw$ such that $\wh \eta \gtrsim \errw$ with high probability. Second, if this is the case, the bound (\ref{bd_R_B_star}) simplifies to
	\begin{equation*}
			\R(\wh B) -\sigma^2
		 \lesssim \left[{\|\Gamma\|_{{\rm op}}\over \wh\eta}\wh r+ (K\wedge \wh r+\log n) \right] {\sigma^2\over n}+ \left(\errw+ \wh\psi \right)\beta^\T(A^\T A)^{-1}\beta. 
    \end{equation*} 
    As $\wh r = \rank(\X P_{\wh B})$ increases, meaning that the predictor can be interpreted as more complex, the variance term increases, while the term $\errw\beta^\T(A^\T A)^{-1}\beta$ is not affected.
   % , induced by not observing $Z_*$ and $\Z$, as well as   the error of estimating the range space of $P_{\wh B} \Sigma^+ A$, are  not affected.
    
    If $\wh \psi$ decreases as $\wh r$ increases (as seen with the PCR predictor studied in the next section), the term $\wh\psi\beta^\T(A^\T A)^{-1}\beta$, corresponding to the error of estimating the range space of $P_{\wh B}^{\perp} \Sigma^+ A$, gets smaller. 
    
    Therefore, the tradeoff of using a more complex predictor lies between the increasing variance and the decreasing error of estimating the range space of $P_{\wh B}^{\perp} \Sigma^+ A$, provided that enough signal is retained in $\X P_{\wh B}$. A more transparent tradeoff can be seen for the PCR predictor analyzed in the next section. More generally, for each of our examples, 
	we will see the mechanism by which $\wh r$, $\wh \eta$, and $\wh \psi$ are controlled. 
	\end{remark}

	\section{Analysis of Principal Component Regression under the factor regression model}\label{sec_pred_pcr}
	
	In this section we use the general result, Theorem \ref{thm_pred}, to derive risk bounds for the popular Principal Component Regression (PCR) method.
% 	In this section we use the main Theorem \ref{thm_pred} to derive risk bounds for the popular Principal Component Regression (PCR) method, Generalized Least Squares  %(Interpolating predictors)
% 	\citep{bunea2020interpolation}, and prediction in an instance of an identifiable factor regression model, provided  by  the \textit{Essential Regression} framework introduced in \cite{ER}. All proofs for this section are contained in Appendix \ref{proofs_apps}.
% 	We first consider prediction when the true value of $K$ in the factor model is known, and give the corresponding bound in Corollary \ref{cor_PCR_delta_w}. In Theorem \ref{thm_pred_U_Khat} and Lemma \ref{lem_K}, we provide similar bounds for the case when $K$ is unknown, and so the number $\wh K$ of principal components used is estimated from the data. {\red comment on novelty here.} We conclude the section by considering the case $\wh B = I_p$, which can be viewed as PCR with the number of principal components equal to $p$.
For any integer $1\le k\le \rank(\X)$, the PCR-predictor PCR-$k$ corresponds to taking $\wh B = \U_k$,   the $p\times k$ matrix with columns equal to the first $k$ right singular vectors of $\X$ corresponding to the non-increasing singular values $\sigma_1(\X)\ge \sigma_2(\X)\ge \cdots$. %%We will refer to this method by PCR-$k$. 
%We analyze the performance of PCR-$k$ corresponding to two different data dependent choices  of $k$, which we show yield similar risk bounds, but under different sets of assumptions. 
We start by giving  risk bounds for PCR-$k$ for any $k$ in the corollary below.  For simplicity, we write 
	\[
		\wh \lambda_k = %\lambda_k(\wh \Sigma) = 
		{1\over n}\sigma_k^2(\X)    
	\]
with the convention that $\wh \lambda_0 = \i$ and $\wh \lambda_k=0$ for all $k>\rank(\X)$.
%,{ \seth almost surely [delete?]}.
All the proofs of this section can be found in Appendix \ref{app_proof_PCR}. 

	\begin{cor}\label{cor_PCR_k}
		For any $\theta=(K, A,\beta,\C,\Gamma,\sigma^2)$ with  $K \le C n /\log n$ and some positive constant $C = C(\gamma_z)$ such that $(X,Y)$ follows sG-FRM$(\theta)$, there exists some absolute constant $c>0$ such that, for any   $k$ (possibly random), %%%\in \{0,1, \ldots, \rank(\X)\}$,
		\begin{align}\label{bd_R_U_k}
		\PP_{\theta}\left\{\R(\U_{k})-\sigma^2
		 \lesssim    \wh B(k)\right\} \ge 1-cn^{-1}
		\end{align}
		%%{\red Can we take a random $k$ here? Is it a uniform in $k$ result?? I don't think so...}\\
		where  $ \wh B(k) = \wh  B_1(k) + \wh B_2(k)$ and  
		\begin{align}\label{def_B_k_1}
			\wh  B_1(k) &:= \left[{\|\Gamma\|_{{\rm op}}\over \wh\lambda_k}k+ \left(1+ {\errw \over \wh\lambda_k}\right)(K\wedge k+\log n) \right] {\sigma^2\over n}\\
			\label{def_B_k_2}
		\wh 	 B_2(k) &:= \left( {\|\Gamma\|_{\op}\over \wh\lambda_k } \errw+ \errw+\wh\lambda_{k+1} \right)\beta^\T(A^\T A)^{-1}\beta.
		\end{align}
	\end{cor}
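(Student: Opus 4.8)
The plan is to specialize Theorem~\ref{thm_pred} to the choice $\wh B = \U_k$ and then identify the random quantities $\wh r$, $\wh \eta$, $\wh \psi$ in terms of the sample singular values of $\X$. First I would note that $P_{\U_k}$ is the projection onto the span of the top $k$ right singular vectors of $\X$, so that $\X P_{\U_k}$ has exactly the $k$ largest singular values of $\X$ (on the event that $k \le \rank(\X)$), and $\X P_{\U_k}^\perp = \X - \X P_{\U_k}$ retains precisely the singular values $\sigma_{k+1}(\X) \ge \sigma_{k+2}(\X) \ge \cdots$. This is just the fact that the SVD of $\X$ block-diagonalizes with respect to this orthogonal decomposition of $\RR^p$. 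Consequently
\[
  \wh r = \rank(\X P_{\U_k}) = \min\{k,\rank(\X)\} \le k, \qquad
  \wh \eta = \tfrac1n \sigma_{\wh r}^2(\X P_{\U_k}) = \wh\lambda_{\wh r} \ge \wh\lambda_k, \qquad
  \wh \psi = \tfrac1n \sigma_1^2(\X P_{\U_k}^\perp) = \wh\lambda_{k+1}.
\]

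Next I would substitute these identifications into the bound \eqref{bd_R_B_star}. Since $\wh\eta = \wh\lambda_{\wh r}\ge \wh\lambda_k$ and $\wh r \le k$, every occurrence of $\wh r / \wh \eta$ or $1/\wh\eta$ in \eqref{bd_R_B_star} is bounded by the corresponding expression with $\wh r$ replaced by $k$ and $\wh\eta$ replaced by $\wh\lambda_k$: in particular $\|\Gamma\|_{\op}\wh r/\wh\eta \le \|\Gamma\|_{\op} k / \wh\lambda_k$, $(1+\errw/\wh\eta) \le (1+\errw/\wh\lambda_k)$, and $(K\wedge \wh r + \log n) \le (K\wedge k + \log n)$, which yields exactly $\wh B_1(k)$ for the variance term. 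For the bias term, $(1 + \|\Gamma\|_{\op}/\wh\eta)\errw \le \errw + \|\Gamma\|_{\op}\errw/\wh\lambda_k$ and $(1 + \errw/\wh\eta)\wh\psi \le \wh\lambda_{k+1}(1 + \errw/\wh\lambda_k)$; one then observes that $\wh\lambda_{k+1}\errw/\wh\lambda_k \le \errw$ since $\wh\lambda_{k+1}\le\wh\lambda_k$, so this contribution is absorbed into the standalone $\errw\,\beta^\T(A^\T A)^{-1}\beta$ term, leaving $\big(\|\Gamma\|_{\op}\errw/\wh\lambda_k + \errw + \wh\lambda_{k+1}\big)\beta^\T(A^\T A)^{-1}\beta = \wh B_2(k)$, up to an absolute constant.

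A small technical point to handle carefully is the validity of these singular-value identifications when $k$ is random and possibly exceeds $\rank(\X)$: with the stated conventions $\wh\lambda_k = 0$ for $k > \rank(\X)$ and $\wh\lambda_0 = \infty$, one should check that the bound $\wh B(k)$ remains meaningful (e.g.\ $\wh B_1(k) = \infty$ when $\wh\lambda_k = 0$ and $\|\Gamma\|_{\op} > 0$, so the inequality is vacuously true), and that $\wh r = \rank(\X P_{\U_k})$ is correctly read off in the edge case. Since Theorem~\ref{thm_pred} already permits $\wh B$ to depend on $\X$ with random column dimension $q$, applying it with $q = k$ is legitimate provided $K \le Cn/\log n$, which is assumed. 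The main (and essentially only) obstacle is bookkeeping: making sure each monotone replacement $\wh\eta \mapsto \wh\lambda_k$, $\wh r \mapsto k$ goes in the direction that preserves the inequality, and verifying the absorption $\wh\lambda_{k+1}\errw/\wh\lambda_k \le \errw$ so that the bias term collapses to the clean three-term form in \eqref{def_B_k_2}; there is no new probabilistic content beyond the high-probability event already supplied by Theorem~\ref{thm_pred}, so the failure probability $cn^{-1}$ is inherited directly.
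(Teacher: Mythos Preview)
Your proposal is correct and follows essentially the same route as the paper: apply Theorem~\ref{thm_pred} with $\wh B = \U_k$, read off $\wh r = k$, $\wh\eta = \wh\lambda_k$, $\wh\psi = \wh\lambda_{k+1}$ from the SVD of $\X$, and substitute. The paper's proof is terser---it simply asserts these identifications hold almost surely and says the corollary ``follows immediately''---whereas you spell out the edge case $k > \rank(\X)$ and make explicit the absorption $\wh\lambda_{k+1}\errw/\wh\lambda_k \le \errw$ needed to collapse the bias term to the three-term form $\wh B_2(k)$; that observation is genuinely required but the paper leaves it implicit.
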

  Corollary \ref{cor_PCR_k} follows immediately from the identities $\sigma_{k}^2(\X P_{\U_k}) = \sigma_{k}^2(\X)$ and $\sigma_1^2(\X P_{\U_k}^{\perp}) = \sigma_{k+1}^2(\X)$, and an application of  Theorem \ref{thm_pred}  with  
   \[ 
	    \wh r = k, \qquad  \wh \eta  = \wh \lambda_k,\qquad  \wh \psi = \wh \lambda_{k+1}\qquad \text{almost surely.}
	\] 
	 %The bound (\ref{bd_R_U_k}) then follows by invoking Theorem \ref{thm_pred}.
		The bound $\wh B(k)$ in (\ref{bd_R_U_k}) depends on $\wh \lambda_k$ and $\wh \lambda_{k+1}$, which may be further controlled by $\lambda_k(A\C A^\top) - \errw$ and $ \lambda_{k+1}(A\C A^\top) + \errw $, respectively,
		in order to make the bound more informative (see, for example, the proof of Remark \ref{rem_K} in Appendix \ref{app_proof_PCR}).
		Nevertheless,  (\ref{bd_R_U_k}) illustrates the effect of $k$ and hints at the choice $k=\wh s$ with
		%its ideal choice. Namely, a little reflection shows that $\wh B(k)$ achieves its minimum (in order) for some %$k\le \wh s$ with
	%To see this, note that a larger $k$ leads to smaller $\wh \lambda_k$ and $\wh \lambda_{k+1}$, hence the first variance term in (\ref{def_B_k_1})  increases. On the other hand, as $k$ increases, the second term in (\ref{def_B_k_2}) decreases if $\wh\lambda_k$ is greater (in order) than $\errw$, and otherwise increases. This implies 
%	\begin{equation}\label{min_Bk}
%	        \min_{k\ge 0} \wh B(k) \asymp \min_{0\le k\le \wh s} \wh B(k), \qquad \text{ almost surely}
%	\end{equation}
%	where 
	\begin{equation}\label{def_s_hat}
		\wh s = \max \left\{k\ge 0: \ %\in \{0,1,\ldots, \rank(\X)\}: 
		\wh \lambda_k \ge C_0 \errw\right\} .%\qquad \text{\seth almost surely [delete?].}
	\end{equation}
	Here $\errw$ is defined in (\ref{def_ErrW}) and $C_0$ is some positive  constant.  The quantity $\wh s$ corresponds to what is known as the \textit{elbow method}, and is a ubiquitous approach for selecting the number of top principal components of the data matrix $\X$. The quality of $\wh s$ as an estimator of the effective rank of $\Sigma=\Cov(X)$ has been analyzed in \cite{bunea2015}, but its role
	%{\color{gray} to choose $k$} 
	in PCR has received little attention. 
	By definition, $\wh \lambda_{\wh s+1} < C_0 \errw \le \wh \lambda_{\wh s}$, which implies
	\begin{equation*}
	 \wh B(\wh s)   \lesssim  (
    \wh	s + \log n)  {\sigma^2\over n}+  \errw  \beta^\T(A^\T A)^{-1}\beta, \qquad \text{ almost surely}.
    	\end{equation*}
    Furthermore,  Weyl's inequality implies $\wh\lambda_{K+1} \le \sigma_1^2(\W)/n$ and, in conjunction with (\ref{bd_W_op}),  and by choosing $C_0>1$, we obtain $\wh s\le K$ with high probability.
We summarize this discussion in the following result pertaining to prediction via the first $\wh s$ principal components selected  via the elbow method.
	 
	 \begin{cor}\label{cor_PCR_delta_w}
    For any $\theta=(K,A,\beta,\C,\Gamma,\sigma^2)$ with $K \le  C n/\log n$ such that $(X,Y)$ follows sG-FRM$(\theta)$,
    %for any $k\in \{0,1,\ldots,\wh s\}$ with 
    we have for $\wh s$ defined in (\ref{def_s_hat}) for any $C_0>1$,
    %%%and $\errw := \errw(\theta)$, %we have 
%    \begin{align}\label{bd_R_U_k_prime}
%	    \PP_\theta\left\{\wh s  \le K,~ 
%	    \R(\U_{k})-\sigma^2
%		\lesssim  B(k)\right\} \ge 1- c/n,
%	\end{align}
%	where the order of $B(k)$ in (\ref{def_B_k_1}) -- (\ref{def_B_k_2}) gets simplified as 
%	\begin{equation}\label{def_B_k_simp}
%	    B(k) \asymp \left(k+\log n\right) {\sigma^2\over n}+ \left[\errw+\wh\lambda_{k+1} \right]\beta^\T[A^\T A]^{-1}\beta.
%	\end{equation}
%	In particular, choosing $k = \wh s$ yields
	\begin{align}\label{bd_R_U_shat}
	    \PP_\theta\left\{%\wh s  \le K,~ 
	    \R(\U_{\wh s})-\sigma^2
		\lesssim  \left(K +\log n\right) {\sigma^2\over n}+ \errw \beta^\T(A^\T A)^{-1}\beta\right\} \ge 1- O(n^{-1}).
	\end{align}
 
	\end{cor}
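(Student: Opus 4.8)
The plan is to obtain Corollary~\ref{cor_PCR_delta_w} as an essentially immediate consequence of Corollary~\ref{cor_PCR_k}, specialized to the random index $k=\wh s$; the only real work is to simplify the data-dependent bound $\wh B(\wh s)$ using the defining property of $\wh s$, and to control $\wh s$ from above. First I would apply Corollary~\ref{cor_PCR_k} with $k=\wh s$ — legitimate because that corollary is stated for arbitrary, possibly random, $k$ — which yields, on an event of probability at least $1-cn^{-1}$, the inequality $\R(\U_{\wh s})-\sigma^2 \lesssim \wh B_1(\wh s)+\wh B_2(\wh s)$, with $\wh B_1,\wh B_2$ as in (\ref{def_B_k_1})--(\ref{def_B_k_2}).

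Second, I would simplify $\wh B_1(\wh s)$ and $\wh B_2(\wh s)$ deterministically, using only the two inequalities $\wh\lambda_{\wh s}\ge C_0\errw$ and $\wh\lambda_{\wh s+1}<C_0\errw$ built into the definition (\ref{def_s_hat}). Since $\errw = c(\og + \tr(\Gamma)/n)\ge c\,\og$, we get $\og/\wh\lambda_{\wh s}\le 1/(C_0 c)\lesssim 1$ and $\errw/\wh\lambda_{\wh s}\le 1/C_0\le 1$; substituting these into (\ref{def_B_k_1}), and $\wh\lambda_{\wh s+1}<C_0\errw$ together with $\og/\wh\lambda_{\wh s}\lesssim1$ into (\ref{def_B_k_2}), collapses the bound to $\wh B_1(\wh s)+\wh B_2(\wh s)\lesssim (\wh s+\log n)\,\sigma^2/n + \errw\,\beta^\T(A^\T A)^{-1}\beta$ almost surely.

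Third — the one step that is not pure constant-chasing — I would show $\wh s\le K$ on a high-probability event. Writing $\X = \Z A^\T + \W$ and noting $\rank(\Z A^\T)\le K$, Weyl's inequality gives $\sigma_{K+1}(\X)\le \sigma_1(\W)$, hence $\wh\lambda_{K+1}\le n^{-1}\|\W^\T\W\|_{\op}$; by the deviation bound (\ref{bd_W_op}) the right-hand side is at most $\errw$ with probability at least $1-e^{-n}$. On that event, since $C_0>1$ we have $\wh\lambda_{K+1}\le \errw < C_0\errw$, so $K+1$ violates the membership condition in (\ref{def_s_hat}), forcing $\wh s\le K$.

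Finally, I would intersect the two events, of total probability at least $1-cn^{-1}-e^{-n}=1-O(n^{-1})$, and replace $\wh s$ by $K$ in the variance term; this gives exactly (\ref{bd_R_U_shat}). The main obstacle, such as it is, is the Weyl-plus-operator-norm argument bounding $\wh s$ from above; everything else is bookkeeping downstream of Theorem~\ref{thm_pred} and Corollary~\ref{cor_PCR_k}.
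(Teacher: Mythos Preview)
Your proposal is correct and follows essentially the same route as the paper: apply Corollary~\ref{cor_PCR_k} at the random index $\wh s$, use the defining inequalities $\wh\lambda_{\wh s}\ge C_0\errw>\wh\lambda_{\wh s+1}$ to collapse $\wh B(\wh s)$, and bound $\wh s\le K$ via Weyl's inequality $\sigma_{K+1}(\X)\le\sigma_1(\W)$ combined with (\ref{bd_W_op}) and $C_0>1$. The paper's proof is identical in structure and in every key ingredient.
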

	
	\begin{remark}
	%[The risk bound of PCR-$K$]
	\label{rem_K}
	$\ $
%{\color{gray} Erase this sentence and items. 	We consider two alternative, deterministic,  choices for the elbow instead of $\wh s$.}
	\begin{enumerate}
	   
	   \item We refer to the method analyzed in Corollary \ref{cor_PCR_delta_w} as the {\it theoretical} elbow method, as it involves the theoretically optimal threshold level $\delta_W$. The next section analyzes the performance of a {\it data-adaptive} elbow method.  
	    \item For any $\theta$, we show in Appendix \ref{app_proof_PCR} that,  if $\lambda_K(A\C A^\T) \ge  C\errw$ for some sufficiently large constant $C>0$, then $\wh \lambda_{K} \ge C_0\errw$ holds for some $C_0 > 1$ with high probability. The event
	    $\{ \wh \lambda_{K} \ge C_0\errw\} $ 
	    implies  $\{\wh s \ge K\}$ which, in conjunction with the high probability event $\{\wh s \le K\}$, guarantees $\wh s = K$ with high probability.
	    Corollary \ref{cor_PCR_delta_w} thus covers the risk of PCR-$K$, that is, the risk of the PCR predictor corresponding to the true  $K$ of  this $\theta$.
	    %{\seth 
	   % the event
	   % $\{ \wh \lambda_{K} \ge C_0\errw\} $ %(with high probability) this sentence is a bit unclear]} 
	   % implies  $\{\wh s = K\}$. %, with high probability. 
	   %  Corollary \ref{cor_PCR_delta_w} thus covers the risk of PCR-$K$, that is, the PCR predictor with the true $K$ for this $\theta$. We show in Appendix \ref{app_proof_PCR} that $\wh \lambda_{K} \ge C_0\errw$ holds with high probability if $\lambda_K(A\C A^\T) \ge  C\errw$ for some sufficiently large constant $C>0$.
	   % A sufficient condition for $\lambda_K(A\C A^\T) \ge  C\errw$ is 
      %\begin{equation}\label{cond_signal_K}
      %  \lambda_K(A\C A^\T) \ge C'\|\Gamma\|_{\op}\left(1 + {p\over n}\right).
    %\end{equation}
%	\end{remark}
%\item {\color{gray}I propose to erase this item ! It's just hanging in there and stops the flow. }
%Alternatively, we may define 
%        	 \[    s := \max \left\{k: \  
%    		 \lambda_k(A \C A^\top)  \ge C_0 \errw\right\}\]
%    		 for any positive constant $C_0$.
%    		 This deterministic elbow choice clearly satisfies $s\le K$ as $\lambda_{K+1}(A\C A^\top)=0$.
%    		 Moreover, we can prove, using almost the same reasoning as above,
%    		 \begin{align*} 
%    	    \PP_\theta\left\{%\wh s  \le K,~ 
%    	    \R(\U_{ s})-\sigma^2
%    		\lesssim  \left(s +\log n\right) {\sigma^2\over n}+ \errw \beta^\T(A^\T A)^{-1}\beta\right\} \ge 1- O(n^{-1}).
%    	\end{align*}
	\end{enumerate}

    \end{remark}
    
    % \bigskip

	 \subsection{Selection  of the number of retained principal components via penalized least squares   }\label{pcr_newmethod}
%	 The first method is motivated by the property of PCR-$\wh s$ from (\ref{bd_R_U_shat}). From (\ref{bd_R_U_k_prime}), it is easy to see that 
%	 PCR-$\wh s$ has the fastest rate of $B(k)$ defined in (\ref{def_B_k_simp}) among $k\in \{0,1,\ldots, \wh s\}$ as long as 
%	\begin{equation}\label{cond_var_bias}
 %       (K+\log n){\sigma^2\over n} ~ \lesssim~  \errw\beta^\T (A^\T A)^{-1}\beta,
  %  \end{equation}
   % whence, in conjunction with (\ref{min_Bk}), we have that 
    %\begin{equation}\label{min_Bk_prime}
     %   B(\wh s) ~ \asymp ~ \min_{k\ge 0} B(k).
     %\end{equation}
%    This suggest the usage of PCR-$\wh s$ under (\ref{cond_var_bias}).  A sufficient condition for (\ref{cond_var_bias}) is .........
	A practical issue of PCR-$\wh s$ is that the selection of $\wh s$ according to  (\ref{def_s_hat}) relies on a theoretical order $\errw$ in  (\ref{def_ErrW}), which   depends on the unknown quantities $\|\Gamma\|_{\op}$ and $\tr(\sw)$. To overcome this difficulty, we 
	provide an alternative, data dependent procedure, which shares the risk bound derived for  PCR-$\wh s$.\\

	Our  procedure of selecting the number of retained principal components is adopted from    \cite{rank19},   originally proposed for selecting the rank of the coefficient of a  multivariate response regression model $\Y = \X B + \W$. 
	The factor model $\X = \Z A^\T + \W$ is a particular case %multivariate regression model $\Y = \X B + \W$ 
	with $\X = \bI_{n\times p}$ and $B = \Z A^\T$, and, following   \cite{rank19}, we define
	\begin{equation}\label{est_K}
	       \wt s := \argmin_{0\le k\le \bar K} \wh v_k^2,\quad \textrm{ with } \quad \wh v_k^2:= {\|\X - \X_{(k)}\|_F^2 \over np - \mu_n k}, \quad \textrm{ and  }\quad  \bar K:= \left\lfloor {\kappa \over 1+\kappa}{np \over \mu_n} \right\rfloor \wedge n \wedge p,
	\end{equation}
 for a given sequence $\mu_n > 0$. 
 Here $\kappa >1$ is some absolute constant introduced to avoid division by zero. We write $\X_{(k)}$ as the best rank $k$ approximation of $\X$. More specifically, let the SVD of $\X$ as $\X = \sum_{j}\sigma_j u_j v_j^\T$ with non-increasing $\sigma_j$ and we have $\X_{(k)} = \sum_{j=1}^k \sigma_j u_j v_j^\T$.

 The denominator  of the ratio defining $\wh v_k^2$ can be viewed as a penalty on the numerator, with tuning sequence $\mu_n$. 
 From \citet[Equation 2.7]{rank19}, the minimizer $\wt s$ conveniently has a closed form
 \[ \wt s= \sum_k 1\{ \wh \lambda_k\ge \mu_n \wh v_k^2\},\]
  counting the number of singular values of $\X$ above a {\em variable} threshold. This is in contrast to the  elbow method in (\ref{def_s_hat}),  which counts the number of singular values of $\X$ above the  {\em fixed} threshold $\mu=C_0\errw$, as
 \[ \wh s=  \sum_k 1\{ \wh \lambda_k \ge \mu \}.\]

 We note that when $\sw=0$, $\| \X-\X_{(k)}\|_F=\|\Z A^\T - (\Z A^\T )_{(k)}\|_F=0$ for any $k\ge K$. Hence there are multiple minima (zeroes in this case) in $\wh v_k^2$, and if we adopt the convention to choose the first index $k$ with 
 $\| \X-\X_{(k)}\|_F=0$, we find $\wt s=K$, almost surely. The risk of PCR-$K$ has already been discussed in Remark \ref{rem_K} above.\\
 
	The theoretical guarantees proved in  \cite{rank19} are based on the assumption that  $\W$ has i.i.d. entries with zero mean and bounded fourth moments. Proposition \ref{prop_K} extends this to models in which the rows of  $\W$ are allowed to have dependent entries, when they follow a sub-Gaussian distribution. We show that the choice  $\mu_n = c_0(n +  p)$, for some absolute numerical constant $c_0$,  leads to desirable results.  The induced size of $\bar K$, for this $\mu_n$,  is of order $n\wedge p$. We found the choice $c_0=0.25$ worked well for all our simulations, as presented in Section \ref{sec_sims}. 
% 	We calibrated the  constant $c_0$ and used $c_0= 0.25$ throughout our simulations presented in Section \ref{sec_sims}. 

    Let $r_e(\sw) = \tr(\sw)/ \|\sw\|_{\op}$ denote the effective rank of $\sw$. The following proposition shows that $\wt s$ finds, adaptively, the {\em theoretical} elbow.

	\begin{prop}\label{prop_K}
	    Let $\wt s$ be defined in (\ref{est_K}) with $\mu_n = c_0 (n + p)$ for some absolute constant $c_0>0$. For any $\theta=(K,A,\beta,\C,\Gamma,\sigma^2)$ such that $(X,Y)$ follows sG-FRM$(\theta)$, $\log p \le cn$, $K\le \bar K$ and 
	    \begin{align}\label{r_e}
	        r_e(\Sigma_W) \ge c'(n\wedge p)
	    \end{align} for some positive constants $c=c(\w)$ and $c'=c'(\w)$,  we have
        \begin{align}\label{bd_R_U_s}
    	    \PP_\theta\left\{ \wt s \le K,\quad  \wh\lambda_{\wt s} \gtrsim \errw,\quad    \wh \lambda_{\wt s+1} \lesssim \errw\right\} \ge 1- O(1/n).
    	\end{align}
	\end{prop}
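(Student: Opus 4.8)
The plan is to follow the strategy of \cite{rank19}, adapted to rows of $\W$ whose coordinates are dependent and sub-Gaussian (rather than i.i.d.\ with bounded fourth moments), and to work throughout for suitable absolute constants $c_0$ and $c'$. Write $S_k:=\|\X-\X_{(k)}\|_F^2=\sum_{j>k}\sigma_j^2(\X)$ and $D_k:=np-\mu_n k$, so that $\wh v_k^2=S_k/D_k$. Since $S_{k-1}-S_k=\sigma_k^2(\X)$, one has the elementary equivalence $\wh v_{k-1}^2>\wh v_k^2\iff\sigma_k^2(\X)D_k>\mu_n S_k$, and hence, by the closed form in \citet[Eq.~2.7]{rank19}, $\wt s$ is the last index at which this ``keep'' inequality holds; in particular $\sigma_{\wt s}^2(\X)D_{\wt s}>\mu_n S_{\wt s}$ while $\sigma_{\wt s+1}^2(\X)D_{\wt s+1}\le\mu_n S_{\wt s+1}$. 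All three conclusions in (\ref{bd_R_U_s}) will be read off from these two boundary inequalities once $S_k$ and $D_k$ are controlled on a good event. The good event is the intersection of the event in (\ref{bd_W_op}), on which $\sigma_1^2(\W)=\|\W^\T\W\|_{\op}\le n\errw$, and a Frobenius concentration event $\tfrac12 n\tr(\sw)\le\|\W\|_F^2\le\tfrac32 n\tr(\sw)$; the latter is obtained from $\|\W\|_F^2=\sum_{i=1}^n\wt W_i^\T\sw\wt W_i$ by applying Hanson--Wright to each summand and summing, with $\|\sw\|_F^2\le\|\sw\|_{\op}\tr(\sw)$ and $r_e(\sw)\ge c'(n\wedge p)$ making the deviation probability $\le e^{-cn}$; each event has probability $1-O(1/n)$. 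The one structural input is Weyl's inequality: since $\rank(\Z A^\T)\le K$, $\sigma_{K+i}(\X)\le\sigma_i(\W)$ and $\sigma_i(\X)\ge\sigma_{i+K}(\W)$ for all $i\ge1$, so each $S_k$ is squeezed between $\sum_{j>k+K}\sigma_j^2(\W)$ and $\sum_{j>k-K}\sigma_j^2(\W)$ up to the rank-$K$ block $\sum_{j\le K}\sigma_j^2(\X)$. Notice that no control of the signal block, in particular no lower bound on $\lambda_K(A\C A^\T)$, is needed.

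On the good event, for every $K<k\le\bar K$ we have $\sigma_k^2(\X)\le\sigma_{K+1}^2(\X)\le\sigma_1^2(\W)\le n\errw$, while $S_k\ge\|\W\|_F^2-\sum_{j\le k+K}\sigma_j^2(\W)\ge\tfrac12 n\tr(\sw)-2\bar K\, n\errw$; since $\errw\lesssim\|\sw\|_{\op}+\tr(\sw)/n=\tr(\sw)(1/r_e(\sw)+1/n)$, $\bar K\le(n\wedge p)\wedge\tfrac{\kappa}{(1+\kappa)c_0}n$, and $r_e(\sw)\ge c'(n\wedge p)$, the term $2\bar K n\errw$ is a small fraction of $\tfrac12 n\tr(\sw)$ for $c_0,c'$ large, so $S_k\gtrsim n\tr(\sw)$; then $\mu_n S_k/D_k\gtrsim c_0\tr(\sw)(n\vee p)/p$, which (using $(n\wedge p)(n\vee p)=np$ and $r_e(\sw)\ge c'(n\wedge p)$ to compare with $\sigma_k^2(\X)\le n\errw\lesssim n\|\sw\|_{\op}+\tr(\sw)$) exceeds $\sigma_k^2(\X)$, so the keep inequality fails and $\wt s\le K$. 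For the lower bound, if $\wt s=0$ use $\wh\lambda_0=\infty$; otherwise the keep inequality at $\wt s$ together with $S_{\wt s}\ge S_K\gtrsim n\tr(\sw)$ (the same display with $K$ in place of $k$) and $D_{\wt s}\le np$ gives $\sigma_{\wt s}^2(\X)\gtrsim\mu_n\tr(\sw)/p\gtrsim\tr(\sw)(n\vee p)/p$, hence $\wh\lambda_{\wt s}\gtrsim\max\{\tr(\sw)/n,\tr(\sw)/p\}\gtrsim\|\sw\|_{\op}+\tr(\sw)/n\gtrsim\errw$, using $c'(n\wedge p)\le r_e(\sw)\le p$. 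For the upper bound, if $\wt s=K$ then Weyl gives $\wh\lambda_{K+1}\le\sigma_1^2(\W)/n\le\errw$; if $\wt s<K$ then $\wt s+1\le K$, and the stop inequality at $\wt s+1$ combined with $S_{\wt s+1}=\sum_{j=\wt s+2}^{K}\sigma_j^2(\X)+\sum_{j>K}\sigma_j^2(\X)\le(K-\wt s-1)\sigma_{\wt s+1}^2(\X)+\|\W\|_F^2$ rearranges, via the identity $D_{\wt s+1}-\mu_n(K-\wt s-1)=np-\mu_n K$, to $(np-\mu_n K)\,\sigma_{\wt s+1}^2(\X)\le\mu_n\|\W\|_F^2$; here the hypothesis $K\le\bar K$ is used decisively, forcing $\mu_n K\le\tfrac{\kappa}{1+\kappa}np$ and hence $np-\mu_n K\ge np/(1+\kappa)>0$, so dividing through and using $\|\W\|_F^2\lesssim n\tr(\sw)$ and $r_e(\sw)\le p$ yields $\sigma_{\wt s+1}^2(\X)\lesssim n\errw$, i.e.\ $\wh\lambda_{\wt s+1}\lesssim\errw$.

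The main obstacle is the mutually consistent choice of the absolute constants so that all of the comparisons above hold simultaneously. The crude Weyl lower bound on $S_k$ degrades as $k\to\bar K$, because the shift by $K$ can push $k+K$ past the number $n\wedge p$ of nonzero singular values of $\W$; overcoming this requires using, at once, that $\bar K$ lies a fixed fraction below $n\wedge p$, the effective-rank hypothesis $r_e(\sw)\ge c'(n\wedge p)$ with $c'$ large, the fact that $\mu_n K$ is bounded away from $np$ via $K\le\bar K$, and that $\mu_n\asymp n\vee p$, and then checking that the Hanson--Wright constants, the constant in the definition of $\errw$, and $c_0,\kappa,c'$ can be chosen compatibly. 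A secondary, routine point is the replacement of the i.i.d.\ fourth-moment concentration used in \cite{rank19} by the Hanson--Wright and operator-norm bounds that define the good event here.
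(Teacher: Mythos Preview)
Your proof is correct and follows essentially the same architecture as the paper's: work on the high-probability event $\{\sigma_1^2(\W)\le n\errw\}\cap\{\|\W\|_F^2\asymp n\tr(\sw)\}$, then use the keep/stop characterization of $\wt s$ together with the rank-$K$ Weyl shifts to read off $\wt s\le K$, $\wh\lambda_{\wt s}\gtrsim\errw$, and $\wh\lambda_{\wt s+1}\lesssim\errw$. The only substantive differences are stylistic: the paper invokes the deterministic results of \cite{rank19} (their Theorem~6, Proposition~7, and equations (2.3), (2.7), (3.8)) as black boxes, whereas you re-derive those facts from first principles via Weyl and the explicit $S_k,D_k$ algebra; and the paper obtains the Frobenius concentration by a coordinate-wise Bernstein bound with a union bound over $j\in[p]$ (this is where the hypothesis $\log p\le cn$ is used), while you use a row-wise Hanson--Wright argument.
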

	Condition $K\le \bar K$ holds, for instance, if $K\le c''(n\wedge p)$ with $c'' \le \kappa/(2c_0(1+\kappa))$.
	We explain the connection between  restriction (\ref{r_e}) %$r_e(\Gamma) \gtrsim n\wedge p$
and the proposed  choice of $\mu_n$. 
%is connected to our  choice of $\mu_n=c_0(n+p)$.
Using elementary algebra, \citet[Theorem 6 and Proposition 7]{rank19} proves
    the  deterministic result%  the inequality 
    \begin{align}\label{ineq:BW}
        \left\{  \frac{2\sigma_1^2 (\W)}{\| \W\|_F^2/ (np)} \le \mu_n\right\} ~  \subseteq  ~ \left\{\wt s \le K\right\},
    \end{align} 
    which shows that if $\mu_n$ is appropriately large, then the selected $\wt s$ is less than or equal to dimension $K$ of the factor regression model generating the data. On the other hand, by concentration inequalities of $\|\W\|_F^2/n$ and $\sigma_1^2(\W)/n$ around $\tr(\sw)$ and $\errw$, respectively (see the proof of Proposition \ref{prop_K} in Appendix \ref{app_proof_PCR}), the  bound
        \begin{align}\label{mu_n}
        \frac{2\sigma_1^2 (\W)}{\| \W\|_F^2/ (np)} \lesssim np \frac{\errw}{\tr(\Gamma)} = p+ {np\over r_e(\Gamma)} 
        \end{align}
  holds with probability larger than $1 - O(1/n)$. Thus, in view of (\ref{ineq:BW}) and (\ref{mu_n}), the event $\{\wt s \leq K\}$ holds with high probability as soon as $\mu_n >   p+ {np/ r_e(\Gamma)}$. Under (\ref{r_e}), we arrive 
  at the choice $\mu_n = c_0( n+p)$ and, in turn, $\bar K= O(n \wedge p)$.

    We note that (\ref{r_e}) holds, for instance, in the commonly considered setting
        \begin{equation}\label{cond_bounded}
		0< c'\le \lambda_p(\sw) \le \lambda_1(\sw)\le C' <\i,
		%,\qquad \beta^\T\C \beta \le C',\qquad \sigma^2 > c'.
		\end{equation}
while being more general. One can alternatively consider other error structures, for instance,  with $r_e(\Gamma)=O(1)$, in which case the above reasoning leads to the choice $\mu_n \gtrsim np$. However,  this would limit the range of $K$, up to $\bar K=O(1)$ in (\ref{est_K}), while our interest is in factor regression models with dimensions allowed to grow with $n$.\\

	Proposition \ref{prop_K} in conjunction with Corollary \ref{cor_PCR_k} immediately leads to the following  risk bound of PCR-$\wt s$. It coincides with the bound for  PCR-$\wh s$ in display (\ref{bd_R_U_shat}) of Corollary \ref{cor_PCR_delta_w}.
	
    \begin{cor}\label{cor_PCR_s_tilde}
     Let $\wt s$ be defined in (\ref{est_K}) with $\mu_n = c_0 (n + p)$ for some absolute constant $c_0>0$. For any $\theta=(K,A,\beta,\C,\Gamma,\sigma^2)$ with $K\le Cn/\log n$ such that $(X,Y)$ follows sG-FRM($\theta$), $\log p \le cn$, $K\le \bar K$ and (\ref{r_e}) holds,
     for some positive constants $c=c(\gamma_w)$ and $c'=c'(\gamma_w)$,  we have
        \begin{align}\label{bd_R_U_s_tilde}
    	    \PP_\theta\left\{\R(\U_{\wt s}) - \sigma^2 \lesssim  (K + \log n){\sep^2 \over n}+\errw \beta^\T (A^\T A )^{-1}\beta \right\} \ge 1- O(n^{-1}).
    	\end{align}
    \end{cor}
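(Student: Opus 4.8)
The plan is to derive the bound as a direct consequence of two results already in hand: the general PCR risk bound, Corollary \ref{cor_PCR_k}, instantiated at the random index $k=\wt s$, and the adaptivity guarantee for $\wt s$, Proposition \ref{prop_K}. First I would check that the hypotheses line up: $K\le Cn/\log n$ is assumed and is exactly what Corollary \ref{cor_PCR_k} requires, while $\log p\le cn$, $K\le \bar K$, and the effective-rank condition (\ref{r_e}) are precisely the hypotheses of Proposition \ref{prop_K}.

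Next I would apply Corollary \ref{cor_PCR_k} with $k=\wt s$ --- permissible because that corollary is stated for an arbitrary, possibly random, $k$ --- obtaining an event $\mathcal{E}_1$ with $\PP_\theta(\mathcal{E}_1)\ge 1-cn^{-1}$ on which $\R(\U_{\wt s})-\sigma^2\lesssim \wh B_1(\wt s)+\wh B_2(\wt s)$, with $\wh B_1,\wh B_2$ as in (\ref{def_B_k_1})--(\ref{def_B_k_2}). Separately, Proposition \ref{prop_K} supplies an event $\mathcal{E}_2$ with $\PP_\theta(\mathcal{E}_2)\ge 1-O(n^{-1})$ on which simultaneously $\wt s\le K$, $\wh\lambda_{\wt s}\gtrsim \errw$, and $\wh\lambda_{\wt s+1}\lesssim \errw$.

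The remaining step is a deterministic simplification of $\wh B_1(\wt s)+\wh B_2(\wt s)$ on $\mathcal{E}_1\cap\mathcal{E}_2$. The one elementary fact I would use is that $\errw\ge c\|\Gamma\|_{\op}$ directly from its definition (\ref{def_ErrW}), so on $\mathcal{E}_2$ both $\|\Gamma\|_{\op}/\wh\lambda_{\wt s}$ and $\errw/\wh\lambda_{\wt s}$ are bounded by absolute constants. Combined with $\wt s\le K$ (hence $K\wedge\wt s+\log n\le K+\log n$ and $(\|\Gamma\|_{\op}/\wh\lambda_{\wt s})\wt s\lesssim K$), this collapses $\wh B_1(\wt s)$ to $\lesssim (K+\log n)\sigma^2/n$; and combined with $\wh\lambda_{\wt s+1}\lesssim \errw$, it collapses $\wh B_2(\wt s)$ to $\lesssim \errw\,\beta^\T(A^\T A)^{-1}\beta$. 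A union bound over $\mathcal{E}_1^c$ and $\mathcal{E}_2^c$ then yields the stated inequality with probability $1-O(n^{-1})$.

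I do not expect a genuine obstacle here: all the substantive work is carried by Theorem \ref{thm_pred} (through Corollary \ref{cor_PCR_k}) and by Proposition \ref{prop_K}, both of which may be taken as given, and what remains is bookkeeping. The only two points requiring any care are (i) confirming that $\|\Gamma\|_{\op}/\errw\lesssim 1$ so that every occurrence of $\wh\lambda_{\wt s}$ in a denominator is absorbed into constants, and (ii) ensuring that the two $O(n^{-1})$ failure probabilities are composed correctly so their sum is still $O(n^{-1})$. If anything is delicate it is the legitimacy of invoking Corollary \ref{cor_PCR_k} at the data-dependent index $k=\wt s$, but this is exactly the generality in which that corollary was deliberately stated.
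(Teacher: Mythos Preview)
Your proposal is correct and mirrors the paper's own argument exactly: the paper states that Proposition \ref{prop_K} in conjunction with Corollary \ref{cor_PCR_k} ``immediately'' yields the result, and your write-up simply spells out that immediate combination---intersecting the two high-probability events, using $\wt s\le K$, $\wh\lambda_{\wt s}\gtrsim\errw\ge c\|\Gamma\|_{\op}$ and $\wh\lambda_{\wt s+1}\lesssim\errw$ to collapse $\wh B_1(\wt s)+\wh B_2(\wt s)$.
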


    \subsection{Existing results on PCR}\label{sec_existing}
   
   % We now comment on the connection of our results of PCR to the existing literature.
    
    %The quantity $\lambda_K(A\C A^\T) / \errw$ can be viewed as the sample-level signal-to-noise ratio analogue of $\xi$ in (\ref{def_snr}). Writing the \emph{effective rank} of $\Gamma$ as $r_e(\Gamma) = \tr(\Gamma)/\|\Gamma\|_{{\rm op}}$, this condition reduces to $\xi \gtrsim 1\vee (r_e(\Gamma)/n)$, which holds if 
	%	\[ \xi \gtrsim 1 \vee (p/n). \] 
    
   %  \begin{remark}[Existing results on PCR]
 		
 			 Due to the popularity and simplicity of PCR, its prediction properties under the factor regression model have been studied for nearly two decades. Most existing theoretical results, discussed below,  are asymptotic in $n$ and $p$ and, to the best of our knowledge,  have been established for a model of known dimension $K$,  or when  $K$ is identifiable under additional restrictions on the parameter space, and can be consistently estimated. %, {\color{red} I guess we need to add "Under  additional restrictions on the parameter space that identify $K$." } 
 			 
 		   The fact that   PCR prediction, under the factor regression model with known or identifiable $K$,  has asymptotically vanishing excess risk  only when both $p$ and $n$ grow  to $\infty$ is a well known result. 
 		    This can already be seen from our derivation  (\ref{risk_lb}) above, which shows that  a necessary condition for prediction with vanishing excess risk, under factor regression models with well conditioned $\Sigma_W$,  is $\og \beta^\top(A^\top A)^{-1}\beta \rightarrow 0 $, which can be met when $p \rightarrow \infty$, as explained below.
 		   
 		   This phenomenon was first quantified in  \cite{SW2002_JASA}, where it is shown that 
 		$$
 		\wh Y^*_{\U_K} - Z_{*}^\T \beta = o_p(1) \ \ \text{as} \ \  n, p \to \i.
 		$$
 		This result is the most closely related to ours, and we discuss it in detail below. We also mention that  several later works, for instance  \cite{Bai-factor-model-03} 	and \cite{fan2013large}, provided explicit convergence rates and inferential theory for the {\it in-sample} prediction error $\wh \Y - \Z \beta$, whereas in this work we study out-of-sample performance. For completeness, we comment on these related, but not directly comparable, results in Appendix \ref{app_literature}.

 	%	since $p\to \i$ is required in general to ensure $\og \beta^{\top}(A^\top A)^{-1}\beta \to 0$. 

 	In addition to being asymptotic in nature, the results in  \cite{SW2002_JASA}, and also those regarding the  in-sample prediction accuracy, are established under the following set of conditions: %{\color{red} we should also add the identifiability restrictions on $A$, etc., since we are talking about only one $K$.} 
 	 $K =O(1)$, $\|\beta\|^2=O(1)$, $\|\Gamma\|_{{\rm op}} = O(1)$, as $p\to\i$, and 
 	\begin{equation}\label{cond_PCR_ident}
 	    {1\over p}A^\T A \to \bI_K, \textrm{ as } p\to \i, \quad \C\textrm{ is a diagonal matrix with distinct diagonal entries}.
 	\end{equation}
 	These conditions serve as identifiability conditions for  $\theta =(K,\beta,A,\C,\Gamma,\sigma^2)$ \citep{SW2002_JASA}. 
 	Condition (\ref{cond_PCR_ident}) further implies that, for some constants $0<c\le C<\i$,
 		\begin{equation}\label{cond_PCR}
 		p \lesssim \lambda_K(AA^\T) \le \lambda_1(AA^\T) \lesssim p,\quad c \le \lambda_K(\C) \le \lambda_1(\C) \le C.
 		\end{equation}
 
 	In contrast, our Corollaries \ref{cor_PCR_k}, \ref{cor_PCR_delta_w} and \ref{cor_PCR_s_tilde} are non-asymptotic statements, which hold for any finite $K$, $n$ and $p$, where $K$ is allowed to depend on $n$, with $K\log n \lesssim n$. Consequently, $\|\beta\|_2^2$ and $\lambda_1(\sz)$ are  also allowed to grow with $n$. Furthermore, our  conditions on the signal $\lambda_K(A \C A^\T)$ are much weaker than (\ref{cond_PCR}) to derive the risk bound of PCR-$K$. To see this, and for a transparent comparison, suppose $\og \lesssim 1$ and $\lambda_K(\C)\ge c$. Then from Remark \ref{rem_K} we only require a condition much weaker than $\lambda_K(A A^\T) \gtrsim p$ of \citep{SW2002_JASA} given in (\ref{cond_PCR}) above, namely 
    $$ 
        \lambda_K(A A^\T) \gtrsim 1 + {p\over n}.
    $$
    
    Finally,  the results in \cite{SW2002_JASA} are established for the unique $\theta$ under additional restrictions of the parameter space discussed above, 
    %ensures the consistent estimation of $\theta$, hence the identifiability of $\theta$, {\color{red} the previous part of the sentence is wrong, see above: we do indeed want to say that they are talking  about one class of models, or for a $\theta$ in a smaller parameter space} 
    whereas our results are established for any $\theta$ with $K\log n\lesssim n$ such that $(X,Y)$ satisfying sG-FRM$(\theta)$, without requiring $\theta$ to be identifiable. In particular, our results hold for any identifiable $\theta$ that further satisfies (\ref{cond_PCR_ident}).
    
    We conclude our comparison by giving the bound implied by our Corollary \ref{cor_PCR_delta_w}, should the more stringent conditions (\ref{cond_PCR}) be met. Since (\ref{cond_PCR}) implies that $\wh s = K$ with high probability from Remark \ref{rem_K}, 
    Corollary \ref{cor_PCR_delta_w} immediately yields, with probability $1-O(n^{-1})$,   	
     \begin{align*}
     	\R(\U_{K}) -\sigma^2 \lesssim {\log n\over n}\sigma^2+ {\|\Gamma\|_{{\rm op}}\over p} + { \|\Gamma\|_{{\rm op}} \over n},
     		\end{align*}
     and thus, as in  \cite{SW2002_JASA}, 
    $$ \R(\U_{K}) -\sigma^2 = o_p(1) $$
     when $p, n \rightarrow \infty$ and  $\|\Gamma\|_{{\rm op}} = O(1)$.

	\section{Analysis of alternative  prediction methods}\label{sec_pred_GLS_ER}

	 In this section we illustrate the usage of the main Theorem \ref{thm_pred} to  derive  risk bounds under a factor regression model for two other prediction methods: Generalized Least Squares 
 	\citep{bunea2020interpolation}, as an example of another model agnostic predictor construction, and model-tailored prediction, in an instance of an identifiable factor regression model provided  by  the \textit{Essential Regression} framework introduced in \cite{ER}.   	All proofs for this section are contained in Appendix \ref{app_proof_GLS_ER}.

    \subsection{Prediction risks of minimum norm interpolating predictors under factor regression models}\label{sec_pred_GLS}
	In the recent paper \cite{bunea2020interpolation}, risk bounds were established under the factor regression model for the Generalized Least Squares (GLS) predictor, which corresponds to taking $\wh B = \bI_p$:
	\begin{equation}\label{def_GLS}
	    \wh Y^*_{\bI_p} = X_*^\T \X^+\Y.
	\end{equation}
	We recover as these results in Corollary \ref{cor_gls_low} and Corollary \ref{cor_GLS} below, as further illustration of the application of our main theorem. Since $P_{\bI_p} = \bI_p$ and $P_{\bI_p}^\bot =0$, the application of Theorem \ref{thm_pred} with $\wh \psi =0$ amounts to obtaining a lower bound on the smallest non-zero singular value of $\X$ to bound $\wh \eta$. 
	
% 	{\seth [redundant?] Although the results of Corollary \ref{cor_gls_low} and Corollary \ref{cor_GLS} below are already established in \cite{bunea2020interpolation}, we give them here as further illustrations of our master Theorem \ref{thm_pred}.} 
	
	We consider the low ($p < n$)- and high ($p > n$)-dimensional settings separately. In the former case, GLS reduces to the ordinary least squares (OLS) method. The following corollary states the prediction risk of the OLS under the factor regression model. The proof uses a standard random matrix theory result \citep[see][Theorem 5.39]{vershynin_2012} to show
% 	uses standard matrix theory resu lts to show that with high probability,
	$\sigma_p^2(\X) \gtrsim \lp(\sw) n$, which implies
% which follows directly from  Theorem 5.39 of \cite{vershynin_2012}, 
	$\wh \eta \gtrsim \lp(\sw)$.  Recall that  $\kappa(\sw) \coloneqq \lambda_1(\sw)/\lp(\sw)$. 
% 	Plugging this into Theorem \ref{thm_pred} leads to the following bound, which contains an overall factor {\color{red} What does this mean ? factor of the condition number $\kappa(\sw)$.}
	
	\begin{cor}[GLS: low-dimensional setting]\label{cor_gls_low}
	    	Suppose $p\log n\le c_0n$ for an absolute constant $c_0\in (0,1)$. For any $\theta = (K,A,\beta,\C,\Gamma, \sigma^2)$ with $K\le Cn/\log n$ and $\lambda_p(\Gamma) > c$ such that $(X,Y)\sim \textrm{sG-FRM}(\theta)$, one has
		\begin{align*}
		\PP_{\theta} \left\{ \R(\bI_p)-\sigma^2  ~\lesssim \left(
		{p + \log n\over n}\sigma^2  +\|\Gamma\|_{\op}~ \beta^\T (A^\T A)^{-1}\beta
		\right)\kappa(\Gamma) \right\} \ge 1-O(n^{-1}).
		\end{align*}
	\end{cor}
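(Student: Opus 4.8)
The plan is to apply the master theorem, Theorem~\ref{thm_pred}, with the particular choice $\wh B=\bI_p$, and then to pin down the three random quantities $\wh r,\wh\eta,\wh\psi$ that enter its bound~(\ref{bd_R_B_star}). Since $P_{\bI_p}=\bI_p$ and $P_{\bI_p}^{\perp}=0$, one reads off immediately that $\wh\psi=n^{-1}\sigma_1^2(\X P_{\bI_p}^{\perp})=0$, that $\wh r=\rank(\X)$, and that $\wh\eta=n^{-1}\sigma_{\wh r}^2(\X)$. Hence the whole argument reduces to a high-probability lower bound on the smallest singular value of the $n\times p$ matrix $\X$, which in the regime $p<n$ will also certify $\wh r=p$.

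To obtain that lower bound I would whiten the rows of $\X$. Writing $\Sigma:=\Cov(X)=A\C A^\T+\sw$, each row of $\X$ is $X=\Sigma^{1/2}\wt X$ with $\wt X:=\Sigma^{-1/2}X$, and using $Z=\C^{1/2}\wt Z$, $W=\sw^{1/2}\wt W$ together with the identity $\|\C^{1/2}A^\T\Sigma^{-1/2}u\|^2+\|\sw^{1/2}\Sigma^{-1/2}u\|^2=u^\T\Sigma^{-1/2}\Sigma\,\Sigma^{-1/2}u=\|u\|^2$, one checks that $\wt X$ is isotropic and sub-Gaussian with an \emph{absolute} constant (at most $\max\{\gamma_z,\gamma_w\}$), even though $\Sigma$ itself is badly conditioned. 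Applying the standard random matrix estimate \citep[Theorem 5.39]{vershynin_2012} to the $n\times p$ matrix $\wt\X$ with rows $\wt X_1,\dots,\wt X_n$, with deviation parameter $t\asymp\sqrt{\log n}$, yields $\sigma_p(\wt\X)\ge\sqrt n-C\sqrt p-C'\sqrt{\log n}$ with probability $1-O(n^{-1})$, and the hypothesis $p\log n\le c_0 n$ with $c_0$ small enough makes the right-hand side at least $\sqrt n/2$. Because $A\C A^\T\succeq 0$ forces $\lambda_{\min}(\Sigma)\ge\lambda_p(\sw)>c$, it follows that $\sigma_p(\X)=\sigma_p(\wt\X\,\Sigma^{1/2})\ge\sigma_p(\wt\X)\sqrt{\lambda_{\min}(\Sigma)}\gtrsim\sqrt{\lambda_p(\sw)\,n}$; on this event $\X$ has full column rank, so $\wh r=p$ and $\wh\eta\gtrsim\lambda_p(\sw)$.

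The final step is to substitute $\wh r=p$, $\wh\psi=0$, $\wh\eta\gtrsim\lambda_p(\sw)$ into~(\ref{bd_R_B_star}) and simplify. Since $p\le n$ and $\lambda_p(\sw)>c$, we have $\tr(\sw)/n\le(p/n)\|\Gamma\|_{\op}\le\|\Gamma\|_{\op}$, so $\errw\asymp\|\Gamma\|_{\op}$, and together with $\wh\eta\gtrsim\lambda_p(\sw)$ this gives $\|\Gamma\|_{\op}/\wh\eta\lesssim\kappa(\Gamma)$ and $\errw/\wh\eta\lesssim\kappa(\Gamma)$. Using $K\wedge p\le p$ and $\kappa(\Gamma)\ge 1$, the variance bracket in~(\ref{bd_R_B_star}) collapses to $\lesssim\kappa(\Gamma)(p+\log n)\sigma^2/n$ and the bias term to $\lesssim\kappa(\Gamma)\|\Gamma\|_{\op}\beta^\T(A^\T A)^{-1}\beta$, which is exactly the asserted inequality. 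Intersecting the event of Theorem~\ref{thm_pred} (probability $\ge 1-c/n$) with the singular-value event (probability $1-O(n^{-1})$) gives the stated probability $1-O(n^{-1})$.

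The one genuinely delicate point I anticipate is the smallest-singular-value bound: confirming that whitening does not inflate the sub-Gaussian constant — the factor signal lives entirely in the top eigenvalues of $\Sigma$, while the whitening must not amplify the ambient noise — and checking that the two hypotheses $p\log n\le c_0 n$ and $\lambda_p(\Gamma)>c$ are precisely what is needed to make $\sqrt n-C\sqrt p-C'\sqrt{\log n}\gtrsim\sqrt n$ and $\lambda_{\min}(\Sigma)\gtrsim 1$, respectively. Everything past $\wh\eta\gtrsim\lambda_p(\sw)$ is routine bookkeeping with the terms of Theorem~\ref{thm_pred}.
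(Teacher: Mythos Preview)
Your proposal is correct and follows essentially the same route as the paper: whiten the rows of $\X$ by $\Sigma^{-1/2}$, apply \citet[Theorem 5.39]{vershynin_2012} to the isotropic sub-Gaussian matrix $\X\Sigma^{-1/2}$ to obtain $\sigma_p^2(\X)\gtrsim\lambda_p(\sw)n$ with probability $1-O(n^{-1})$, and then substitute $\wh r\le p$, $\wh\psi=0$, $\wh\eta\gtrsim\lambda_p(\sw)$ into Theorem~\ref{thm_pred}. Your write-up is in fact more explicit than the paper's on the key point you flagged as delicate---the identity $\|\C^{1/2}A^\T\Sigma^{-1/2}u\|^2+\|\sw^{1/2}\Sigma^{-1/2}u\|^2=1$ shows the whitened rows are sub-Gaussian with constant at most $\max\{\gamma_z,\gamma_w\}$, which the paper simply asserts.
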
 
% 	To prove this result, observe that under the conditions of Corollary \ref{cor_gls_low},
% 	\[\sigma_p^2(\X)\ge \lp(\sx)\sigma^2(\X\sx^{-1/2})\ge \lp(\sw)\sigma^2(\X\sx^{-1/2})\gtrsim \lp(\sw)n,\]
% 	where the last step holds with probability at least $1-cn^{-C}$ by an application of Theorem 5.39 of \cite{vershynin_2012}. Corollary \ref{cor_gls_low} then follows from Theorem \ref{thm_pred} with $\psi_n=0$, $\eta_n = \lp(\sw)$.

	When $p$ is much larger than $n$, the GLS becomes the minimum $\ell_2$ norm interpolator \citep{bunea2020interpolation}, one method studied in the recent wave of literature on the generalization of overparameterized models with zero or near-zero training error \citep{montanari2019generalization, bunea2020interpolation, muthukumar2019harmless,muthukumar2020classification,hastie2019surprises, feldman2019does, Belkin15849, belkin2019models,belkin2018overfitting,belkin2018understand, 
 belkin2018does, bartlett2019,liang2019}. %{\seth should we delete these references now since we put them in the intro?}. 
 Theorem \ref{thm_pred} can also be applied to recover a slightly modified form of the prediction risk bound from \cite{bunea2020interpolation} in this case, which we state in the following corollary. Recall that $r_e(\Gamma) = \tr(\Gamma)/\|\Gamma\|_{{\rm op}}$ is the effective rank of $\Gamma$.

	\begin{cor}[GLS: high-dimensional setting. Interpolating predictors.]\label{cor_GLS}
	    For any 
	    $\theta = (K,A,\beta,\C,\Gamma, \sigma^2)$ with $K\le Cn/\log n$ such that $(X,Y)\sim \textrm{sG-FRM}(\theta)$, suppose $\wt W$ defined in Definition \ref{frm} has independent entries and $r_e(\Gamma) > C’n$ for some sufficiently large constant $C‘>0$. Then there exists $c>0$ such that
		\begin{align*}
		\PP_\theta \left\{\R(\bI_p) -\sigma^2  \lesssim {K + \log n\over n}\sigma^2 +
		{n \over r_e(\Gamma)}\sigma^2+  {r_e(\Gamma) \over n} \|\Gamma\|_{{\rm op}} \ \beta^\T (A^\T A)^{-1}\beta\right\} \ge 1-c/n.
		\end{align*}
	\end{cor}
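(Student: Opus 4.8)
\textbf{Proof plan for Corollary \ref{cor_GLS}.} The plan is to apply Theorem \ref{thm_pred} with $\wh B = \bI_p$, so that $P_{\wh B} = \bI_p$ and $P_{\wh B}^{\perp} = 0$, forcing $\wh \psi = 0$ identically. In this case $\wh r = \rank(\X) = n$ with high probability (since $p > C'n$ and $\X = \Z A^\T + \W$ has a full-rank noise component), and $\wh \eta = n^{-1}\sigma_n^2(\X)$, the normalized squared smallest singular value of the full data matrix. The entire task thus reduces to producing a sharp lower bound on $\sigma_n^2(\X)$, i.e. showing $\wh \eta \gtrsim \|\Gamma\|_{\op}\, r_e(\Gamma)/n$ with probability at least $1 - c/n$; once this is in hand, I substitute into (\ref{bd_R_B_star}) and simplify.

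First I would establish the singular value lower bound. Write $\X = \Z A^\T + \W$; since adding the rank-$K$ term $\Z A^\T$ can only help on the relevant subspace, the dominant contribution is $\sigma_n^2(\W)$, and I would bound $\sigma_n^2(\W)$ from below. With $\W = \wt W \sw^{1/2}$ (rows), $\wt W$ having independent sub-Gaussian entries and $r_e(\Gamma) > C'n$, the standard approach is a concentration argument for $\|\W^\T \W/n - \Gamma\|_{\op}$ of Bai--Yin / Vershynin type: one shows this deviation is of order $\|\Gamma\|_{\op}\sqrt{n/r_e(\Gamma)} + \|\Gamma\|_{\op}\, n/r_e(\Gamma)$, which is $\ll \|\Gamma\|_{\op}$ when $r_e(\Gamma) > C'n$ with $C'$ large. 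Wait --- this is not quite right: $\W^\T\W/n$ is a $p\times p$ matrix of rank $n < p$, so its smallest eigenvalue is $0$. The correct object is $\W\W^\T/n$, the $n\times n$ Gram matrix, whose smallest eigenvalue equals $\sigma_n^2(\W)/n$. So I would instead control $\|\W\W^\T/n - \bar\Gamma\|_{\op}$ where $\bar\Gamma$ is the relevant $n\times n$ expected Gram structure; since the rows of $\W$ are i.i.d. with covariance $\Gamma$, $\EE[\W\W^\T/n]$ is not simply a multiple of $\bI_n$ unless... actually each diagonal entry is $\tr(\Gamma)/n$ and off-diagonals vanish in expectation, so $\EE[\W\W^\T/n] = (\tr(\Gamma)/n)\bI_n = \|\Gamma\|_{\op} r_e(\Gamma)/n \cdot \bI_n$. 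Then a matrix-concentration bound (leveraging $\wt W$ having independent entries and $r_e(\Gamma) > C'n$, so the effective sample size for each inner product is large) gives $\sigma_n^2(\W)/n \gtrsim \tr(\Gamma)/n = \|\Gamma\|_{\op} r_e(\Gamma)/n$ on an event of probability $1 - c/n$, hence $\wh\eta \gtrsim \|\Gamma\|_{\op}\, r_e(\Gamma)/n$.

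Then I plug $\wh r = n$, $\wh\psi = 0$, and $\wh\eta \gtrsim \|\Gamma\|_{\op} r_e(\Gamma)/n$ into (\ref{bd_R_B_star}). The variance term becomes $[(\|\Gamma\|_{\op}/\wh\eta)\cdot n + (1 + \errw/\wh\eta)(K\wedge n + \log n)]\sigma^2/n$; since $\|\Gamma\|_{\op}/\wh\eta \lesssim n/r_e(\Gamma)$ this first piece contributes $(n/r_e(\Gamma))\sigma^2$, and since $\errw \asymp \|\Gamma\|_{\op}(1 + p/n) \asymp \|\Gamma\|_{\op}\, r_e(\Gamma)/n$ (using $r_e(\Gamma) \asymp p$ in this regime up to constants, or more carefully $\errw = c[\|\Gamma\|_{\op} + \tr(\Gamma)/n]$ and $\tr(\Gamma)/n = \|\Gamma\|_{\op} r_e(\Gamma)/n \gtrsim \wh\eta$ up to constants), the ratio $\errw/\wh\eta = O(1)$, so the second piece contributes $(K + \log n)\sigma^2/n$. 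The bias term reduces to $(1 + \|\Gamma\|_{\op}/\wh\eta)\errw\,\beta^\T(A^\T A)^{-1}\beta \lesssim \errw\,\beta^\T(A^\T A)^{-1}\beta \asymp (\tr(\Gamma)/n)\beta^\T(A^\T A)^{-1}\beta = (r_e(\Gamma)/n)\|\Gamma\|_{\op}\beta^\T(A^\T A)^{-1}\beta$ (the $\wh\psi$ part vanishes). Collecting the three surviving terms gives exactly the claimed bound. The main obstacle is the singular value lower bound for $\W$ in the heavily overparametrized regime $p \gg n$: unlike the classical $n > p$ case, $\W^\T\W$ is singular, so one must work with the $n\times n$ Gram matrix $\W\W^\T$ and obtain a lower bound on its smallest eigenvalue of the sharp order $\tr(\Gamma)/n$, which requires the independent-entries assumption on $\wt W$ and the condition $r_e(\Gamma) > C'n$ to ensure each coordinate inner product concentrates tightly enough --- a result essentially of the Bai--Yin flavor adapted to anisotropic rows, and the place where the constant $C'$ enters.
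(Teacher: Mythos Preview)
Your proposal is correct and follows exactly the paper's approach: apply Theorem \ref{thm_pred} with $\wh B = \bI_p$ so that $\wh\psi = 0$ and $\wh r = n$, reduce everything to the lower bound $\wh\eta = n^{-1}\sigma_n^2(\X) \gtrsim \tr(\Gamma)/n$, and then plug into (\ref{bd_R_B_star}) using $\errw \asymp \tr(\Gamma)/n$ so that $\errw/\wh\eta = O(1)$ and $\|\Gamma\|_{\op}/\wh\eta \lesssim n/r_e(\Gamma)$. The paper does not reprove the singular value lower bound but simply cites it as Proposition 6 of \cite{bunea2020interpolation}; your sketch of a Bai--Yin type concentration for the $n\times n$ Gram matrix $\W\W^\T$ around $\tr(\Gamma)\bI_n$ is the right route, though the aside that ``adding the rank-$K$ term $\Z A^\T$ can only help'' is not literally true for arbitrary low-rank perturbations and needs a one-line patch (for instance $\sigma_n^2(\X) \geq \sigma_n^2(\X P_A^\perp) = \sigma_n^2(\W P_A^\perp)$, and then the same concentration applies to $\W P_A^\perp$).
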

% 	\begin{proof}
% 		The proof is deferred to Appendix \ref{sec_proof_cor_GLS_high}.
% 	\end{proof}
	
	%\begin{remark}
	By Proposition 6 of \cite{bunea2020interpolation}, we have $\sigma_n^2(\X)\gtrsim \tr(\sw)$ with high probability when $r_e(\sw)\gtrsim n$. Corollary \ref{cor_GLS} thus follows from 
	Theorem \ref{thm_pred} with $\wh\psi = 0$ and $\wh\eta \gtrsim \tr(\sw)/n$ in the high-dimensional setting. %By noting that $r_e(\Gamma) \le p$, condition $r_e(\sw)>C_0n$ implies $p > C_0n$.  
	%\end{remark} 
	A simplified version of the risk bound in Corollary \ref{cor_GLS}, together with a comparison with PCR-$k$ prediction, is presented  in Section \ref{sec_compare}. \\

	\subsection{Prediction under Essential Regression}\label{sec_pred_er}
	Both Principal Component Regression and Generalized Least Squares are model-agnostic methods, in that they do not use explicit estimates of the model parameters $\theta = (K,A,\beta, \C,\Gamma, \sigma^2)$ to perform prediction. In contrast,  further assumptions can be placed on the factor model to make $\theta$ identifiable, in which case a direct estimate of $A$ can be meaningfully constructed and used for prediction. The Essential Regression (ER) framework introduced in \cite{ER} provides an approach to do this.
	
	Essential Regression is a particular factor regression model under which the latent factor $Z$ becomes interpretable under additional model assumptions. Specifically, under model (\ref{main_model}), one further assumes the following model specifications.
	\begin{ass}\label{ass_model}\mbox{}
		\begin{enumerate}
			\setlength\itemsep{0mm}
			\item[(A0)] $\|A_{j\sbt}\|_1 \le 1$ for all $j\in [p]$.
			\item[(A1)] For every $k\in [K]$, there exists at least two $j\ne \ell \in [p]$, such that $|A_{j\sbt }| = |A_{\ell \sbt}| = e_k$.
			\item [(A2)] 
			There exists a constant $\nu>0 $ such that 
			$$
			\min_{1\le a<b\le K} \left(\, [\C]_{aa}\wedge [\C]_{bb} - |[\C]_{ab}| \, \right)> \nu.
			$$  
			\item[(A3)] The covariance $\sw$ of $W$ is diagonal with bounded diagonal entries.
		\end{enumerate}
	\end{ass}
	\noindent
% 	We additionally assume the covariance of $W$ is diagonal.
% 	, writing 
% 	$\Gamma = \diag(\tau_1^2,\ldots, \tau_p^2)$.  
    The indices $i\in [p]$ satisfying $A_{i\sbt} = e_k$ are called \textit{pure variables} and collected in the set $I$. We use $J = [p]\setminus I$ to denote all the variables that are \textit{non}-pure. 
    
	Within the Essential Regression framework, the matrix $A$ becomes identifiable up to a signed permutation \citep{LOVE}. In fact, $\theta = (K, A, \beta, \C, \Gamma, \sigma^2)$ can be further shown to be identifiable \citep{ER}. 
	
	We explain how to construct predictors of $Y$ tailored to a factor model, and elaborate on the predictor tailored to Essential Regression. 
Under  any  factor model (\ref{main_model}), the best predictor of $Y$ from $Z$ is $Z^\T \beta$. However, since $Z$ is not observable, this expression does not lend itself to sample level prediction.  A practically usable expression for a predictor under the factor regression model can be obtained by the following reasoning. 
Using the Moore-Penrose inverse %identity
$A^+\coloneqq (A^\T A)^{-1}A^\T $ of the matrix $A$, %, where $M^+$ denotes the Monrow-Penrose inverse of a matrix $M$, 
we observe that model (\ref{main_model}) implies 
	\begin{equation*}
	\bar X \coloneqq A^+ X = Z + A^+ W.
	\end{equation*}
 The best linear predictor (BLP) of $Z$ from $\bar X$ is given by
	\begin{equation}\label{def_Theta_bar}
	\wt Z = \Cov(Z, \bar X)[\Cov(\bar X)]^{-1}\bar X = \C\left(\C + A^+\Gamma A^{+\T}\right)^{-1}A^+ X.
	\end{equation}
	The simple observation that 
	\[ 
	 \arg\min_{\alpha} \EE[ (Y - Z^\T\alpha)^2]  = \beta = \arg\min_{\alpha}
	\EE[ (Y - \wt Z^\T\alpha)^2 ]
	\] 
	justifies predicting  $Y$ by $\wt Y = \wt Z^\T\beta$. 
	Inserting the identity  $\beta = \C^{-1}A^+\Cov(X, Y)$ simplifies $\wt Y$ to 
	\begin{align}\nonumber
	\wt Y_{A} &~=~ X^\T A^{+\T}\left(\C + A^+\Gamma  A^{+\T}\right)^{-1}\C\beta \\\label{pred_A}
	&~=~ X^\T A\left[\Cov(A^\T X)\right]^{-1}\Cov(A^\T X, Y), \nonumber 
%	&~=~X^\T B\left[\Cov(B^\T X)\right]^{+}\Cov(B^\T X, Y) := \wt Y_B,
	\end{align}	
motivating prediction based on a new data point $X_*$ by 
\[Y^{*}_{\wh A} = X_{*}^\T \wh A\left(\wh A^\T\X^\T\X\wh A\, \right)^+\wh A^\T\X^\T \Y, \] 
which has the general form (\ref{def_pred_B_intro}) with  $\wh B = \wh A$,  with  $\wh A$ being an estimator of $A$ tailored to the ER model, developed in \cite{LOVE}. We summarize the construction of $\wh A$ in Appendix \ref{app_love} for completeness. \\

To analyze the prediction risk of $Y^{*}_{\wh A}$ we will also need the following assumption on  the covariance matrix $\C$, which   plays the same role as the Gram matrix in  classical linear regression with random design.
	\begin{ass}\label{ass_C}
		Assume $c\le \lambda_{K}(\C) \le \lambda_{1}(\C) \le C$ for some constants $c$ and $C$ bounded away from $0$ and $\i$. %Assume $\{\tau_i^2\}_{i=1}^p$ are bounded away from $0$ and $\i$. 
	\end{ass}

	%Note that under Assumption \ref{ass_C}, one has 
	%\[
	%\errw \asymp \og \left( 1 + {p\over n}\right).
	%\]
% 	The following corollary quantifies the prediction error of $\wh Y^*_{\wh \Theta}$.  
% 	\begin{thm}[Predictor with $\wh B = \wh\Theta$]\label{thm_pred_Theta}
% 		Under Assumptions \ref{ass_sG}, \ref{ass_model} and \ref{ass_C}, assume $K\log n \lesssim n$ and $\lk(\sza)\ge c_0 \errw $ for some sufficiently small constant $c_0>0$. For some constants $c>0$ and $C\ge 1$, one has that with probability at least $1-cn^{-C}$
% 		\begin{align}\label{rate_pred_Theta}
% 		\R(\wh \Theta) \lesssim   {K+\log n\over n}\sigma^2  + \errw \beta^\T (A^\T A)^{-1}\beta
% 		\end{align}
% 	\end{thm}
	
% 	\begin{proof}
% 		The proof is deferred to Appendix \ref{sec_proof_thm_pred_Theta}.
% 	\end{proof}

    The prediction risk of $\wh Y^*_{\wh A}$  can be obtained via an application of Theorem \ref{thm_pred}, with the choice $\wh B = \wh A$.  Since $A$ is identifiable under the Essential Regression framework, the estimator $\wh A$ can be compared directly with $A$ and, as shown in \cite{LOVE}, 
    \begin{equation}\label{eqn_a_est}
        \|\wh A - A\|_{\op}^2\le \|A_J\|_0\log(n\vee p)/n
    \end{equation}
    with high probability. The rows of the $p\times |J|$ submatrix $A_J$ of $A$   correspond  to all the index set $J$ of   non-pure variables. The estimation bound (\ref{eqn_a_est}) can be leveraged to obtain a small improvement in the risk  bound by slightly adjusting the proof of Theorem \ref{thm_pred}. Using this approach, we obtain the following result by establishing, with high probability, that 
    \begin{align*}
        & \wh r = K,\\
        &\wh \eta \gtrsim \lambda_K(A\C A^\T),\\
        & \wh \psi \lesssim \|A_J\|_0{\log\pn \over n} + \|\Gamma\|_{\op} := \psi_n(A_J).
    \end{align*}
     
    % , the proof of which is deferred to Appendix \ref{sec_proof_pred_A}.
	%{\color{red} Say where the proofs are} 
	\begin{thm}[Prediction in Essential Regression] \label{thm_pred_A}
	    Suppose $(X,Y)\sim \textrm{sG-FRM}(\theta)$ with $\theta =  
	    (K,A,\beta,\C,\Gamma, \sigma^2)$ satisfying Assumptions \ref{ass_model} \& \ref{ass_C}, $K\le Cn/\log n$
		and
		$$
		\lambda_K(A\C A^{\top}) \ge c \cdot  \psi_n(A_J)
		$$
		for some sufficiently small constant $c>0$. Then, with probability at least  $1-O(n^{-1})$,
		\begin{align}\label{rate_pred_A}
		\R(\wh A)-\sigma^2 &\lesssim  {K + \log n\over n}\sigma^2+%\left[\og + \|A_J\|_0	{\log\pn \over n}\right]
		 \psi_n(A_J) \beta^\T (A^\T A)^{-1}\beta.
		\end{align}
	\end{thm}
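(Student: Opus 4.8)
The plan is to derive Theorem~\ref{thm_pred_A} as a direct consequence of the master theorem, Theorem~\ref{thm_pred}, applied with $\wh B = \wh A$, by controlling the three random quantities $\wh r = \rank(\X P_{\wh A})$, $\wh \eta = n^{-1}\sigma_{\wh r}^2(\X P_{\wh A})$, and $\wh \psi = n^{-1}\sigma_1^2(\X P_{\wh A}^{\perp})$. Specifically, I would show that on an event of probability $1 - O(n^{-1})$ one has $\wh r = K$, $\wh \eta \gtrsim \lambda_K(A \C A^\top)$, and $\wh \psi \lesssim \psi_n(A_J)$, after which substitution into \eqref{bd_R_B_star}, combined with the signal condition $\lambda_K(A\C A^\top) \gtrsim \psi_n(A_J) \gtrsim \errw$ (since $\psi_n(A_J)$ contains the additive term $\|\Gamma\|_{\op}$, and $\errw \asymp \|\Gamma\|_{\op}$ under (A3)), collapses the bound to the claimed form: the factors $\|\Gamma\|_{\op}/\wh\eta$ and $\errw/\wh\eta$ are $O(1)$, the variance term becomes $(K + \log n)\sigma^2/n$ since $K \wedge \wh r = K$, and the bias term becomes $(\errw + \wh\psi)\beta^\top(A^\top A)^{-1}\beta \lesssim \psi_n(A_J)\beta^\top(A^\top A)^{-1}\beta$.

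The core work is establishing the three bounds on $\wh r, \wh \eta, \wh\psi$, and for this I would lean on the estimation guarantee \eqref{eqn_a_est}, namely $\|\wh A - A\|_{\op}^2 \le \|A_J\|_0 \log(n\vee p)/n$ with high probability, together with the deviation bound \eqref{bd_W_op} that gives $n^{-1}\|\W^\top\W\|_{\op} \le \errw$. First, since $\wh A$ is close to $A$ in operator norm and $A$ has rank $K$ with $\lambda_K(A^\top A)$ bounded below (under Assumptions~\ref{ass_model}--\ref{ass_C}, $\lambda_K(A\C A^\top) \asymp \lambda_K(A^\top A)$), a Weyl/Davis--Kahan-type perturbation argument shows $\rank(\wh A) = K$ and that $P_{\wh A}$ is close to $P_A$; then $\rank(\X P_{\wh A}) = K$ since $\X P_{\wh A}$ has $n \ge K$ rows and full column rank on the range, giving $\wh r = K$. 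Second, to lower-bound $\wh\eta$, I would write $\X P_{\wh A} = (A\Z^\top + \W^\top)^\top P_{\wh A}$ — wait, more carefully, $\X = \Z A^\top + \W$, so $\X P_{\wh A} = \Z A^\top P_{\wh A} + \W P_{\wh A}$; bounding $\sigma_K$ from below requires controlling $\sigma_K(\Z A^\top P_{\wh A})$ from below (the signal, of order $\sqrt{n \lambda_K(A\C A^\top)}$ using that $\Z^\top\Z/n$ concentrates around $\C$ and $A^\top P_{\wh A}$ is close to $A^\top$) and subtracting the perturbation $\|\W P_{\wh A}\|_{\op} \le \|\W\|_{\op} \lesssim \sqrt{n\errw}$; since $\lambda_K(A\C A^\top) \gtrsim \errw$ by hypothesis, the signal dominates and $\wh\eta \gtrsim \lambda_K(A\C A^\top)$. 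Third, for $\wh\psi$, I would use $\X P_{\wh A}^{\perp} = \Z A^\top P_{\wh A}^{\perp} + \W P_{\wh A}^{\perp}$; since $A^\top P_A^{\perp} = 0$, $A^\top P_{\wh A}^{\perp} = A^\top(P_A - P_{\wh A})$ has operator norm $\lesssim \|A\|_{\op}\|\wh A - A\|_{\op}/\sqrt{\lambda_K(A^\top A)}$, and $\|\Z A^\top P_{\wh A}^{\perp}\|_{\op}^2/n$ is then of order $\|\Gamma\|_{\op}$-free times $\|A_J\|_0 \log(n\vee p)/n$ up to constants, while $\|\W P_{\wh A}^{\perp}\|_{\op}^2/n \le \|\W\|_{\op}^2/n \lesssim \errw \asymp \|\Gamma\|_{\op}$ under (A3); combining gives $\wh\psi \lesssim \|A_J\|_0 \log(n\vee p)/n + \|\Gamma\|_{\op} = \psi_n(A_J)$.

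There is one subtlety the proof must address, flagged in the text: the master theorem's variance term carries a factor $\|\Gamma\|_{\op}\wh r / \wh\eta$, which here is $\|\Gamma\|_{\op} K / \lambda_K(A\C A^\top)$; for the final bound $(K+\log n)\sigma^2/n$ to hold as stated, one needs this to be $O(K)$, i.e.\ $\lambda_K(A\C A^\top) \gtrsim \|\Gamma\|_{\op}$, which is implied by the theorem's hypothesis $\lambda_K(A\C A^\top) \gtrsim \psi_n(A_J) \ge \|\Gamma\|_{\op}$ under (A3). The excerpt also mentions that a \emph{small improvement} is obtained by \emph{slightly adjusting the proof of Theorem~\ref{thm_pred}} rather than applying it verbatim; I read this as the need to re-run the fourth-term analysis of \eqref{decomp_error} — the error of estimating the range of $\Sigma^+ A$ — using the sharper perturbation $\|\wh A - A\|_{\op}$ from \eqref{eqn_a_est} rather than a generic bound, so that the $\|A_J\|_0$ sparsity gain survives into $\psi_n(A_J)$; this is the step I expect to be the main obstacle, since it requires reopening the master-theorem proof and tracking how $P_{\wh A}^{\perp}\Sigma^+A$ enters, rather than treating Theorem~\ref{thm_pred} purely as a black box. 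The remaining steps — the Weyl and Davis--Kahan perturbation bounds, the concentration of $\Z^\top\Z/n$ around $\C$, and the union bound over the $O(n^{-1})$-probability events for \eqref{eqn_a_est}, \eqref{bd_W_op}, and the $\Z$-concentration — are routine.
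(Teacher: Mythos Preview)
Your proposal has a genuine gap: you repeatedly use $\errw \asymp \|\Gamma\|_{\op}$ under Assumption~\ref{ass_model}(A3), but (A3) only says the diagonal of $\Gamma$ is bounded, so $\errw = c\bigl(\|\Gamma\|_{\op} + \tr(\Gamma)/n\bigr)$ still carries the $\tr(\Gamma)/n \asymp p/n$ contribution in the high-dimensional regime. This error propagates in three places. First, your lower bound on $\wh\eta$ subtracts $\|\W P_{\wh A}\|_{\op} \le \|\W\|_{\op} \lesssim \sqrt{n\errw}$ and then appeals to $\lambda_K(A\C A^\top) \gtrsim \errw$; but the theorem only assumes $\lambda_K(A\C A^\top) \gtrsim \psi_n(A_J)$, and $\psi_n(A_J) = \|\Gamma\|_{\op} + \|A_J\|_0\log(n\vee p)/n$ can be much smaller than $\errw$ when $p\gg n$ and $A_J$ is sparse. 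Second, your bound $\wh\psi \lesssim \psi_n(A_J)$ rests on $\|\W P_{\wh A}^\perp\|_{\op}^2/n \le \|\W\|_{\op}^2/n \lesssim \|\Gamma\|_{\op}$, which is again only $\errw$. Third, even granting your control of $\wh r$, $\wh\eta$, $\wh\psi$, applying Theorem~\ref{thm_pred} as a black box leaves the bias term $(1 + \|\Gamma\|_{\op}/\wh\eta)\,\errw\,\beta^\top(A^\top A)^{-1}\beta$ intact, which is the PCR-type rate, not the sharper $\psi_n(A_J)\,\beta^\top(A^\top A)^{-1}\beta$ that is the whole point of \eqref{rate_pred_A}.

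The needed modification is not merely reopening the fourth term of \eqref{decomp_error}; it is replacing every occurrence of $\|\W\|_{\op}$ inside the proofs of Lemmas~\ref{lem_t1} and~\ref{lem_t2} by a sharp bound on $\|\W P_{\wh A}\|_{\op}$. The key new ingredient is Lemma~\ref{lem_WPA}: on the high-probability event $\{\wh K = K,\ \wh A_I = A_I\}$, one splits $\W\wh A\wh A^+ = \W_I A_I\wh A^+ + \W_J\wh A_J\wh A^+$, and since $\Gamma$ is diagonal and $A_I(A_I^\top A_I)^{-1/2}$ has orthonormal columns, this yields $n^{-1}\|\W P_{\wh A}\|_{\op}^2 \lesssim \|\Gamma\|_{\op}\bigl(1 + \|A_J\|_{\ell_0/\ell_2}/n\bigr) =: \delta_{W,J}$, eliminating the $\tr(\Gamma)/n$ factor. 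In parallel, the paper bounds the \emph{vector} $\X P_{\wh A}^\perp A^{+\top}\beta$ directly --- writing $\W P_{\wh A}^\perp A^{+\top}\beta = \W A^{+\top}\beta - \W P_{\wh A}A^{+\top}\beta$ and using $n^{-1}\|\W A^{+\top}\beta\|^2 \lesssim \beta^\top A^+\Gamma A^{+\top}\beta \le \|\Gamma\|_{\op}\,\beta^\top(A^\top A)^{-1}\beta$ --- rather than the crude product $\|\X P_{\wh A}^\perp\|_{\op}\,\|A^{+\top}\beta\|$. Without these two refinements your argument recovers only the PCR rate of Corollary~\ref{cor_PCR_delta_w}, not \eqref{rate_pred_A}.
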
		
    %We point out that whenever the conditions of Theorem \ref{thm_pred_A} are met, there exists a unique $\theta$ which is identifiable from $\Cov(X)$, $\Cov(X,Y)$ and $\Cov(Y)$ \citep{ER}.
	
\begin{remark}
	$\ $
    \begin{enumerate}
        \item  We note that the bound (\ref{rate_pred_A}) depends on $\|A_J\|_0$, which in turn depends on the \textit{number} of non-pure variables, and the \textit{sparsity} of the rows of $A$ corresponding to these non-pure variables. The rate indicates that prediction based on $\wh A$ will perform best when the number of pure variables is large, and any non-pure variable $X_i$, the $i$th component of $X$, only depends on   a small number of latent variables. We give, in the following section,   a simplified form of this bound, and compare this prediction scheme  with the other methods discussed in this work. 
        \item  The  identifiable factor model $X = AZ + W$, with $A$ satisfying Assumption \ref{ass_model}, has been used in \cite{LOVE} to construct overlapping clusters of the components on $X$. The latent factors can be viewed as random cluster centers, while a sparse matrix $A$ gives the cluster membership. From this perspective, and in light of the discussion leading up to the predictor construction, one  can view 	$\R(\wh A)$ as the risk of predicting $Y$ from predicted cluster centers, on the basis of data that exhibits a latent cluster structure with overlap. 
    \end{enumerate}
\end{remark}

	\subsection{Comparison of simplified prediction risks}\label{sec_compare}
	
	In this section we offer a comparison of the prediction risk of the predictors analyzed above. 
	For a transparent comparison, we compare them under an identifiable factor regression model. To this end, 
	we consider  the Essential Regression framework as a data generating mechanism under which we compare  PCR-$k$, with known $k = K$, the GLS predictor ($\wh B =\bI_p$), and the Essential Regression predictor ($\wh B = \wh A$), based on Corollary \ref{cor_PCR_delta_w}, Remark \ref{rem_K}, Corollary \ref{cor_GLS} and Theorem \ref{thm_pred_A}, respectively.  The notation $a_n \lessapprox b_n$ stands for $a_n =O(b_n)$ up to a multiplicative logarithmic factor in $n$ or $p$. 
	
	For ease of comparison, we consider the simplified setting in which $\lambda_K(A^\T A) \gtrsim p / K$,\footnote{This is met for instance when all $X$'s are pure variables and the numbers of pure variables for all groups are balanced in the sense that $|I_k| \asymp |I| / K$. Another instance such that $\lambda_K(A^\T A)\gtrsim p / K$ holds with high probability is that $|I_k| \asymp |I| / K$ and the rows of $A_J$ are i.i.d. realizations of a sub-Gaussian random vector whose second moment has operator norm bounded by $1/K$. The factor $1/K$ takes $(A0)$ in Assumption \ref{ass_model} into account.} $\|\beta\|_2 \le R_{\beta}$ and $r_e(\Gamma) \asymp p$, and focus on the high-dimensional regime where $p > Cn$ for a large enough constant $C>0$. We have 
	\begin{equation}\label{three_risks}
	\begin{aligned}
		\R(\U_K)-\sigma^2 ~&\lessapprox ~ {K\over n}\sigma^2 + {K\over p}\|\sw\|_\op R^2_{\beta} + {K\over n}\|\sw\|_\op R^2_{\beta} \\
		\R(\wh A)-\sigma^2  ~ &\lessapprox ~ {K\over n}\sigma^2+ {K\over p}\og R^2_{\beta}+ {K\|A_J\|_0 \over np}\og R^2_{\beta}\\
		\R(\bI_p)-\sigma^2   ~&\lessapprox ~  {K\over n}\sigma^2 +
		{n \over p}\sigma^2+  {K \over n} \|\sw\|_\op R^2_{\beta}
	\end{aligned}
	\end{equation}
	 Since the Essential Regression predictor is an instance of  model based prediction, we comment on when the two model agnostic predictors are competitive, under this particular model specification.

	We begin with a comparison between $\R(\U_K)$ and $\R(\wh A)$, and note that the difference in their respective errors bounds  depends on the sparsity of $A_J$. The risk bound on $\R(\U_K)$ is valid for any $\theta$ such that $(X,Y)\sim \textrm{sG-FRM}(\theta)$, and is in particular valid for $\theta$ satisfying the additional Essential Regression constraints. Our results show that while PCR-$K$ prediction is certainly a valid choice under this particular model set-up, it could be outperformed by the model tailored predictor.
	If each row of $A_J$ is sparse such that $\|A_J\|_0\asymp |J|$, then $\R(\wh A)$ has a faster rate. This advantage becomes considerable if $|J| = o(p)$, that is, in the presence of a growing number of pure variables.  
	However, if $A_J$ is not sparse such that $\|A_J\|_0 \asymp |J| K$, and $|J| \asymp p$, then $\R(\wh A)$ has  a slower rate than $\R(\U_K)$. Nevertheless, from a practical perspective, conditions on the sparsity of $A$ ($\|A_J\|_0 \asymp |J|$) simply mean that not all $p$ variables in the vector $X$ contribute to explaining a particular $Z_k$, for each $k$, which is the main premise of Essential Regression. Furthermore, in this risk bound comparison, $\R(\wh A)$  corresponds to $\wh A \in \RR^{p\times \wh K }$, for an appropriate, fully data dependent,  estimator $\wh K$ of the identifiable dimension $K$. In order to employ a fully data driven PCR prediction, corresponding to an estimated $K$, we would also need the delicate step of estimating it described in Section \ref{sec_pred_pcr} above. The risk bound above will then hold under conditions discussed in Remark \ref{rem_K}.
	
	Finally, the much simpler GLS interpolating predictor has a bound that compares favorably to the other agnostic predictor, PCR-$K$,  only  when $n / p$ is small enough, for instance, $p > n^2 / K$. This extra term $\sigma^2n/p$ in the bound for $\R(\bI_p)$ compared to the bound for PCR-$K$, is due to the additional variance induced by the usage the full data matrix $\X$, as opposed to the first $K$ principal components, which may already capture the majority of the signal.

	\section{Predictor selection  via data splitting}\label{sec_data_split}
	
	Whenever a factor regression model can be assumed to generate a given data set, but it is unclear what further model specifications are in place, one can, in principle, construct several predictors, some model agnostic and some tailored to prior beliefs. In this section we address the problem of choosing among a set of candidate predictors for a given data set  that is assumed to be generated by a factor regression model.  Suppose we have $M$ linear predictors with respective coefficients $\wh \alpha_1,\ldots,\wh \alpha_M$ that we want to choose from. For ease of presentation, in this section assume $n$ is divisible by $2$. Let $D_1$ be a subset of $[n]$ with $|D_1| = n/2$, and let $D_2 = [n]\setminus D_1$. Define 
	\begin{equation}\label{eqn:m hat}
	    \wh m \coloneqq \arg\min_{m\in [M]} %\frac{2}{n}
	    \sum_{i\in D_2}(Y_i - X_i^\top \wh\alpha_m)^2,
	\end{equation}
	where for each $m\in [M]$, $\wh\alpha_m$ is trained on the data set $\{(X_i,Y_i):i\in D_1\}$ and is thus independent of $\{(X_i,Y_i):i\in D_2\}$. We then use $\wh\alpha \coloneqq \wh\alpha_{\wh m}$ as our predictor, for which we establish the following oracle inequality, which is an adaptation of Theorem 2.1 from \cite{wegkamp2003}  to factor regression models and unbounded linear predictors. Moreover, we provide  a  high-probability statement, as opposed to a bound on the expected risk as in \cite{wegkamp2003}. The proof is deferred to Appendix \ref{proof:split}.
	\begin{thm}\label{thm:split}
	Let $\wh\alpha\coloneqq \wh\alpha_{\wh m}$, where $\wh m$ is defined in (\ref{eqn:m hat}). Then for any $\theta = (K,A,\beta,\C,\Gamma, \sigma^2)$ such that $(X,Y)\sim \textrm{sG-FRM}(\theta)$, there exist absolute constants $c,c'>0$ and a constant $c_0 = c_0(\gamma_w,\gamma_z,\gamma_\eps)>0$ such that when $n > c\log(M)$ and for any $a>0$,
	\begin{align}\label{eqn_split}\nonumber
	   \PP_\theta\bigg\{ \R(\wh \alpha) -\sigma^2\le (1+a)^2 &\min_{m\in [M]} \{\R(\wh \alpha_m)-\sigma^2\}\\
	   &+ C(a)\left(\sigma^2\vee \max_{m\in [M]} \{\R(\wh \alpha_m)-\sigma^2\} \right )\frac{\log(nM)}{n}\bigg\}\ge 1-c'n^{-1},
	\end{align}
	where $C(a)=c_0 (1+a)^3/a$.
	\end{thm}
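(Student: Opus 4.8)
The plan is to condition throughout on the first half-sample $\{(X_i,Y_i):i\in D_1\}$, so that each $\wh\alpha_m$ is deterministic, and then to carry out a hold-out model-selection argument in the spirit of \cite{wegkamp2003}, with two modifications forced by the present setting. The linear predictors $X_i^{\top}\wh\alpha_m$ are unbounded, so I would replace the bounded-difference/Hoeffding bounds of \cite{wegkamp2003} by Bernstein's inequality for sub-exponential summands; and, in order to keep the leading constant at $(1+a)^2$ and obtain the fast remainder rate $\log(nM)/n$ (rather than $\sqrt{\log(nM)/n}$), I would bound the fluctuation of the held-out empirical risk around its mean by a \emph{small fraction of the excess risk itself} plus a lower-order term --- a self-bounding effect relying on hypercontractivity of the sub-Gaussian vectors $Z,W,\eps$, and which also delivers the claimed high-probability (as opposed to in-expectation) statement.

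Concretely, set $L_m:=\R(\wh\alpha_m)-\sigma^2$ and fix $m_\star\in\argmin_{m\in[M]}L_m$. By the argument behind (\ref{decomp_full}) (using $Y_*=Z_*^{\top}\beta+\eps_*$ and $\eps_*\perp(X_*,Z_*)$) one has $L_m=\EE_\theta[(Z_*^{\top}\beta-X_*^{\top}\wh\alpha_m)^2\mid D_1]$. Put $\rho_i^{(m)}:=Z_i^{\top}\beta-X_i^{\top}\wh\alpha_m$ for $i\in D_2$, so that $Y_i-X_i^{\top}\wh\alpha_m=\rho_i^{(m)}+\eps_i$ with $\EE[\rho_i^{(m)}\eps_i\mid D_1]=0$ and $\EE[(\rho_i^{(m)})^2\mid D_1]=L_m$. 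Expanding the objective in (\ref{eqn:m hat}) and normalizing by $|D_2|=n/2$,
\begin{equation*}
\tfrac{2}{n}\sum_{i\in D_2}(Y_i-X_i^{\top}\wh\alpha_m)^2=\sigma^2+L_m+T_1^{(m)}+T_2+T_3^{(m)},
\end{equation*}
where $T_1^{(m)}:=\tfrac{2}{n}\sum_{i\in D_2}\{(\rho_i^{(m)})^2-L_m\}$, $T_3^{(m)}:=\tfrac{4}{n}\sum_{i\in D_2}\eps_i\rho_i^{(m)}$, and $T_2$ does not depend on $m$. Since $\wh m$ minimizes the left-hand side, $T_2$ cancels and I get the basic inequality $L_{\wh m}\le L_{m_\star}+(T_1^{(m_\star)}-T_1^{(\wh m)})+(T_3^{(m_\star)}-T_3^{(\wh m)})$.

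The crux is a uniform-in-$m$ control of $|T_1^{(m)}|$ and $|T_3^{(m)}|$. From Definition \ref{frm}, $\rho_i^{(m)}$ is a linear functional of the independent sub-Gaussian vectors $Z_i,W_i$, hence is $C(\gamma_z,\gamma_w)\sqrt{L_m}$-sub-Gaussian; consequently $(\rho_i^{(m)})^2-L_m$ is centered sub-exponential with $\psi_1$-norm $\lesssim L_m$, and $\eps_i\rho_i^{(m)}$ is centered sub-exponential with $\psi_1$-norm $\lesssim\sigma\sqrt{L_m}$. Bernstein's inequality over the $n/2$ i.i.d.\ summands, with a union bound over $m\in[M]$ at deviation level $\asymp\log(nM)$ (the hypothesis $n>c\log M$ ensuring $\log(nM)\lesssim n$), would give: on an event of conditional probability $\ge 1-c'/n$ and simultaneously for all $m$,
\begin{equation*}
|T_1^{(m)}|\lesssim L_m\Bigl(\sqrt{\tfrac{\log(nM)}{n}}\vee\tfrac{\log(nM)}{n}\Bigr),\qquad |T_3^{(m)}|\lesssim\sigma\sqrt{L_m}\Bigl(\sqrt{\tfrac{\log(nM)}{n}}\vee\tfrac{\log(nM)}{n}\Bigr).
\end{equation*}
Writing $B:=\max_{m}L_m$ and using $\sqrt{xy}\le\tfrac{1}{2\lambda}x+\tfrac{\lambda}{2}y$ (applied with $x=L_m$ and $y\in\{L_m\log(nM)/n,\ \sigma^2\log(nM)/n\}$, then $L_m\le B\le\sigma^2\vee B$) together with $\sqrt{xy}\le x\vee y$, each deviation term is at most $\tfrac{C}{\lambda}L_m+C\lambda(\sigma^2\vee B)\tfrac{\log(nM)}{n}$ for an arbitrary $\lambda>0$. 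Inserting this into the basic inequality and choosing $\lambda$ a large enough multiple of $(1+a)/a$ makes the coefficient multiplying $L_{\wh m}+L_{m_\star}$ equal to $a/(1+a)<1$; rearranging and using $\tfrac{1+a/(1+a)}{1-a/(1+a)}=1+2a\le(1+a)^2$ yields $L_{\wh m}\le(1+a)^2L_{m_\star}+C(a)(\sigma^2\vee B)\tfrac{\log(nM)}{n}$ with $C(a)\lesssim(1+a)^3/a$. Since $L_{\wh m}=\R(\wh\alpha)-\sigma^2$, $L_{m_\star}=\min_m\{\R(\wh\alpha_m)-\sigma^2\}$ and $B=\max_m\{\R(\wh\alpha_m)-\sigma^2\}$, this is exactly (\ref{eqn_split}); integrating over $D_1$ removes the conditioning and gives the $1-c'n^{-1}$ unconditional bound.

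The main obstacle is this self-bounding step: one genuinely needs the variance proxies of $T_1^{(m)}$ and $T_3^{(m)}$ to scale with $L_m$ itself (not with a crude uniform envelope), because it is precisely this that simultaneously produces the multiplicative constant $(1+a)^2$ and the remainder of order $\log(nM)/n$ instead of $\sqrt{\log(nM)/n}$; this is where the sub-Gaussian hypothesis on $(Z,W,\eps)$ substitutes for the boundedness assumed in \cite{wegkamp2003}. The remaining points are routine: verifying that $L_m$, $B$, and $\R(\wh\alpha)$ are measurable with respect to the appropriate sub-samples so that the conditioning is legitimate, and checking that the constant in $n>c\log M$ can be kept absolute while the sub-Gaussian constants only enter through $C(a)=c_0(\gamma_w,\gamma_z,\gamma_\eps)(1+a)^3/a$.
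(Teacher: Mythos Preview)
Your proposal is correct and follows essentially the same route as the paper: condition on the training half, decompose the held-out risk difference into a centered-squares fluctuation and an $\eps$-cross term, control both uniformly over $m$ via Bernstein (the $\psi_1$-norms scaling with $L_m$ and $\sigma\sqrt{L_m}$ respectively, thanks to the sub-Gaussianity of $Z,W,\eps$), and then self-bound with AM--GM to obtain the $(1+a)^2$ constant and the $\log(nM)/n$ remainder. The paper's bookkeeping differs only cosmetically---it inserts the $(1+a)$ factors \emph{before} applying the optimality of $\wh m$, producing four terms $T_1,\ldots,T_4$ and handling the self-bounding through the implication $S-\wh S\le\sqrt{t}\sqrt{S}\Rightarrow S\le(1+b)\wh S+t(1+b)/b$---but the substance is identical.
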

% 	\begin{proof}
% 	Refer to 
% 	\end{proof}

In the bound above, the worst excess risk $\max_m \{\R(\wh\alpha_m) - \sigma^2\}$ appears in the remainder term, which may appear unusual. Most model-selection oracle inequalities either are formulated as a bound on the empirical risk, or assume that the predictors are uniformly bounded, or both, and as a result do not contain a term of this form. The bound we give is for the prediction risk on new data, and for unbounded loss and predictors, since $\sup_{\alpha}(X^\top \alpha - y)^2 = \infty$. For the bound to be useful, it thus must be the case that none of the $M$ predictors has risk that grows too fast. In particular, if the risks of all $M$ predictors are bounded above in high probability, then the second term in (\ref{eqn_split}) will be $O(\log n/n)$ and thus typically subdominant. 

As an illustration, we can use this data-splitting procedure with $M=3$ and the three prediction methods discussed in Section \ref{sec_compare}. If the three excess risks in (\ref{three_risks}) are all $O(1)$,  which is met under the conditions discussed in detail in Section \ref{sec_compare}, then the bound (\ref{eqn_split})  becomes
\[\R(\wh \alpha) - \sigma^2 \lesssim (1+a)^2\min\bigg(\R(\U_K)-\sigma^2,\ \R(\wh A)-\sigma^2 , \  \R(\bI_p)-\sigma^2\bigg) + C(a)\sigma^2{\log n \over n}.\]
We further confirm the ability of the data-splitting approach to adapt to the best-case risk via simulations in Section \ref{sec_sims} below.

% \R(\U_K)-\sigma^2 ~&\lessapprox ~ {K\over n}\sigma^2 + {K\over p}\|\sw\|_\op R^2_{\beta} + {K\over n}\|\sw\|_\op R^2_{\beta} \\
% 		\R(\wh A)-\sigma^2  ~ &\lessapprox ~ {K\over n}\sigma^2+ {K\over p}\og R^2_{\beta}+ {K\|A_J\|_0 \over np}\og R^2_{\beta}\\
% 		\R(\bI_p)-\sigma^2   ~&\lessapprox ~  {K\over n}\sigma^2 +
% 		{n \over p}\sigma^2+  {K \over n} \|\sw\|_\op R^2_{\beta}

On a practical note, we remark that the splitting procedure can be repeated several times with random splits to obtain estimates $\wh \alpha^{(1)},\ldots,\wh\alpha^{(N)}$ that can be used to construct the average $N^{-1}\sum_{i=1}^N\wh \alpha^{(i)}$. This aggregate coefficient vector satisfies the same risk bound (\ref{eqn_split}) by convexity of the loss, while this approach in practice could alleviate some of the bias induced by the choice of split for the data.\\

	\section{Simulations}\label{sec_sims}
	
	In this section, we complement and support our theoretical findings with simulations, focusing on the prediction performance of candidate predictors under both the  generic factor regression model and  the Essential Regression framework.
	
	\paragraph{Candidate predictors:} We consider the following list of predictors:
	
	\begin{itemize}[leftmargin = 8mm]
		\setlength\itemsep{0mm}
		\item PCR-$\wt s$ with $\wt s$ obtained from (\ref{est_K}) with $\mu_n = 0.25(n+p)$;
%		\item PCR-$\wh k$ with $\wh k$ selected from the procedure in Section \ref{sec_pcr_ds} with $C_0 = 0.1(1+p/n)$;{\color{red} Erase here and everywhere,  as we erase that section}
		\item {PCR-$K$}: the principal component regression (PCR) predictor using the true $K$;
		\item {PCR-ratio}: PCR with $k$ selected via the criterion proposed in \cite{lam2012,Ahn-2013}; \footnote{We have also implemented the selection criterion  suggested by  \cite{Bai-Ng-K}, but it had inferior performance, and is for this reason not included in our comparison here.}
		\item{GLS}: the Generalized Least Squares predictor defined in (\ref{def_GLS});
		\item {ER-A}: the Essential Regression predictor with $\wh B = \wh A$ in (\ref{def_pred_B_intro});%\footnote{We also considered $\wh B = \wh\Theta$ and since these two have similar performance, we only present the result of $\wh Y^*_{\wh A}$.}
		\item {Lasso}: implemented in {\textsf{glmnet}} with the tuning parameter chosen via cross-validation;
		\item {Ridge}: implemented in {\textsf{glmnet}} with the tuning parameter chosen via cross-validation;
		\item {MS}: the selected predictor from (\ref{eqn:m hat}) in Section \ref{sec_data_split}.
	\end{itemize}

	Both Lasso and Ridge are included for comparison. The Lasso is developed for predicting $Y$ from $X$ when we expect that the best predictor of $Y$ is well approximated by a sparse linear combination of the components of $X$. Under our model specifications, the best linear predictor of $Y$ from $X$ is given by 
	\[	X^{\top}  \alpha^* = X^{\top}     [ \Cov(X) ]^{-1}\Cov(X,Y) 
	= X^{\top}     \Gamma^{-1} A\left[ \C^{-1} + A^{\top}      \Gamma^{-1}A \right]^{-1}\beta,
	\]
	where the last step follows from the factor model (\ref{main_model}) and an application of the Woodbury matrix identity. Although $\alpha^*$ is not sparse in general, we observe that $\|\alpha^*\|_2^2 \le \beta^{\top}     [\C^{-1} + A^{\top}      \Gamma^{-1} A]^{-1}\beta$. Hence its $\ell_2$-norm may be small if  $\|\Gamma\|_{\op}\beta^\T(A^\T A)^{-1}\beta$ is small. Our simulation design allows for these possibilities.  
	
	\paragraph{Data generating mechanism:} We first describe how we generate $\C$, $\Gamma$, and $\beta$. To generate $\C$, we set $\textrm{diag}(\C)$ to a $K$-length sequence from 2.5 to 3 with equal increments. The off-diagonal elements of $\C$ are then chosen as 
	$[\C]_{ij} = (-1)^{(i+j)}([\C]_{ii} \wedge [\C]_{jj})(0.3)^{|i-j|}$ for all $i\ne j \in [K]$. Finally, $\Gamma$ is chosen as a diagonal matrix with diagonal elements sampled from $\textrm{Unif}(1, 3)$, and $\beta$ is generated with entries sampled from Unif$(0,3)$.
	
	Generating $A$ depends on the modeling assumption. Under the factor regression model, we sample each entry of $A$ independently from $N(0, 1/\sqrt{K})$. Under the Essential Regression setting, recall that $A$ can be partitioned into $A_I$ and $A_J$ which satisfy Assumption \ref{ass_model}. To generate $A_I$, we set $|I_k| = m$ for each $k \in [K]$ and choose $A_I = \bI_K \otimes {\bm 1}_m$, where $\otimes$ denotes the kronecker product. Each row $A_{j\sbt}$ of $A_J$ is generated by first randomly selecting its support with cardinality $s_j$ drawn from $\{2,3\ldots,\lfloor K/2 \rfloor\}$ and then by sampling its non-zero entries from $\textrm{Unif}(0,1/s_j)$ with random signs. In the end, we rescale $A_J$ such that the $\ell_1$ norm of each row is no greater than 1.

	Finally, we generate the $n\times K$ matrix $\Z$ and the $n\times p$ noise matrix $\W$ whose rows are i.i.d.\ from $N_K(0, \C)$ and $N_p(0, \Gamma)$, respectively. We then set $\X = \Z A^{\top}      + \W$ and $\Y = \Z\beta + \Eps$ where the $n$ components of $\Eps$ are i.i.d.~$N(0, 1)$. 
	
	For each setting, we generating 100 repetitions of $(\X, \Y)$ and record their corresponding results. The performance metric is based on the new data prediction risk. To calculate it, we independently generate a new dataset $( \X_{new}, \Y_{new})$ containing $n$ i.i.d.\ samples drawn according to our data generating mechanism. The prediction risk of the predictor $\wh \Y_{new}$ is calculated as $\|\wh \Y_{new} - \Z_{new}\beta\|^2/n$.

	\subsection{Prediction under the factor regression model}
	
	We compare the performance of PCR-$\wt s$, PCR-$K$, PCR-ratio, GLS, Lasso, Ridge and MS by varying $p$, $K$ and the signal-to-noise ratio (SNR) $\xi$ defined in (\ref{def_snr}), one at a time. The MS predictor is based on (\ref{eqn:m hat}) over all the aforementioned methods. 
	
	We first set $n = 300$, $K = 5$ and vary $p$ from $\{100, 300, 700, 1500, 3000, 5000\}$, then choose $n = 300$, $p = 500$ and vary $K$ from $\{3, 5, 10, 15, 20\}$. The prediction risks of different predictors for these two settings are shown in Figure \ref{fig_FR_pK}. Since both PCR-$\wt s$ and PCR-ratio consistently select the true $K$, we only present the result for PCR-$K$. 
	
		\begin{figure}[ht]
		\centering
		\begin{tabular}{ccc}
			\hspace{-4mm}
			\includegraphics[width=.4\textwidth]{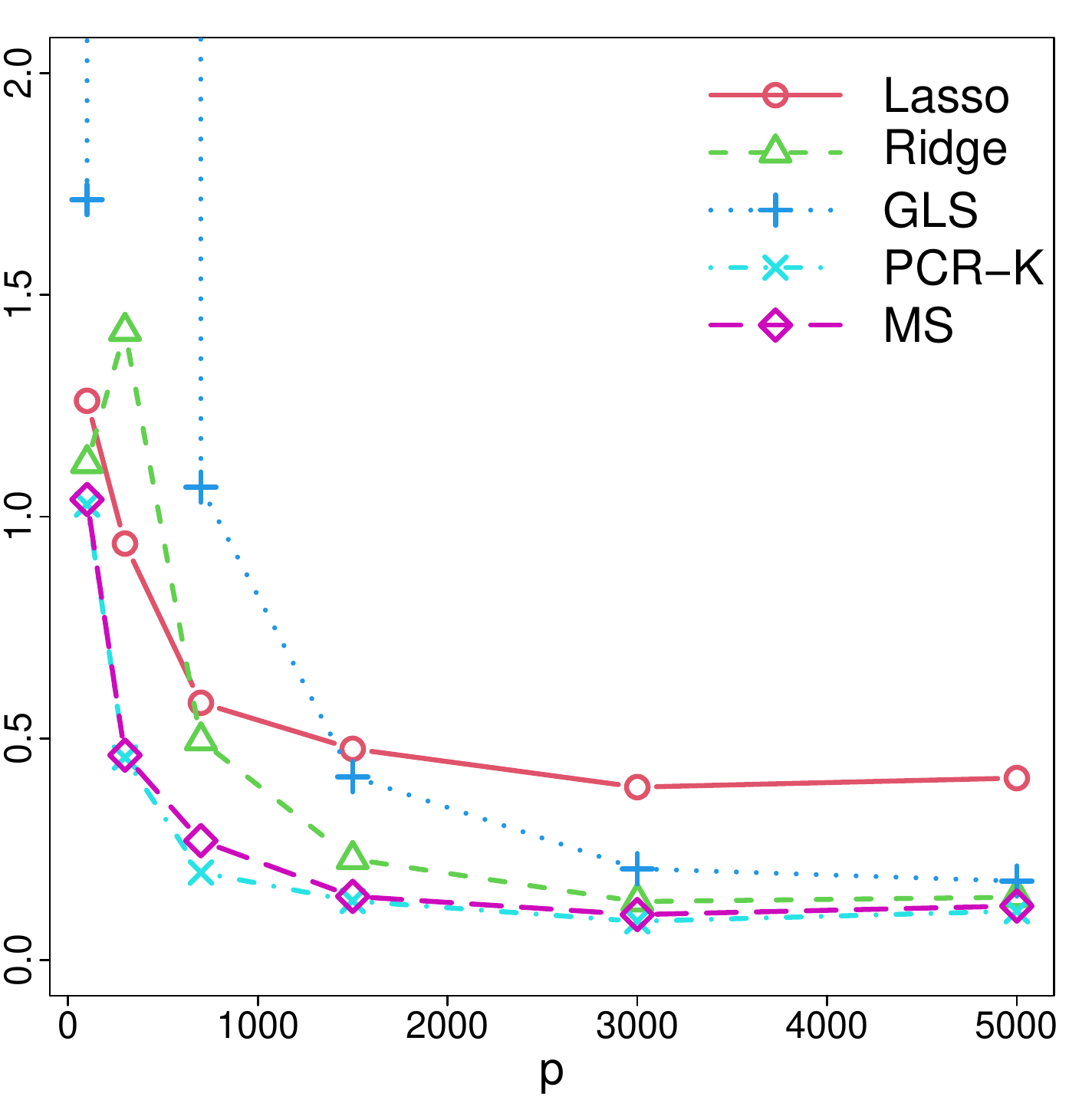}& \includegraphics[width=.4\textwidth]{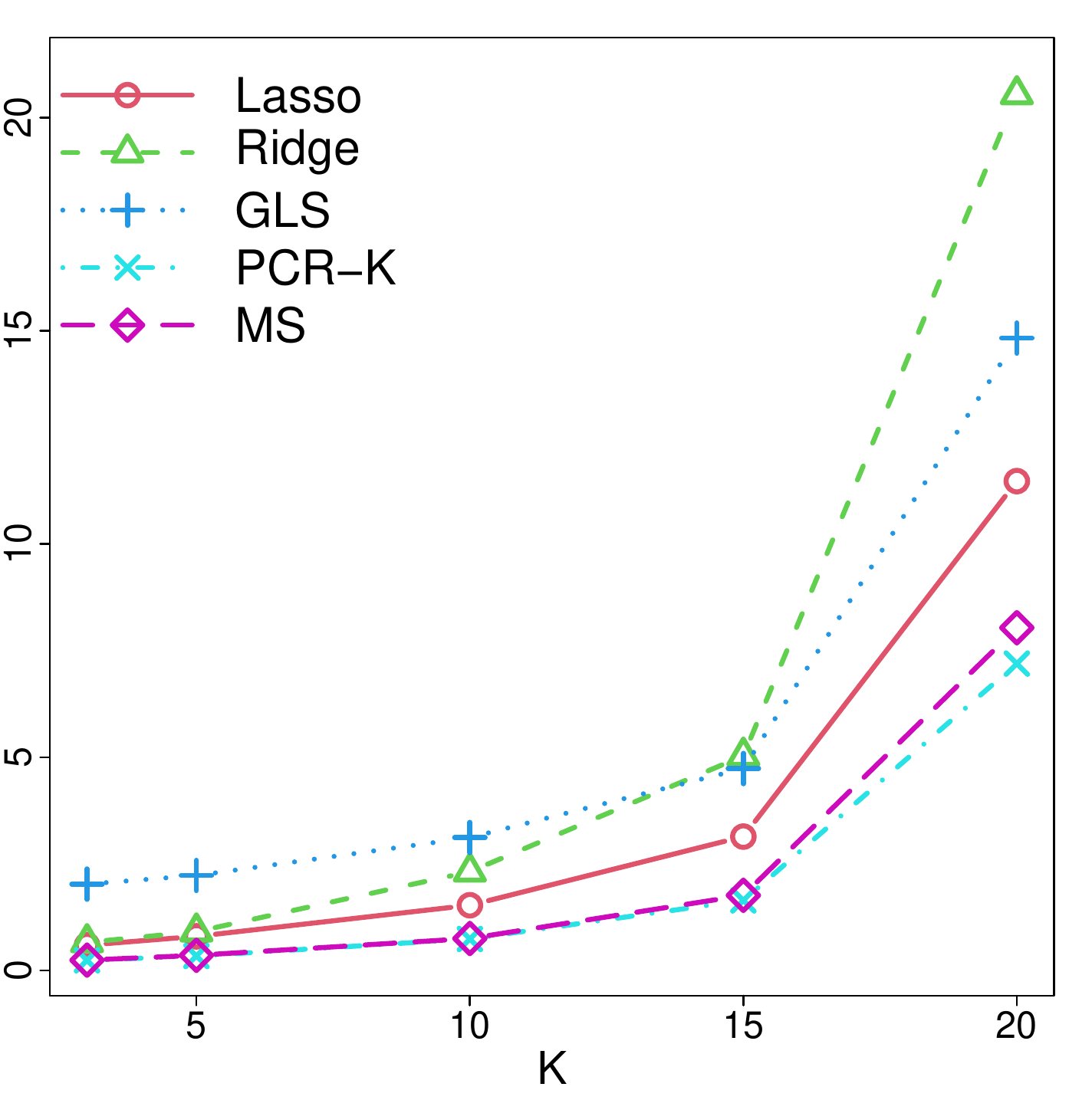}
		\end{tabular}
		\caption{Prediction risks of different predictors under the factor regression model as $p$ and $K$ vary separately}
		\label{fig_FR_pK}
	\end{figure}
	
	\paragraph{Results:}
	
	Overall, it is clear that the MS predictor selects the best predictor in almost all settings, corroborating Theorem \ref{thm:split}. Meanwhile, PCR-$K$ has the best performance in all settings as it is tailored to the factor regression model. %PCR-$\wh k$ has very close performance to PCR-$K$, aligned with Theorem \ref{thm_data_split_PCR}.
	
	From the first panel, all methods perform better as $p$ increases (with exceptions given to GLS and Ridge when $p\approx n=300$). This contradicts the classical understanding that having more features increases the degrees of freedom of the model, hence inducing larger variance. By contrast, in our setting, increasing the number of features provides information that can be used to predict $A$.  This can be seen from the minimal excess risk in Lemma \ref{lem_pop_X} by noting that $\lambda_K(A^\T A)$ increases as $p$ increases.
	This phenomenon has been observed in the classical factor (regression) model, see, for instance,  \cite{SW2002_JASA, Bai-factor-model-03, Bai-Ng-forecast,Bai-Ng-CI, fan2013large} and the references therein.
	
	Perhaps more interestingly, when $p$ is much larger than $n$, GLS and Ridge have performance similar to PCR-$K$. This demonstrates our conclusions in Section \ref{sec_compare} that GLS and PCR-$K$ are comparable when $p \gg n$. We also note from our simulation that Ridge tends to select near-zero regularization parameter when $p \gg n$, whence Ridge essentially reduces to GLS \citep{hastie2019surprises}. In contrast to GLS and Ridge, the performance of Lasso stops improving after $p>2500$. 
	When $p$ is moderately large (say $p < 1000$), GLS and Ridge have larger errors than PCR-$K$ and Lasso. In particular, if 
	$p$ is close to $n$, the error of GLS diverges, a phenomenon observed in \cite{hastie2019surprises}, for example, under the linear model.
	
	From the second panel, the  prediction error for all methods deteriorates as $K$ increases. This indicates that prediction becomes more difficult for large $K$, supporting our results in Sections \ref{sec_pred_pcr} and \ref{sec_pred_GLS_ER}. We also note that the performance of Ridge deteriorates faster than the other methods when $K$ grows.\\
	
	\medskip

	To further demonstrate how different predictors behave as the signal-to-noise ratio (SNR) changes, we multiply $A$ by a scalar $\alpha$ chosen within $\{0.1, 0.13, 0.16,\cdots,0.37, 0.40\}$. We set $n = 300$, $p = 500$ and $K =5$. For each $\alpha$, we calculate the SNR and plot the prediction risks of each predictor in Figure \ref{fig_FR_snr}.
	
	\begin{figure}[ht]
	    \centering
	    \includegraphics[width=.4\textwidth]{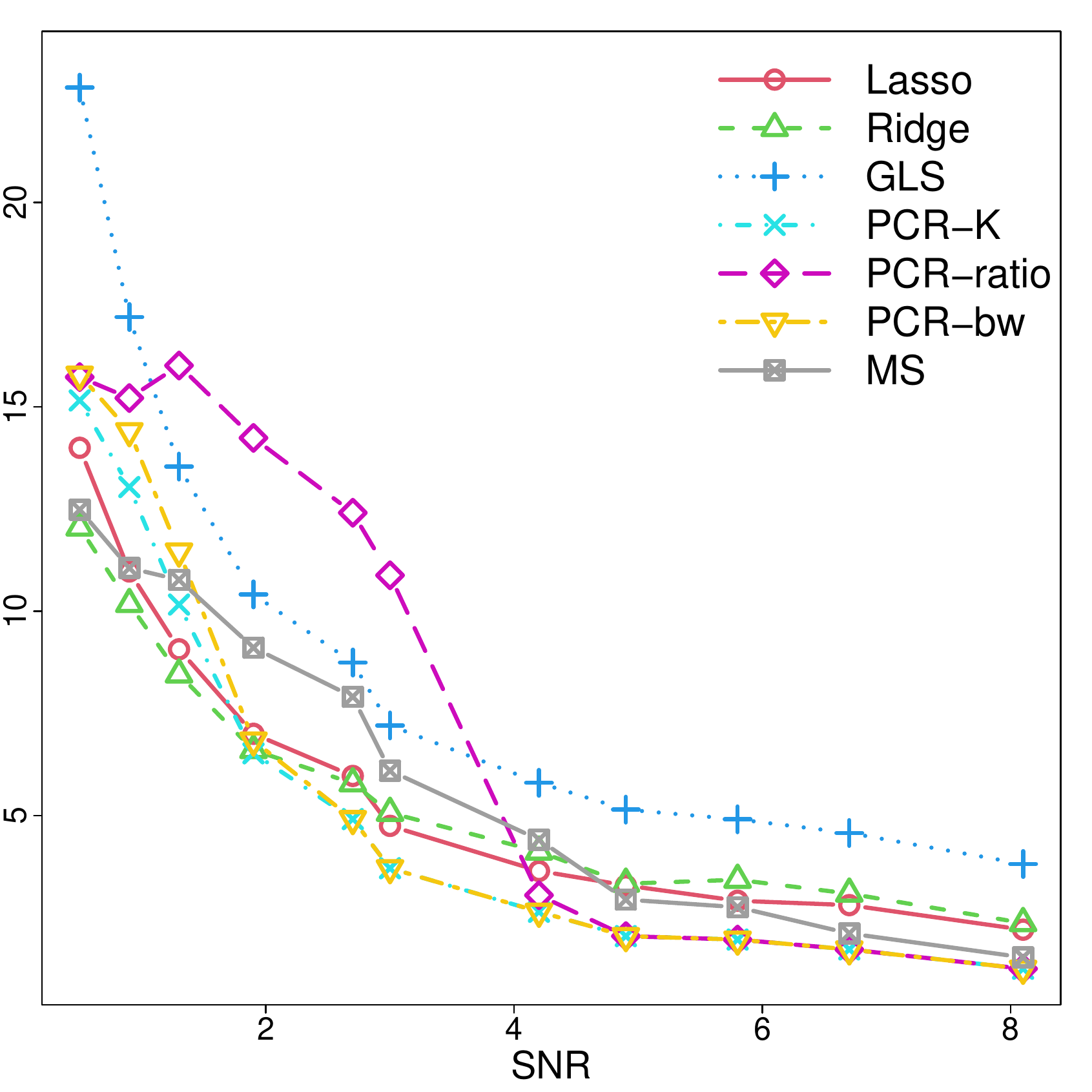}
	    \caption{Prediction risks of different predictors under the factor regression model as SNR varies}
	    \label{fig_FR_snr}
	\end{figure}
	
    \paragraph{Results:} 
	%When $W\ne 0$, we do not expect the Lasso to perform well. It is thus surprising that in some regimes its prediction performance is not worse than the methods tailored to this model. The reason is that $\|\wt\theta\|$ is small in some settings. However, the gap between the Lasso and the other methods increases with the size of $\|\beta\|$, as further illustrated  in Section \ref{subsec_W_0}.
    As expected, all methods perform worse as the SNR decreases. MS has consistently selected the (near) best predictor. When the SNR is small (less than 2), Ridge has the best performance. As soon as the SNR exceeds $2$,  PCR-K and PCR-$\wt s$ start to outperform the other methods. 
    In terms of selecting $K$, when the SNR is larger than $2$, PCR-$\wt s$ starts estimating $K$ consistently whereas PCR-ratio fails until the SNR is greater than $4$. Both PCR-$\wt s$ and PCR-ratio tend to under-estimate $K$ in the presence of a small SNR. However, PCR-$\wt s$ selects $\wt s$ closer to $K$ than PCR-ratio, leading to better performance. Moreover, the loss due to using $\wt s<K$ by PCR-$\wt s$ is not significant, in line with Corollary \ref{cor_PCR_s_tilde} and Remark \ref{rem_K}.

	\subsection{Prediction under the Essential Regression model}
	We compare all the predictors when data is generated from an  Essential Regression model. To vary $p$ and $K$ individually, we first set $n=300$, $K =5$, $m = 5$ and choose $p$ from $\{100, 300, 500, 700, 900\}$, then fix $n = 300$, $p=500$, $m = 5$ and vary $K$ in $\{3, 5, 10, 15, 20\}$. The prediction risks of different predictors are shown in Figure \ref{fig_ER_pK}. PCR-$\wt s$ and PCR-ratio are not included as they have almost the same performance as PCR-$K$. As it was demonstrated under the factor regression setting that GLS is outperformed by the other predictors when $p$ is not large enough, we also excluded its performance from the plot. 

	\paragraph{Summary:} We observe the same phenomenon as before, that is: (1) all predictors benefit from large $p$; (2) as $K$ increases, the performance of all predictors deteriorate. Furthermore, the model-based ER predictor has similar performance as the model-free PCR predictor when $K$ is small. The advantage of ER over PCR enlarges as $K$ grows. This is aligned with our theoretical findings in Section \ref{sec_compare} that ER benefits from the sparsity of $A_J$, because our data generating mechanism ensures that the larger $K$ is, the sparser $A_J$ becomes.\\

	\begin{figure}[ht]
		\centering
		\begin{tabular}{ccc}
			%\hspace{-4mm}
			\includegraphics[width=.4\textwidth]{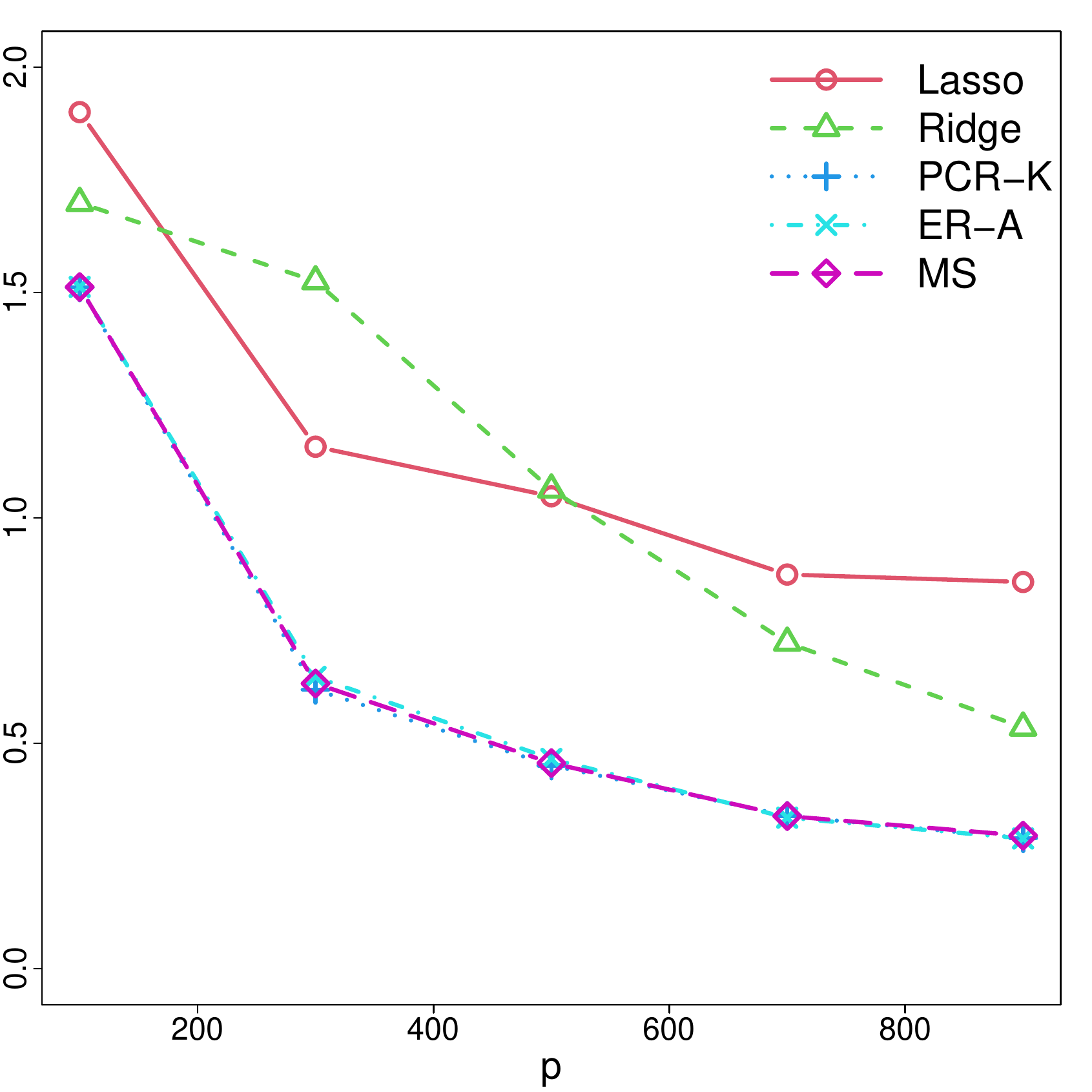}& \includegraphics[width=.4\textwidth]{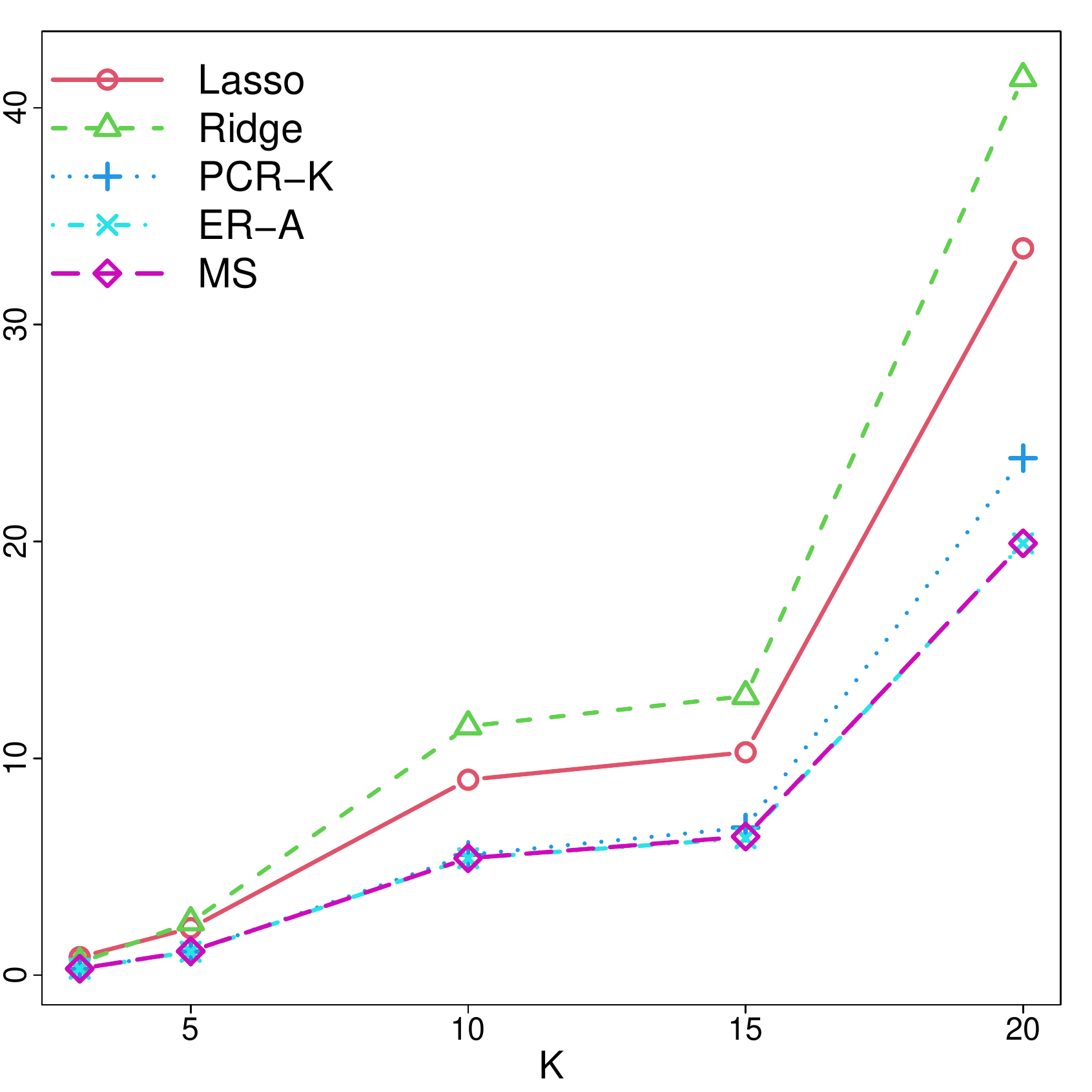}
		\end{tabular}
		\caption{Prediction risks of different predictors under the Essential Regression model as $p$ and $K$ vary separately}
		\label{fig_ER_pK}
	\end{figure}
	
	\section*{Acknowledgements}
	  Bunea and Wegkamp are supported in part by NSF grants DMS-1712709 and DMS-2015195.\\
	
	\bibliographystyle{plainnat}
    \bibliography{ref}

\begin{thebibliography}{33}
\providecommand{\natexlab}[1]{#1}
\providecommand{\url}[1]{\texttt{#1}}
\expandafter\ifx\csname urlstyle\endcsname\relax
  \providecommand{\doi}[1]{doi: #1}\else
  \providecommand{\doi}{doi: \begingroup \urlstyle{rm}\Url}\fi

\bibitem[Ahn and Horenstein(2013)]{Ahn-2013}
Seung~C. Ahn and Alex~R. Horenstein.
\newblock Eigenvalue ratio test for the number of factors.
\newblock \emph{Econometrica}, 81\penalty0 (3):\penalty0 1203--1227, 2013.

\bibitem[Bai(2003)]{Bai-factor-model-03}
Jushan Bai.
\newblock Inferential theory for factor models of large dimensions.
\newblock \emph{Econometrica}, 71\penalty0 (1):\penalty0 135--171, 2003.

\bibitem[Bai and Ng(2002)]{Bai-Ng-K}
Jushan Bai and Serena Ng.
\newblock Determining the number of factors in approximate factor models.
\newblock \emph{Econometrica}, 70\penalty0 (1):\penalty0 191--221, 2002.

\bibitem[Bai and Ng(2006)]{Bai-Ng-CI}
Jushan Bai and Serena Ng.
\newblock Confidence intervals for diffusion index forecasts and inference for
  factor-augmented regressions.
\newblock \emph{Econometrica}, 74\penalty0 (4):\penalty0 1133--1150, 2006.
\newblock \doi{10.1111/j.1468-0262.2006.00696.x}.

\bibitem[Bai and Ng(2008)]{Bai-Ng-forecast}
Jushan Bai and Serena Ng.
\newblock Forecasting economic time series using targeted predictors.
\newblock \emph{Journal of Econometrics}, 146\penalty0 (2):\penalty0 304 --
  317, 2008.
\newblock Honoring the research contributions of Charles R. Nelson.

\bibitem[Bair et~al.(2006)Bair, Hastie, Paul, and Tibshirani]{Bair_JASA}
Eric Bair, Trevor Hastie, Debashis Paul, and Robert Tibshirani.
\newblock Prediction by supervised principal components.
\newblock \emph{Journal of the American Statistical Association}, 101\penalty0
  (473):\penalty0 119--137, 2006.

\bibitem[Bartlett et~al.(2019)Bartlett, Long, Lugosi, and
  Tsigler]{bartlett2019}
Peter~L. Bartlett, Philip~M. Long, G{\'a}bor Lugosi, and Alexander Tsigler.
\newblock Benign overfitting in linear regression.
\newblock \emph{In arXiv:1906.11300}, 2019.

\bibitem[Belkin et~al.(2018{\natexlab{a}})Belkin, Hsu, and
  Mitra]{belkin2018overfitting}
Mikhail Belkin, Daniel Hsu, and Partha Mitra.
\newblock Overfitting or perfect fitting? risk bounds for classification and
  regression rules that interpolate.
\newblock \emph{In arXiv:1806.05161}, 2018{\natexlab{a}}.

\bibitem[Belkin et~al.(2018{\natexlab{b}})Belkin, Ma, and
  Mandal]{belkin2018understand}
Mikhail Belkin, Siyuan Ma, and Soumik Mandal.
\newblock To understand deep learning we need to understand kernel learning.
\newblock \emph{In arXiv:1802.01396}, 2018{\natexlab{b}}.

\bibitem[Belkin et~al.(2018{\natexlab{c}})Belkin, Rakhlin, and
  Tsybakov]{belkin2018does}
Mikhail Belkin, Alexander Rakhlin, and Alexandre~B. Tsybakov.
\newblock Does data interpolation contradict statistical optimality?
\newblock \emph{In arXiv:1806.09471}, 2018{\natexlab{c}}.

\bibitem[Belkin et~al.(2019{\natexlab{a}})Belkin, Hsu, Ma, and
  Mandal]{Belkin15849}
Mikhail Belkin, Daniel Hsu, Siyuan Ma, and Soumik Mandal.
\newblock Reconciling modern machine-learning practice and the classical
  bias{\textendash}variance trade-off.
\newblock \emph{Proceedings of the National Academy of Sciences}, 116\penalty0
  (32):\penalty0 15849--15854, 2019{\natexlab{a}}.
\newblock \doi{10.1073/pnas.1903070116}.

\bibitem[Belkin et~al.(2019{\natexlab{b}})Belkin, Hsu, and
  Xu]{belkin2019models}
Mikhail Belkin, Daniel Hsu, and Ji~Xu.
\newblock Two models of double descent for weak features.
\newblock \emph{In arXiv:1903.07571}, 2019{\natexlab{b}}.

\bibitem[Bing and Wegkamp(2019)]{rank19}
Xin Bing and Marten~H. Wegkamp.
\newblock Adaptive estimation of the rank of the coefficient matrix in
  high-dimensional multivariate response regression models.
\newblock \emph{Ann. Statist.}, 47\penalty0 (6):\penalty0 3157--3184, 12 2019.
\newblock \doi{10.1214/18-AOS1774}.
\newblock URL \url{https://doi.org/10.1214/18-AOS1774}.

\bibitem[Bing et~al.(2019)Bing, Bunea, and Wegkamp]{ER}
Xin Bing, Florentina Bunea, and Marten Wegkamp.
\newblock Inference in interpretable latent factor regression models.
\newblock \emph{In arXiv:1905.12696}, 2019.

\bibitem[Bing et~al.(2020)Bing, Bunea, Yang, and Wegkamp]{LOVE}
Xin Bing, Florentina Bunea, Ning Yang, and Marten Wegkamp.
\newblock Adaptive estimation in structured factor models with applications to
  overlapping clustering.
\newblock \emph{To appear in the Annals of Statistics}, 2020.

\bibitem[Bunea and Xiao(2015)]{bunea2015}
Florentina Bunea and Luo Xiao.
\newblock On the sample covariance matrix estimator of reduced effective rank
  population matrices, with applications to fpca.
\newblock \emph{Bernoulli}, 21\penalty0 (2):\penalty0 1200--1230, 05 2015.
\newblock \doi{10.3150/14-BEJ602}.
\newblock URL \url{https://doi.org/10.3150/14-BEJ602}.

\bibitem[Bunea et~al.(2020)Bunea, Strimas-Mackey, and
  Wegkamp]{bunea2020interpolation}
Florentina Bunea, Seth Strimas-Mackey, and Marten Wegkamp.
\newblock Interpolation under latent factor regression models.
\newblock \emph{In arXiv:2002.02525}, 2020.

\bibitem[Fan et~al.(2013)Fan, Liao, and Mincheva]{fan2013large}
Jianqing Fan, Yuan Liao, and Martina Mincheva.
\newblock Large covariance estimation by thresholding principal orthogonal
  complements.
\newblock \emph{Journal of the Royal Statistical Society: Series B (Statistical
  Methodology)}, 75\penalty0 (4):\penalty0 603--680, 2013.

\bibitem[Fan et~al.(2017)Fan, Xue, and Yao]{fan2017}
Jianqing Fan, Lingzhou Xue, and Jiawei Yao.
\newblock Sufficient forecasting using factor models.
\newblock \emph{Journal of Econometrics}, 201\penalty0 (2):\penalty0 292 --
  306, 2017.

\bibitem[Feldman(2019)]{feldman2019does}
Vitaly Feldman.
\newblock Does learning require memorization? \uppercase{A} short tale about a
  long tail.
\newblock \emph{arXiv:1906.05271}, 2019.

\bibitem[Hastie et~al.(2019)Hastie, Montanari, Rosset, and
  Tibshirani]{hastie2019surprises}
Trevor Hastie, Andrea Montanari, Saharon Rosset, and Ryan~J. Tibshirani.
\newblock Surprises in high-dimensional ridgeless least squares interpolation.
\newblock \emph{In arXiv:1903.08560}, 2019.

\bibitem[Hotelling(1957)]{Hotelling}
Harold Hotelling.
\newblock The relations of the newer multivariate statistical methods to factor
  analysis.
\newblock \emph{British Journal of Statistical Psychology}, 10\penalty0
  (2):\penalty0 69--79, 1957.

\bibitem[Hsu et~al.(2014)Hsu, Kakade, and Zhang]{Hsu2014}
Daniel Hsu, Sham~M. Kakade, and Tong Zhang.
\newblock Random design analysis of ridge regression.
\newblock \emph{Found. Comput. Math.}, 14\penalty0 (3):\penalty0 569--600, June
  2014.
\newblock ISSN 1615-3375.
\newblock \doi{10.1007/s10208-014-9192-1}.

\bibitem[Kelly and Pruitt(2015)]{Kelly-2015}
Bryan Kelly and Seth Pruitt.
\newblock The three-pass regression filter: A new approach to forecasting using
  many predictors.
\newblock \emph{Journal of Econometrics}, 186\penalty0 (2):\penalty0 294 --
  316, 2015.
\newblock ISSN 0304-4076.
\newblock High Dimensional Problems in Econometrics.

\bibitem[Kendall(1957)]{Kendall}
Maurice~G. Kendall.
\newblock \emph{A course in multivariate analysis}.
\newblock Hafner Pub. Co., 1957.

\bibitem[Lam and Yao(2012)]{lam2012}
Clifford Lam and Qiwei Yao.
\newblock Factor modeling for high-dimensional time series: Inference for the
  number of factors.
\newblock \emph{Ann. Statist.}, 40\penalty0 (2):\penalty0 694--726, 04 2012.

\bibitem[Liang and Rakhlin(2018)]{liang2019}
Tengyuan Liang and Alexander Rakhlin.
\newblock Just interpolate: Kernel "ridgeless" regression can generalize.
\newblock \emph{In arXiv:1808.00387}, 2018.

\bibitem[Montanari et~al.(2019)Montanari, Ruan, Sohn, and
  Yan]{montanari2019generalization}
Andrea Montanari, Feng Ruan, Youngtak Sohn, and Jun Yan.
\newblock The generalization error of max-margin linear classifiers:
  High-dimensional asymptotics in the overparametrized regime.
\newblock \emph{In arXiv:1911.01544}, 2019.

\bibitem[Muthukumar et~al.(2019)Muthukumar, Vodrahalli, Subramanian, and
  Sahai]{muthukumar2019harmless}
Vidya Muthukumar, Kailas Vodrahalli, Vignesh Subramanian, and Anant Sahai.
\newblock Harmless interpolation of noisy data in regression.
\newblock \emph{In arXiv:1903.09139}, 2019.

\bibitem[Muthukumar et~al.(2020)Muthukumar, Narang, Subramanian, Belkin, Hsu,
  and Sahai]{muthukumar2020classification}
Vidya Muthukumar, Adhyyan Narang, Vignesh Subramanian, Mikhail Belkin, Daniel
  Hsu, and Anant Sahai.
\newblock Classification vs regression in overparameterized regimes: Does the
  loss function matter?
\newblock \emph{In arXiv:2005.08054}, 2020.

\bibitem[Stock and Watson(2002)]{SW2002_JASA}
James~H. Stock and Mark~W. Watson.
\newblock Forecasting using principal components from a large number of
  predictors.
\newblock \emph{Journal of the American Statistical Association}, 97\penalty0
  (460):\penalty0 1167--1179, 2002.
\newblock ISSN 01621459.

\bibitem[Vershynin(2012)]{vershynin_2012}
Roman Vershynin.
\newblock \emph{Introduction to the non-asymptotic analysis of random
  matrices}, pages 210 -- 268.
\newblock Cambridge University Press, 2012.

\bibitem[Wegkamp(2003)]{wegkamp2003}
Marten Wegkamp.
\newblock Model selection in nonparametric regression.
\newblock \emph{Ann. Statist.}, 31\penalty0 (1):\penalty0 252--273, 2003.

\end{thebibliography}

		\newpage
	
	\appendix
	
	\section*{Appendix}

    We provide section-by-section proofs for the main results in Appendices \ref{app_proof_pred} -- \ref{proof:split}.  Auxiliary lemmas are collected in Appendix \ref{sec_proof_aux}. Appendix \ref{app_love} contains the procedure of estimating $A$ under the Essential Regression framework while comparison with more existing literature on factor models is stated in Appendix \ref{app_literature}.

	\section{Main proofs} \label{sec:main proofs} 
    We start by giving an elementary lemma that proves $Y^*_{\wh B} = Y^*_{P_{\wh B}}$ for any $\wh B \in \RR^{p\times q}$. Recall that, for any matrix $M$, $M^+$ denotes its Moore-Penrose inverse and $P_M$ denotes the projection onto the column space of $M$.
	\begin{lemma}\label{lem_invariant}
	    Let $\wh B \in \RR^{p\times q}$ be any matrix. Then 
	    \[
	        \wh B(\X\wh B)^+ = P_{\wh B}(\X P_{\wh B})^+.
	    \]
	\end{lemma}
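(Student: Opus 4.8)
The plan is to reduce $\wh B$ to a matrix with orthonormal columns via a compact singular value decomposition, and then to use a reverse-order law for the Moore--Penrose inverse. Write $r=\rank(\wh B)$ and take $\wh B = U\Sigma V^\T$ with $U\in\RR^{p\times r}$ and $V\in\RR^{q\times r}$ having orthonormal columns ($U^\T U = V^\T V = \bI_r$) and $\Sigma\in\RR^{r\times r}$ diagonal with strictly positive entries. Then $\wh B^{+} = V\Sigma^{-1}U^\T$ and $P_{\wh B} = \wh B\wh B^{+} = UU^\T$, consistent with the formula for $P_{\wh B}$ recorded before the statement.

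The workhorse is the identity $(MN)^{+} = N^\T M^{+}$, valid whenever $NN^\T = \bI$, which one verifies by checking the four defining Moore--Penrose conditions for $N^\T M^{+}$ directly (using $NN^\T = \bI$, $MM^{+}M = M$, and symmetry of $MM^{+}$). Applying it to $\X\wh B = (\X U\Sigma)V^\T$, where $V^\T(V^\T)^\T = \bI_r$, gives $(\X\wh B)^{+} = V(\X U\Sigma)^{+}$, hence
\[
\wh B(\X\wh B)^{+} = U\Sigma V^\T V(\X U\Sigma)^{+} = U\Sigma(\X U\Sigma)^{+}.
\]
Applying the same identity to $\X P_{\wh B} = (\X U)U^\T$, where $U^\T(U^\T)^\T = \bI_r$, gives $(\X P_{\wh B})^{+} = U(\X U)^{+}$, hence
\[
P_{\wh B}(\X P_{\wh B})^{+} = UU^\T U(\X U)^{+} = U(\X U)^{+}.
\]
Since $U$ is left-invertible, the two displays show the claimed identity is equivalent to $\Sigma(\X U\Sigma)^{+} = (\X U)^{+}$.

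The main obstacle is this final reduction: an invertible diagonal factor cannot in general be pulled out of a pseudoinverse, so one must verify the relevant Greville product condition for $(\X U)\Sigma$ in order to conclude $(\X U\Sigma)^{+} = \Sigma^{-1}(\X U)^{+}$. I expect to check it either by noting $\Sigma = \bI_r$ outright (the case for the applications with $\wh B = \U_k$ and $\wh B = \bI_p$), or, in the general case, from the fact that $\X U$ has full column rank whenever $\rank(\X\wh B) = \rank(\wh B)$, so that its rows span all of $\RR^r$ and the Greville condition is automatic. Granting $\Sigma(\X U\Sigma)^{+} = (\X U)^{+}$, chaining the three computations gives $\wh B(\X\wh B)^{+} = P_{\wh B}(\X P_{\wh B})^{+}$. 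Apart from the reverse-order law, only the standard relations $MM^{+}M = M$, $M^{+}MM^{+} = M^{+}$, and symmetry of $MM^{+}$, $M^{+}M$ are used.
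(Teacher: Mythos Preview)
Your reduction via the compact SVD and the reverse-order law $(MN)^{+}=N^\T M^{+}$ when $NN^\T=\bI$ is clean and correct, and it lands precisely on the crux: whether $\Sigma(\X U\Sigma)^{+}=(\X U)^{+}$ for an arbitrary $n\times r$ matrix $\X U$ and an invertible diagonal $\Sigma$. You are right to flag this as the main obstacle, and your instinct that it requires $\X U$ to have full column rank is also right. The trouble is that this step \emph{cannot} be completed in the generality of the lemma: the identity $(M\Sigma)^{+}=\Sigma^{-1}M^{+}$ fails whenever $M$ lacks full column rank. Concretely, take $p=q=2$, $n=1$, $\wh B=\mathrm{diag}(2,1)$ and $\X=(1\ \ 1)$; then $\wh B(\X\wh B)^{+}=(4/5,\,1/5)^\T$ while $P_{\wh B}=\bI_2$ gives $P_{\wh B}(\X P_{\wh B})^{+}=(1/2,\,1/2)^\T$, so the lemma itself is false as stated. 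Your two escape routes---either $\Sigma=\bI_r$ or $\rank(\X\wh B)=\rank(\wh B)$---each yield a correct proof of a \emph{restricted} version of the identity (and they cover the paper's applications to $\wh B=\U_k$, $\wh B=\bI_p$, and, on the high-probability event used there, $\wh B=\wh A$), but neither covers the claim for arbitrary $\wh B$ and $\X$.

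For comparison, the paper takes the same SVD route and asserts that
\[
\bigl(VDU^\T\X^\T\X UDV^\T\bigr)^{+}=VD^{-1}\bigl(U^\T\X^\T\X U\bigr)^{+}D^{-1}V^\T,
\]
verifying only the two conditions $N\wt N N=N$ and $\wt N N\wt N=\wt N$. The two symmetry conditions of the Moore--Penrose inverse are not checked, and in fact they fail: writing $M=U^\T\X^\T\X U$ one gets $N\wt N=VDMM^{+}D^{-1}V^\T$, whose symmetry would force $D^2$ to commute with the projection $MM^{+}$, which is false in general (and fails in the $2\times2$ example above). So the paper's proof carries the very gap you isolated, hidden one layer deeper. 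The correct statement is that the identity holds under the extra hypothesis $\rank(\X\wh B)=\rank(\wh B)$ (equivalently, $\X U$ has full column rank $r$), in which case your argument goes through verbatim since $(\X U\Sigma)^{+}=\Sigma^{-1}\bigl((\X U)^\T\X U\bigr)^{-1}(\X U)^\T=\Sigma^{-1}(\X U)^{+}$.
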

	\begin{proof}
	     Write the SVD of $\wh B$ as $\wh B = UDV^\T$ where $U\in \RR^{p\times r_0}$ and $V\in \RR^{q\times r_0}$ are orthonormal matrices with $r_0 = \rank(\wh B)$. We then have 
	    \begin{align*}
	        \wh B(\X\wh B)^+ &= 
	        \wh B \left( \wh B^{\T} \X^\T \X \wh B \right)^{+} \wh B^\T \X^\T\\
	        &=
	        UDV^\T \left(VDU^\T\X^\T \X UDV^\T\right)^{+}VDU^\T\X^\T\\ 
	        &\overset{{(i)}}{=} U(U^\T \X^\T\X U)^+U^\T \X^\T\\
	        & \overset{{(ii)}}{=} UU^\T(UU^\T \X^\T\X UU^\T)^+UU^\T \X^\T.
	    \end{align*}
	    The result then follows by noting that $P_{\wh B} = UU^\T$. Step {(i)} uses the fact that 
    	\[
    	\left(VDU^\T\X^\T \X UDV^\T\right)^{+} = VD^{-1}\left(U^\T\X^\T \X U\right)^+D^{-1}V^\T 
    	\]
    	which can be verified by the definition of Moore-Penrose inverse. Indeed, let $M = U^\T\X^\T \X U$, $N = VD M DV^\T$ and $\wt N = VD^{-1} M^+ D^{-1}V^\T$. We need to verify 
    	\[
    	        N\wt N N = N,\qquad \wt N N \wt N = \wt N.
    	\]
    	Straightforwardly, 
    	\[
    	    N\wt N N = VD M M^+M DV^\T = VD M DV^\T = N
    	\]
    	and similar arguments hold for $\wt N N \wt N = \wt N$. Step (ii) uses step (i) with $D=\bI_{r_0}$ and $V=U$
	\end{proof}

% 	In this section we present the proofs of results found in the main text.
	\subsection{Proofs for Section \ref{sec_pred_fr}}\label{app_proof_pred}
 	\subsubsection*{Proof of Lemma \ref{lem_pop_X}}
	Let $\sx = \Cov(X)$, $\sxy = \Cov(X,Y)$. Since $\sw$ is invertible, $\lp(\sx) = \lp(\sza +\sw)\ge \lp(\sw)>0$ so $\sx$ is invertible. Thus, letting $\alpha^* = \sx^{-1}\sxy$,
	\begin{equation}\label{rstar}
	    \R^*-\sigma^2  = \EE[(X^\top\alpha^* -Z^\top\beta )^2].
	\end{equation}
	Using this expression, and the factor model structure $X=AZ+W$, $Y=Z^\T \beta +\eps$, the proof of Lemma 4 in \cite{bunea2020interpolation} uses the Woodbury matrix identity to simplify (\ref{rstar}), arriving at
	\[\R^* -\sigma^2 = \beta^\top (\sz^{-1} + A^\top\sw^{-1} A)^{-1}\beta.\]
	Letting $H= \sz^{1/2}A^\top \sw^{-1}A\sz^{1/2}$, we then have
	\begin{align*}
	    \R^{*}-\sigma^2  &= \beta^{\T} \sz^{1/2}( \bI_K + H)^{-1}\sz^{1/2}\beta\\
	    &= \beta^\T \sz^{1/2}H^{-1/2}(\bI_K + H^{-1})^{-1}H^{-1/2}\sz^{1/2}\beta.
	\end{align*}
	To obtain the upper bound on $\R^*$ we use
	\[\R^*-\sigma^2 = \beta^\T \sz^{1/2}H^{-1/2}(\bI_K + H^{-1})^{-1}H^{-1/2}\sz^{1/2}\beta \le \frac{\beta^\top \sz^{1/2}H^{-1}\sz^{1/2}\beta}{1+ \lk(H^{-1})}\le \beta^\top (A^\top \sw^{-1} A)^{-1}\beta,\]
	where we used $\sz^{1/2} H^{-1}\sz^{1/2} = (A^\top \sx^{-1} A)^{-1}$ in the last step.
	
	To find the lower bound we first observe that
	\[\R^*-\sigma^2 = \beta^\T \sz^{1/2}H^{-1/2}(\bI_K + H^{-1})^{-1}H^{-1/2}\sz^{1/2}\beta \ge\frac{\beta^\top \sz^{1/2}H^{-1}\sz^{1/2}\beta}{1+ \|H^{-1}\|_{\op}}=  \frac{\beta^\top (A^\top \sx^{-1} A)^{-1}\beta}{1+ \lk^{-1}(H)}.\]
	Furthermore,
	\[\lk(H) = \lk(\sz^{1/2}A^\top \sw^{-1}A\sz^{1/2})\ge \lk(\sza)/\og= \xi,\]
	so using this in the previous display,
	\[\R^*-\sigma^2  \ge  \frac{\beta^\top (A^\top \sx^{-1} A)^{-1}\beta}{1+ \xi^{-1}} = \frac{\xi}{1+\xi} \cdot \beta^\top (A^\top \sx^{-1} A)^{-1}\beta,\]
	as claimed. \qed
	\subsection*{Proof of Theorem \ref{thm_pred}}\label{sec_proof_thm_pred}

	Define $\wh \alpha_{\wh B} =  \wh B\left(\wh B^\T\X^\T\X\wh B\, \right)^+\wh B^\T\X^\T \Y$ and recall that 
	$
	\wh Y^*_{\wh B} = X_*^\T\wh \alpha_{\wh B}
	$
	from (\ref{def_pred_B_intro}). Pick any $\theta$ with $K\le (Cn/\log n) \wedge p$ such that $(X,Y)$ follows FRM($\theta$) where $C = C(\gamma_z)$ is some positive constant.
	By $X_* = AZ_*+W_*$ and $Y_* = Z_*^\top\beta + \eps_*$, and the independence of $Z_*$, $\eps_*$, and $W_*$, one has 
	\begin{align}\label{eq_decomp}\nonumber
	\R(\wh B) - \sigma^2 &=
	\EE_{(Z_*,W_*)}\left[
	\left( \wh Y^*_{\wh B} - Z_*^\T\beta \right)^2
	\right]  \\
	& = \EE_{Z_*}\left[
	\left(Z_*^\T A^\T \wh \alpha_{\wh B} - Z_*^\T\beta \right)^2
	\right]+ \EE_{W_*}\left[
	\left(W_*^\T \wh \alpha_{\wh B}\right)^2
	\right]\\\nonumber
	&= \left\|
	\C^{1/2}\left(A^\T \wh \alpha_{\wh B}-\beta \right)
	\right\|^2 + \left\|
	\Gamma^{1/2}\wh \alpha_{\wh B}
	\right\|^2\\
	&\le \left\|
	\C^{1/2}\left(A^\T \wh \alpha_{\wh B}-\beta \right)
	\right\|^2 + \|\Gamma\|_{\op} \left\|
	\wh \alpha_{\wh B}
	\right\|^2.
	\end{align}
	We define an event $\E^*$ in (\ref{def_event_star}) below, on which we bound the risk. Invoking Lemmas \ref{lem_t1}, \ref{lem_t2}  and using  $\beta^\T A^+\Gamma A^{+\T}\beta \le \beta^{\top}(A^\T A)^{-1}\beta \|\Gamma\|_{\op}$, we find that the stated bound holds on the event $\E^*$. Then, by Lemma \ref{lem_event}, $\PP(\E^*)\ge 1 - cn^{-1}$, which completes the proof. \qed

	\bigskip
	
	We state and prove three lemmas which are used in the proof of Theorem \ref{thm_pred}.
	Recall that 
	\[
	\wh r = \rank(\X P_{\wh B}),
	    \qquad 	\wh \psi = {1\over n}\sigma_1^2\left(\X P_{\wh B}^{\perp}\right), \qquad 	\wh \eta = {1\over n}\sigma_{\wh r}^2\left(\X P_{\wh B} \right).
	 \] 
	\begin{lemma}\label{lem_event}
	For any $\theta$ with $K\le  (Cn/\log n) \wedge p$ and some positive constant $C = C(\gamma_z)$ such that $(X,Y)$ follows FRM($\theta$), we have $\PP(\E^*)\ge 1-cn^{-1}$ for some absolute constant $c>0$, where we define the event
	\begin{equation}\label{def_event_star}
	\E^*\coloneqq  \E_{\Z} \cap \E_{\W}\cap \E_{\W}'\cap \E_M\cap\E_{M'}\cap \E_{\Z \beta}.
	\end{equation}
	Here, for some constants $c(\gamma_z)$ and $c'(\gamma_w)$ depending on $\gamma_z$ and $\gamma_w$, respectively, 
	\begin{align*}
	&\E_{\Z}\coloneqq\left\{
	\lambda_K\left(\Omega^{1/2}	{1\over n}\Z^{\top}     \Z\Omega^{1/2}\right) \ge c(\gamma_z)
	\right\},\\
	& \E_{\Z \beta} := \left\{
	{1\over n}\left\|P_{\X\wh B}^{\perp}\Z\beta\right\|^2\le 8\gamma_w^2\beta^\T A^+\Gamma A^{+\T}\beta + 2 \wh \psi \beta^\T (A^\T A)^{-1}\beta
	\right\},\\
	&\E_{\W}\coloneqq\left\{
	{1\over n}\left\|\W^\T\W\right\|_{\op} \le \errw
	\right\},\\
	&\E'_{\W} \coloneqq \left\{
	{1\over n}\left\|\W A^{+\T}\beta \right\|^2 \le   4\gamma_w^2 \beta^\T A^+\Gamma A^{+\T}\beta 
	\right\},\\
	&\E_{M}\coloneqq \left\{
		\Eps^\top M\Eps \le 2\gamma_{\eps}^2\sigma^2\Big[ 2\|M\|_{\op} \log n+ \tr(M)\Big]
		\right\},\\
	&\E_{M'}\coloneqq \left\{
		\Eps^\top M'\Eps \le 2\gamma_{\eps}^2\sigma^2\Big[ 2\|M'\|_{\op} \log n+ \tr(M')\Big]
		\right\},
	\end{align*}
	with $\Omega \coloneqq \C^{-1}$, $\errw$ defined in (\ref{def_ErrW}), and
	\begin{align*}
	   &M\coloneqq (\X\wh B)^{+\T} \wh B^\T \wh B (\X \wh B)^+,\\
	&M'\coloneqq (\X \wh B)^{+\T} \wh B^\T A\C A^\T \wh B  (\X \wh B)^+.
	\end{align*}
	\end{lemma}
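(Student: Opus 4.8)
\textbf{Proof proposal for Lemma \ref{lem_event}.}

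The plan is to bound the probability of each of the six events in the intersection $\E^* = \E_{\Z}\cap \E_{\W}\cap \E_{\W}'\cap \E_M\cap\E_{M'}\cap \E_{\Z\beta}$ separately, each with failure probability $O(n^{-1})$, and then conclude by a union bound. All six events are ``benign'' concentration statements under the sub-Gaussian FRM assumptions; the only subtlety is in how they interlock, since $M$, $M'$, and the projection $P_{\X\wh B}$ all depend on $\wh B$ and on $\X$, which in turn depends on $\W$ and $\Z$, while $\Eps$ in $\E_M, \E_{M'}$ is independent of everything else.

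First I would handle $\E_{\W}$: by the deviation bound (\ref{bd_W_op}) (i.e. Lemma \ref{lem_op_norm}), $n^{-1}\|\W^\T\W\|_{\op}\le \errw$ holds with probability at least $1-e^{-n}$. Next, $\E_{\Z}$: since $Z = \C^{1/2}\wt Z$ with $\wt Z$ being $\z$ sub-Gaussian and isotropic, $\Omega^{1/2}n^{-1}\Z^\T\Z\,\Omega^{1/2} = n^{-1}\wt\Z^\T\wt\Z$, and a standard sub-Gaussian covariance concentration result (Vershynin-type, as used elsewhere in the paper) gives $\lambda_K(n^{-1}\wt\Z^\T\wt\Z)\ge c(\z)$ with probability $1-e^{-cn}$, provided $K\le Cn/\log n$ (or simply $K\lesssim n$); this is exactly where the dimension restriction on $K$ enters. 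For $\E'_{\W}$: the vector $\W A^{+\T}\beta$ has i.i.d.\ rows, each row being $W_i^\T A^{+\T}\beta$, a mean-zero scalar that is sub-Gaussian with parameter $\lesssim \gamma_w\sqrt{\beta^\T A^+\Gamma A^{+\T}\beta}$; so $n^{-1}\|\W A^{+\T}\beta\|^2$ is an average of $n$ i.i.d.\ sub-exponential variables with mean $\beta^\T A^+\Gamma A^{+\T}\beta$, and a Bernstein bound yields $n^{-1}\|\W A^{+\T}\beta\|^2\le 4\gamma_w^2\beta^\T A^+\Gamma A^{+\T}\beta$ with probability $1-e^{-cn}$. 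The event $\E_{\Z\beta}$ is the one requiring a little more care: write $\Z\beta = \X A^{+\T}\beta - \W A^{+\T}\beta$ (using $A^+A = \bI_K$), so $P^{\perp}_{\X\wh B}\Z\beta = P^{\perp}_{\X\wh B}(\X A^{+\T}\beta) - P^{\perp}_{\X\wh B}(\W A^{+\T}\beta)$; the first piece has norm at most $\|\X P^{\perp}_{\wh B} A^{+\T}\beta\| \le \sigma_1(\X P^{\perp}_{\wh B})\|A^{+\T}\beta\| = \sqrt{n\wh\psi}\,\|A^{+\T}\beta\|$ (since $P^{\perp}_{\X\wh B}\X\wh B = 0$ and one can peel off $P^{\perp}_{\wh B}$), contributing the $2\wh\psi\,\beta^\T(A^\T A)^{-1}\beta$ term after using $\|A^{+\T}\beta\|^2 = \beta^\T(A^\T A)^{-1}\beta$; the second piece is bounded on $\E'_{\W}$ by $\|\W A^{+\T}\beta\|^2\le 4n\gamma_w^2\beta^\T A^+\Gamma A^{+\T}\beta$, and then $(a+b)^2\le 2a^2+2b^2$ and the constant $8 = 2\cdot 4$ appears. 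So $\E'_{\W}\subseteq \E_{\Z\beta}$ up to these deterministic manipulations, meaning $\E_{\Z\beta}$ costs no extra probability.

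Finally, for $\E_M$ and $\E_{M'}$: conditionally on $\X$ (hence on $\wh B$, $M$, $M'$, which are $\X$-measurable), $\Eps$ is a $\sigma\gamma_\eps$ sub-Gaussian vector independent of $\X$, so the Hanson--Wright inequality gives, for a fixed PSD matrix $M$, that $\Eps^\T M\Eps \le \gamma_\eps^2\sigma^2(\tr(M) + c\|M\|_F\sqrt{\log n} + c\|M\|_{\op}\log n)$ with conditional probability $1 - n^{-1}$; absorbing $\|M\|_F\sqrt{\log n}\le \tfrac12(\tr(M) + \|M\|_{\op}\log n)$ (since $\|M\|_F^2\le \tr(M)\|M\|_{\op}$) collapses this to the stated form $2\gamma_\eps^2\sigma^2(2\|M\|_{\op}\log n + \tr(M))$; the same for $M'$. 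Taking expectation over $\X$ turns the conditional bound into an unconditional one. A union bound over the (at most) five events $\E_{\Z},\E_{\W},\E'_{\W},\E_M,\E_{M'}$ — each failing with probability $O(n^{-1})$ — together with the deterministic inclusion $\E'_{\W}\cap\{\text{norm bound}\}\subseteq \E_{\Z\beta}$, yields $\PP(\E^*)\ge 1 - cn^{-1}$.

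The main obstacle is the careful treatment of $\E_{\Z\beta}$ and the Hanson--Wright steps for $\E_M,\E_{M'}$: one must be scrupulous that $M$ and $M'$ are functions of $\X$ only (not of $\Eps$), so that conditioning legitimately decouples $\Eps$, and one must verify that the algebraic simplification $\|M\|_F\sqrt{\log n}\lesssim \tr(M)+\|M\|_{\op}\log n$ indeed recovers exactly the constants written in the lemma; everything else is a routine assembly of standard sub-Gaussian/sub-exponential concentration bounds already invoked elsewhere in the paper.
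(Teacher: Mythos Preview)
Your proposal is correct and follows essentially the same approach as the paper: bound each event separately via standard sub-Gaussian concentration (Vershynin's covariance bound for $\E_{\Z}$, the operator-norm bound (\ref{bd_W_op}) for $\E_{\W}$, a quadratic-form/Hanson--Wright bound for $\E'_{\W}$, $\E_M$, $\E_{M'}$ after conditioning on $\X$), observe the deterministic inclusion $\E'_{\W}\subseteq \E_{\Z\beta}$ via the decomposition $P^{\perp}_{\X\wh B}\Z\beta = P^{\perp}_{\X\wh B}\X P^{\perp}_{\wh B}A^{+\T}\beta - P^{\perp}_{\X\wh B}\W A^{+\T}\beta$, and conclude by a union bound. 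The only cosmetic difference is that the paper invokes its Lemma \ref{lem_quad} (a packaged Hanson--Wright form) with $t=\log n$ for $\E'_{\W}$, $\E_M$, $\E_{M'}$, whereas you phrase $\E'_{\W}$ via Bernstein and $\E_M,\E_{M'}$ via Hanson--Wright plus the AM--GM step $\|M\|_F\sqrt{\log n}\le \tfrac12(\tr(M)+\|M\|_{\op}\log n)$; these are equivalent.
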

	\begin{proof}
	
By an application of Theorem 5.39 of \cite{vershynin_2012} and $K\log n\le C(\gamma_z)n$, we find $\PP\{\E^c_{\Z}\}\lesssim n^{-c'K}$. From Lemma \ref{lem_op_norm} with $\G = \W\sw^{-1/2}$, $H = \sw$, and $\gamma = \gamma_w$, we find $\PP\{\E^c_{\W}\}\le e^{-n}$.

	We note that $\W A^{+\T}\beta$ has independent $\gamma_w\sqrt{\beta^\T A^+\Gamma A^{+\T}\beta}$ sub-Gaussian entries, so $\W A^{+\T}\beta$ is a $\gamma_w\sqrt{\beta^\T A^+\Gamma A^{+\T}\beta}$ sub-Gaussian random vector. 
	Applying Lemma \ref{lem_quad} with $\xi = \W A^{+\T}\beta$, $H = \bI_n$, $\gamma_{\xi}^2 = \gamma_w^2\beta^\T A^+\Gamma A^{+\T}\beta $ and choosing $t = \log n$ yield
	\begin{equation}\label{e_w}
	   \PP\{(\E_{\W}')^{c}\} = \PP\left\{
	{1\over n}\left\|\W A^{+\T}\beta \right\|^2 >  4\gamma_w^2 \beta^\T A^+\Gamma A^{+\T}\beta 
	\right\} \le n^{-1}.
	\end{equation}
	We prove $ \E_{\W}'\cap \E_{\Z\beta} = \E_{\W'}$ in Lemma \ref{lem_comp_Zbeta}. 
	By the independence of $\Eps$ and both $\X$ and $\wh B$, the matrix $M$ is independent of $\Eps$. Thus, by an application of Lemma \ref{lem_quad} with $\xi = \Eps$, $H = M$, $\gamma_{\xi} = \sigma\gamma_{\eps}$ and  $t = \log n$  gives $\PP\{\E_M^c|M\}\le n^{-1}$. Taking the expectation over $M$ then gives $\PP\{\E_M^c\}\le n^{-1}$. The same argument with $H=M'$ gives $\PP\{\E_{M'}^{c}\}\le n^{-1}$.  
	
	Combining results, we find
	\begin{align*}
	\PP\{\E^{*c}\} &\le \PP\left\{\E_{\Z}^c\right \}+\PP\left\{\E^c_{\W}\right\} + \PP\left\{
	(\E'_{\W})^c\right\} +\PP\{\E_M^c\}+\PP\{\E_{M'}^c\}\lesssim n^{-1}.
	\end{align*}
\end{proof}

	\begin{lemma}\label{lem_t1}
		Under conditions of Theorem \ref{thm_pred}, on the event $\E^*$ defined in (\ref{def_event_star}),
		\begin{align}\label{bd_theta}
		\|\wh \alpha_{\wh B}\|^2 \lesssim_{\theta} {(\wh r +\log n)\sigma^2\over n\wh \eta}+  \beta^\T(A^\T A)^{-1}\beta + \wh \eta^{-1} \left(\wh \psi  \beta^{\top}      (A^\T A)^{-1}\beta +  \beta^\T A^+\Gamma A^{+\T}\beta\right).
		\end{align}
	\end{lemma}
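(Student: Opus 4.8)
The plan is to expand $\wh \alpha_{\wh B}$ using the identity $\wh\alpha_{\wh B} = P_{\wh B}(\X P_{\wh B})^+ \Y$ (which follows from Lemma \ref{lem_invariant}), substitute $\Y = \Z\beta + \Eps$, and bound the two resulting pieces separately. For the noise piece, $\|P_{\wh B}(\X P_{\wh B})^+\Eps\|^2 = \Eps^\T M \Eps$ with $M$ as defined in Lemma \ref{lem_event}; on the event $\E_M$ this is at most $2\gamma_\eps^2\sigma^2(2\|M\|_\op\log n + \tr(M))$. Since the nonzero singular values of $(\X P_{\wh B})^+$ are the reciprocals of the nonzero singular values of $\X P_{\wh B}$, we have $\|M\|_\op \le (n\wh\eta)^{-1}$ and $\tr(M) \le \wh r /(n\wh\eta)$ (using $\rank(M) = \wh r$). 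This gives the first term $(\wh r + \log n)\sigma^2/(n\wh\eta)$ up to constants.

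For the signal piece, I would write $P_{\wh B}(\X P_{\wh B})^+ \Z\beta = P_{\wh B}(\X P_{\wh B})^+ P_{\X\wh B}\Z\beta$ and split $\Z\beta$ into its projection onto and off of the range of $\X P_{\wh B} = \X \wh B$. Observe that $\X\alpha^* \approx \Z\beta$ in the sense that $\Z\beta - \X\alpha^* = \Z\beta - (AZ + W)\alpha^* = -\W\alpha^*$ after using $A^\T\alpha^* \approx \beta$; more carefully, under the factor model $\alpha^* = \Sigma^+ A\C\beta$, and one uses $\X\wh B(\X\wh B)^+\X\alpha^*$ is a projection of $\X\alpha^*$. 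The cleanest route is: decompose $\Z\beta = \X\alpha^* + (\Z\beta - \X\alpha^*)$ where $\Z\beta - \X\alpha^* = \Z(\beta - A^\T\alpha^*) - \W\alpha^*$, bound $\|P_{\wh B}(\X P_{\wh B})^+ \X\alpha^*\| \le \|\alpha^*\|$ using that $P_{\wh B}(\X P_{\wh B})^+\X$ is a contraction, and then bound $\|\alpha^*\|^2 \lesssim \beta^\T(A^\T A)^{-1}\beta$ via the factor structure. The remaining cross term $P_{\wh B}(\X P_{\wh B})^+(\Z\beta - \X\alpha^*)$ is controlled by $(n\wh\eta)^{-1}\|P_{\X\wh B}(\Z\beta - \X\alpha^*)\|^2$, and then one uses $\E_{\Z\beta}$ to bound $n^{-1}\|P^\perp_{\X\wh B}\Z\beta\|^2 \lesssim \beta^\T A^+\Gamma A^{+\T}\beta + \wh\psi\,\beta^\T(A^\T A)^{-1}\beta$ together with $\E'_{\W}$ to bound $n^{-1}\|\W\alpha^*\|^2$-type quantities. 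Collecting terms yields the claimed bound with the factor $\wh\eta^{-1}$ multiplying $\wh\psi\beta^\T(A^\T A)^{-1}\beta + \beta^\T A^+\Gamma A^{+\T}\beta$.

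The main obstacle I anticipate is the bookkeeping around the "bias" direction: carefully identifying which part of $\Z\beta$ lies in the range of $\X\wh B$ (where it contributes a benign $\|\alpha^*\|^2$-type term) versus the part orthogonal to it (where it must be absorbed into $\wh\psi$ and the $\beta^\T A^+\Gamma A^{+\T}\beta$ term), and verifying that $P_{\wh B}(\X P_{\wh B})^+\X$ really is a contraction so that no spurious $\wh\eta^{-1}$ factor attaches to the $\|\alpha^*\|^2$ piece. One must also be careful that $\alpha^*$ is well-defined (needs $\Sigma$ invertible, which holds since $\sw$ is invertible under the hypotheses of Theorem \ref{thm_pred}), and that the bound $\|\alpha^*\|^2 \lesssim \beta^\T(A^\T A)^{-1}\beta$ genuinely follows from $\alpha^* = \Sigma^{-1}A\C\beta$ and $\Sigma \succeq A\C A^\T$. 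Everything else — the sub-Gaussian quadratic-form bounds, the singular-value reciprocal identities — is routine given the events in Lemma \ref{lem_event}.
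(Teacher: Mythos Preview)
Your treatment of the noise piece is correct and matches the paper: write $\|P_{\wh B}(\X P_{\wh B})^+\Eps\|^2 = \Eps^\T M \Eps$, invoke $\E_M$, and use $\|M\|_\op \le (n\wh\eta)^{-1}$ together with $\tr(M)\le \wh r\,\|M\|_\op$.

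For the signal piece, however, your route through $\alpha^* = \Sigma^{-1}A\C\beta$ diverges from the paper and has genuine gaps. The paper never brings in $\alpha^*$ here; it uses the elementary identity $\Z\beta = \Z A^\T A^{+\T}\beta = (\X - \W)A^{+\T}\beta$ (valid since $A$ has full column rank, so $A^\T A^{+\T}=\bI_K$), and then splits $\X A^{+\T}\beta = \X P_{\wh B}A^{+\T}\beta + \X P^\perp_{\wh B}A^{+\T}\beta$. This produces three clean pieces: the first is handled via the contraction $\|\wh B(\X\wh B)^+\X\wh B\wh B^+\|_\op\le 1$ and the identity $\|A^{+\T}\beta\|^2 = \beta^\T(A^\T A)^{-1}\beta$; the second via $\|\X P^\perp_{\wh B}\|_\op^2 = n\wh\psi$; the third via $\E'_{\W}$, which is formulated precisely for $\W A^{+\T}\beta$.

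Your $\alpha^*$-based plan runs into several obstacles. First, Theorem~\ref{thm_pred} does not assume $\Gamma$ is invertible, so $\alpha^*=\Sigma^{-1}A\C\beta$ need not be available. Second, $P_{\wh B}(\X P_{\wh B})^+\X$ is \emph{not} a contraction on all of $\RR^p$: it is only a contraction when restricted to the range of $\wh B$, since $\|P_{\wh B}(\X P_{\wh B})^+\X P^\perp_{\wh B}\|_\op$ can be as large as $\sqrt{\wh\psi/\wh\eta}$. Third, the inequality $\|\alpha^*\|^2 \lesssim \beta^\T(A^\T A)^{-1}\beta$ does \emph{not} follow from $\Sigma \succeq A\C A^\T$; carrying out the Woodbury computation introduces a factor $\kappa(\Gamma)$ that is not allowed in the constant. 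Fourth, the residual $\Z(\beta - A^\T\alpha^*)$ is nonzero and would need its own control, and the event $\E'_{\W}\subset\E^*$ is tailored to $A^{+\T}\beta$, not to $\alpha^*$. Replacing $\alpha^*$ by $A^{+\T}\beta$ throughout --- which is exactly what the paper does --- removes all four difficulties simultaneously.
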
 
	\begin{proof}%[Proof of Lemma \ref{lem_t1}]
		Starting with the identity
		\begin{eqnarray}\label{identity}
		\wh \alpha_{\wh B} = \wh B (\X \wh B)^+\Y = \wh B (\X \wh B)^+(\Z\beta + \Eps),
		\end{eqnarray}
		with $(\X \wh B)^+ \coloneqq (\wh B \X^\T \X \wh B)^{+}\wh B^\T \X^\T$, we have 
		\[
		\|\wh \alpha_{\wh B}\|^2 \le 2\left\|
		\wh B (\X \wh B)^+ \Eps
		\right\|^2 + 2 \left\|
		\wh B (\X \wh B)^+\Z\beta
		\right\|^2.
		\]
		To bound the first term, notice that 
		\begin{align*}
		    		    \left\|
		\wh B (\X \wh B)^+ \Eps
		\right\|^2 &= \Eps^\T (\X\wh B)^{+\T} \wh B^\T \wh B (\X \wh B)^+\Eps \\
		&= \Eps^\T M \Eps\\
		&\le 2\gamma_{\eps}^2\sigma^2\Big[ 2\|M\|_{\op} \log n+ \tr(M)\Big],
		\end{align*}
        where the last step holds on $\E^*$ (in particular, on $\E_M\subset \E^*$).
% 		By the independence between $\Eps$ and both $\X$ and $\wh B$ and similar arguments of conditioning and unconditioning on $\X$ and $\wh B$, an application of Lemma \ref{lem_quad} with $\xi = \Eps$, $H = M$, $\gamma_{\xi} = \sigma\gamma_{\eps}$ and  $t = \log n$  gives 
% 		\begin{align*}
% 		\PP\left\{
% 		\left\|
% 		\wh B (\X \wh B)^+ \Eps
% 		\right\|^2 \le 2\gamma_{\eps}^2\sigma^2\Big[ 2\|M\|_{\op} \log n+ \tr(M)\Big]
% 		\right\} \ge 1 - n^{-1}.
% 		\end{align*}
		Observe that, on $\E^*$,
		\begin{align*}
		\tr(M)  &= \tr\left((\X\wh B)^{+\T} \wh B^\T \wh B (\X \wh B)^+\right)\\
		& \le \rank(\X \wh B) \cdot \|M\|_{\op}\\
		& = \wh r \|M\|_{\op}.
		\end{align*}
		Write the SVD of $\wh B$ as $\wh B =  U DV^\T$ where $U\in \RR^{p\times r_0}$ and $V \in \RR^{q\times  r_0}$ are orthogonal matrices with $r_0 = \rank(\wh B)$. Recalling that 
		$(\X\wh B)^+ = (\wh B^\T \X^\T \X \wh B)^+ \wh B\X^\T$, the following holds, on the event $\E^*$,
		\begin{align}\label{bd_bxb}\nonumber
		\|M\|_{\op}  & = \left\| (\X\wh B)^{+\T} \wh B^\T \wh B (\X \wh B)^+\right\|_{\op}\\\nonumber
		&\overset{(i)}{=} \left\| \wh B (\X \wh B)^+(\X\wh B)^{+\T} \wh B^\T \right\|_{\op}\\\nonumber
		& = \left\|  \wh B (\wh B^\T \X^\T \X \wh B)^+ \wh B\X^\T \X \wh B (\wh B^\T \X^\T \X \wh B)^+  \wh B^\T\right\|_{\op}\\\nonumber
		&= \left\|\wh B\left(\wh B^\T \X^\T \X \wh B\right)^+ \wh B^\T \right\|_{{\rm op}}\\\nonumber
		&=  \left\|U\left(U^\T\X^\T \X U\right)^+ U^\T \right\|_{{\rm op}}\\\nonumber
		&\overset{(ii)}{\le} \sigma_{\wh r}^{-2}\left(\X U\right)\\
		&\overset{(iii)}{=}  (n\wh \eta)^{-1}
		\end{align}
		where we used $\|FF^\T\|_{\op} = \|F^\T F\|_{\op}$ for any matrix $F$ in $(i)$, $\rank(\X U) = \rank(\X P_{\wh B}) = \wh r$ in $(ii)$  and 
		\[
		\sigma^2_{\wh r}(\X U) = \lambda_{\wh r}(\X U U^\T \X) = \lambda_{\wh r}(\X P_{\wh B}^2 \X) = \sigma_{\wh r} (\X P_{\wh B})
		\]
		in $(iii)$.
		This concludes, on the event $\E^*$,
		\begin{equation}\label{bd_bxbeps}
		\left\|
		\wh B (\X \wh B)^+ \Eps
		\right\|^2 \le {2\gamma_{\eps}^2\sigma^2 \over n\wh \eta}(\wh r +  2\log n).
		\end{equation}
		
		On the other hand, by $A^\T A^{+\T} = \bI_K$ and $\X = \Z A^\T + \W$, observe that
		\begin{align}\label{eq_term}\nonumber
		\wh B (\X \wh B)^+\Z&= \wh B (\X \wh B)^+\Z A^\T A^{+\T}\\\nonumber
		& = \wh B (\X \wh B)^+(\X - \W) A^{+\T}\\
		&= \wh B (\X \wh B)^+\X P_{\wh B}A^{+\T} +\wh B (\X \wh B)^+\X P^{\perp}_{\wh B}A^{+\T} - \wh B (\X \wh B)^+\W A^{+\T}.
		\end{align}
		By $P_{\wh B} = \wh B \wh B^+$ and the inequality $(a+b+c)^2\le 3a^2+3b^2+3c^2$, 
		\begin{align}\label{eq_bd_bxbzbeta}
		\left\|
		\wh B (\X \wh B)^+\Z\beta
		\right\|^2 &\le 3	\left\| \wh B (\X \wh B)^+\X \wh B \wh B^{+}A^{+\T}  \beta
		\right\|^2+3\left\|\wh B (\X \wh B)^+\X P^{\perp}_{\wh B}A^{+\T} \beta\right\|^2\\\nonumber
		&\quad  + 3\left\|\wh B (\X \wh B)^+\W A^{+\T}\beta\right\|^2\\\nonumber
		&\le 3	\left\| \wh B (\X \wh B)^+\X \wh B \wh B^{+}\right\|_{{\rm op}}^2\left\|A^{+\T}  \beta
		\right\|^2+3\left\|\wh B (\X \wh B)^+\right\|_{{\rm op}}^2\left\|\X P^{\perp}_{\wh B}\right\|_{\op}^2 \left\|A^{+\T} \beta\right\|^2\\\nonumber
		&\quad  + 3\left\|\wh B (\X \wh B)^+\right\|_{{\rm op}}^2\left\|\W A^{+\T}\beta\right\|^2.
		\end{align}
		Recalling $\wh B = UDV^\T$, on the event $\E^*$, the following observation
		\begin{align*}
		       \left\| \wh B (\X \wh B)^+\X \wh B \wh B^{+}\right\|_{{\rm op}} &= \left\|
		U\left( U^\T \X^\T \X U\right)^+  U^\T \X^\T \X  UU^\T 
		\right\|_{{\rm op}}\\
		&\le  \left\|\left( U^\T \X^\T \X U\right)^+  U^\T \X^\T \X  U 
		\right\|_{{\rm op}}\le 1,
		\end{align*}
		together with (\ref{bd_bxb}), concludes 
		\begin{align}\label{bd_bxbz}
		\left\|
		\wh B (\X \wh B)^+\Z\beta
		\right\|^2& \le 3 \beta^\T(A^\T A)^{-1}\beta + 3 \wh \eta^{-1} \left(\wh \psi  \beta^\T (A^\T A)^{-1}\beta+  4\gamma_{w}^2\beta^\T A^+\Gamma A^{+\T}\beta\right).
		\end{align}
		Collecting (\ref{bd_bxbeps}) -- (\ref{bd_bxbz}) concludes the proof. 
	\end{proof}
	
	\bigskip
	
	\begin{lemma}\label{lem_t2}
	    Under conditions of Theorem \ref{thm_pred}, on the event $\E^*$ defined in (\ref{def_event_star}),
		\begin{align*}
		\left\|
		\C^{1/2}\left(A^\T \wh \alpha_{\wh B}-\beta \right)
		\right\|^2 &\lesssim_{\theta}  \left(1+ {\errw \over \wh \eta}\right)\left( {K\wedge \wh r+\log n\over n}\sigma^2 +\beta^\T A^+\Gamma A^{+\T}\beta\right)\\
		&\quad + \left[\left(1+ {\errw \over \wh \eta}\right)\wh \psi + \errw\right]\beta^\T(A^\T A)^{-1}\beta.
		\end{align*}
	\end{lemma}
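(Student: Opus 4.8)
The plan is to split $A^\T\wh\alpha_{\wh B}-\beta$ into an $\Eps$-driven piece and a $\Z\beta$-driven piece and bound each in the $\C$-weighted norm. Starting from $\wh\alpha_{\wh B}=\wh B(\X\wh B)^+(\Z\beta+\Eps)$ and Lemma~\ref{lem_invariant}, which gives $\wh B(\X\wh B)^+=P_{\wh B}(\X P_{\wh B})^+$, I would write
\[
A^\T\wh\alpha_{\wh B}-\beta \;=\; \underbrace{A^\T P_{\wh B}(\X P_{\wh B})^+\Eps}_{T_1}\;+\;\underbrace{\big(A^\T P_{\wh B}(\X P_{\wh B})^+\Z\beta-\beta\big)}_{T_2},
\]
so that $\|\C^{1/2}(A^\T\wh\alpha_{\wh B}-\beta)\|^2\le 2\|\C^{1/2}T_1\|^2+2\|\C^{1/2}T_2\|^2$. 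The device underlying both bounds is that on $\E_{\Z}\subseteq\E^*$ the matrix $\Z$ has full column rank, so $\Z^+\Z=\bI_K$, and from $\X P_{\wh B}=\Z A^\T P_{\wh B}+\W P_{\wh B}$ one gets $A^\T P_{\wh B}=\Z^+(\X P_{\wh B}-\W P_{\wh B})$ and $\beta=\Z^+\Z\beta$; moreover $\E_{\Z}$ yields $n^{-1}\Z^\T\Z\succeq c(\z)\C$, hence $\|\C^{1/2}\Z^+\|_{\op}^2=\lambda_1\!\big(\C^{1/2}(\Z^\T\Z)^{-1}\C^{1/2}\big)\le (c(\z)n)^{-1}$. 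This is what allows every factor of the form $\C^{1/2}A^\T v$ to be routed through $\Z^+$, rather than through the crude estimate $\|\C^{1/2}A^\T v\|\le\|A\C A^\T\|_{\op}^{1/2}\|v\|$, which would inject the (typically large) quantity $\|A\C A^\T\|_{\op}$ and destroy the rate.

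For $T_1$: here $\|\C^{1/2}T_1\|^2=\Eps^\T M'\Eps$ with $M'$ the matrix defined in Lemma~\ref{lem_event}, so on $\E_{M'}\subseteq\E^*$ it is at most a constant (depending on $\gamma_\eps$) times $\sigma^2(\|M'\|_{\op}\log n+\tr(M'))$. Writing $M'=S^\T S$ with $S=\C^{1/2}A^\T P_{\wh B}(\X P_{\wh B})^+=\C^{1/2}\Z^+\big(\Pi_1-\W\,\wh B(\X\wh B)^+\big)$, where $\Pi_1:=(\X P_{\wh B})(\X P_{\wh B})^+$ is an orthogonal projection, I would combine $\|\C^{1/2}\Z^+\|_{\op}^2\le(c(\z)n)^{-1}$, the bound $\|\wh B(\X\wh B)^+\|_{\op}^2\le(n\wh\eta)^{-1}$ from (\ref{bd_bxb}), and $\|\W\|_{\op}^2\le n\errw$ on $\E_{\W}$ to obtain $\|M'\|_{\op}\lesssim n^{-1}(1+\errw/\wh\eta)$; since $\rank(M')\le K\wedge\wh r$ one has $\tr(M')\le(K\wedge\wh r)\|M'\|_{\op}$, and hence $\|\C^{1/2}T_1\|^2\lesssim (1+\errw/\wh\eta)(K\wedge\wh r+\log n)\,\sigma^2/n$, the first line of the claimed bound.

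For $T_2$: substituting the two identities of the first paragraph gives $T_2=-\Z^+P_{\X\wh B}^{\perp}\Z\beta-\Z^+\W\,\wh B(\X\wh B)^+\Z\beta$, where $P_{\X\wh B}^{\perp}=\bI_n-(\X\wh B)(\X\wh B)^+$. Pulling out $\|\C^{1/2}\Z^+\|_{\op}^2\le(c(\z)n)^{-1}$, I would bound $\|P_{\X\wh B}^{\perp}\Z\beta\|^2$ via the event $\E_{\Z\beta}\subseteq\E^*$, which produces the terms $\beta^\T A^+\Gamma A^{+\T}\beta$ and $\wh\psi\,\beta^\T(A^\T A)^{-1}\beta$, and bound the second summand using $\E_{\W}$ together with the estimate (\ref{bd_bxbz}) for $\|\wh B(\X\wh B)^+\Z\beta\|^2$ already proved inside Lemma~\ref{lem_t1}, which produces $\errw$ times $\beta^\T(A^\T A)^{-1}\beta+\wh\eta^{-1}\wh\psi\,\beta^\T(A^\T A)^{-1}\beta+\wh\eta^{-1}\beta^\T A^+\Gamma A^{+\T}\beta$. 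Collecting these gives $\|\C^{1/2}T_2\|^2\lesssim (1+\errw/\wh\eta)\big(\beta^\T A^+\Gamma A^{+\T}\beta+\wh\psi\,\beta^\T(A^\T A)^{-1}\beta\big)+\errw\,\beta^\T(A^\T A)^{-1}\beta$, and adding the $T_1$ bound finishes the argument.

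The genuinely delicate point will be the routing flagged in the first paragraph: every occurrence of $\C^{1/2}A^\T(\cdot)$ must be passed through $\Z^+(\cdot)$ rather than bounded directly, so that the smallest-singular-value quantity $\wh\eta$ and the noise level $\errw$ — not the largest signal eigenvalue $\|A\C A^\T\|_{\op}$ — govern the bound. Once that is in place, the remaining work is routine manipulation of Moore--Penrose inverses and projections, together with the deviation bounds already packaged into the event $\E^*$ in Lemma~\ref{lem_event}.
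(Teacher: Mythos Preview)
Your proposal is correct and follows essentially the same approach as the paper's own proof: the same $T_1/T_2$ decomposition, the same key identity $A^\T\wh B(\X\wh B)^+=\Z^+\big(P_{\X\wh B}-\W\wh B(\X\wh B)^+\big)$ obtained via $\Z^+\Z=\bI_K$ on $\E_{\Z}$, the same use of $\E_{M'}$ together with the rank bound $\rank(M')\le K\wedge\wh r$ for $T_1$, and the same appeal to $\E_{\Z\beta}$ and the estimate (\ref{bd_bxbz}) for $T_2$. Your emphasis on ``routing through $\Z^+$'' to avoid the factor $\|A\C A^\T\|_{\op}$ is exactly the mechanism the paper exploits.
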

	\begin{proof}%[Proof of Lemma \ref{lem_t2}]
		Use identity (\ref{identity}) and the inequality $(x+y)^2 \le 2x^2+2y^2$ to find
		\begin{align}\label{eq_Atheta}\nonumber
		&\left\|
		\C^{1/2}\left(A^\T \wh \alpha_{\wh B}-\beta \right)
		\right\|^2 \\
		& \le 2\left\|
		\C^{1/2}[A^\T \wh B (\X \wh B)^+\Z -\bI_K]\beta 
		\right\|^2 +2	\left\|
		\C^{1/2}A^\T \wh B(\X \wh B)^+\Eps
		\right\|^2 .
		\end{align}
		For the first term, since $\Z\in \RR^{n\times K}$ has $\rank(\Z) = K$ on the event $\E^*$, we have 
		\begin{align}\label{eq_ABXB}\nonumber
		A^\T \wh B (\X \wh B)^+ -\Z^+ &= \Z^+ \Z A^\T\wh B (\X \wh B)^+ -\Z^+
		&(\textrm{by $\Z^+\Z = \bI_K$ on $\E^*$})\\\nonumber
		&= \Z^+ (\X-\W)\wh B (\X \wh B)^+ -\Z^+\\
		&= - \Z^+P_{\X \wh B}^{\perp} - \Z^+\W\wh B(\X \wh B)^+,
		\end{align}
		which yields 
		\begin{align}\label{eq_bd_lem_t2}\nonumber
		&\left\|
		\C^{1/2}[A^\T \wh B (\X \wh B)^+\Z -\bI_K]\beta 
		\right\|^2 \\ \nonumber
		&\le 2\left\|
		\C^{1/2}\Z^+ P_{\X \wh B}^{\perp}\Z \beta 
		\right\|^2 +2\left\|
		\C^{1/2}\Z^+\W \wh B(\X \wh B)^+\Z \beta 
		\right\|^2 \\
		&\lesssim {1\over n}\left\|P_{\X \wh B}^{\perp}\Z \beta 
		\right\|^2+{1\over n}\left\|\W \wh B(\X \wh B)^+\Z \beta 
		\right\|^2 \\\nonumber
		&\lesssim {1\over n}\left\|P_{\X \wh B}^{\perp}\Z \beta 
		\right\|^2 +\errw\cdot \left\| \wh B(\X \wh B)^+\Z \beta 
		\right\|^2 .
		\end{align}
		We used $\|\C^{1/2}\Z^+\|_{{\rm op}} = \sigma_K^{-1}(\Z \Omega^{-1/2})\lesssim 1/\sqrt{n}$ on $\E^*$ in the third line. The event $\E_{\Z\beta}$ and (\ref{bd_bxbz}) conclude 
		\begin{align}\label{bd_abxbz_I}
		&\left\|
		\C^{1/2}[A^\T \wh B (\X \wh B)^+\Z -\bI_K]\beta 
		\right\|^2 \\ \nonumber
		&\qquad \lesssim    \left(1+ {\errw \over \wh \eta}\right)\left(\beta^\T A^+\Gamma A^{+\T}\beta+ \wh \psi  \beta^{\top}      (A^\T A)^{-1}\beta\right) + \errw\beta^\T(A^\T A)^{-1}\beta.
		\end{align}

		For the second term in (\ref{eq_Atheta}), we use that on $\E^*$ (in particular, $\E_{M'}\subset \E^*$),
		\[\left\|
		\C^{1/2}A^\T \wh B(\X \wh B)^+\Eps
		\right\|^2 \le 2\gamma_{\eps}^2\sigma^2\left[
		2\|M'\|_{\op} \log n + \tr(M')
		\right]\]
% 		the independence between $\Eps$ and $\X$, $\wh B$ and $\E^*$ and an application of Lemma \ref{lem_quad} with $\xi = \Eps$, $\gamma_{\xi} = \sigma \gamma_{\eps}$,
% 		$$
% 		H =  (\X \wh B)^{+\T} \wh B^\T A\C A^\T \wh B  (\X \wh B)^+ \coloneqq M'
% 		$$
% 		and $t = \log n$,
% 		yield
% 		\[
% 		\PP\left\{
% 		\left\|
% 		\C^{1/2}A^\T \wh B(\X \wh B)^+\Eps
% 		\right\|^2 \le 2\gamma_{\eps}^2\sigma^2\left(
% 		2\|M'\|_{\op} \log n + \tr(M')
% 		\right)
% 		\right\}\ge 1-n^{-1}.
% 		\]
		Since $\rank(\C) = K$ and $\rank(\X \wh P_{\wh B}) = \wh r$, we have 
		\[
		\tr(M') \le (K\wedge \wh r\ )\|M'\|_{\op}.
		\]
		Moreover,
		\begin{align*}
		\|M'\|_{\op} = \left\|
		\C^{1/2}A^\T \wh B(\X \wh B)^+\right\|^2_{{\rm op}}
		&\le  2\left\|
		\C^{1/2}\Z^+P_{\X \wh B}\right\|_{{\rm op}}^2+ 2 \left\|
		\C^{1/2}\Z^+\W\wh B(\X \wh B)^+\right\|_{{\rm op}}^2 \\
		&\lesssim {1 \over n}+ \errw\cdot \left\|\wh B(\X \wh B)^+\right\|_{{\rm op}}^2 
		\end{align*} 
		by using (\ref{eq_ABXB}) in the first line and $\E^*$ in the second line. Invoking (\ref{bd_bxb}) concludes that, on $\E^*$,
		\begin{align}\label{bd_ABXBeps}
		\left\|
		\C^{1/2}A^\T \wh B(\X \wh B)^+\Eps
		\right\|^2  &\lesssim {(K\wedge \wh r+\log n) \sigma^2 \over n} \left(
		1 + {\errw \over \wh \eta}
		\right).
		\end{align}
		Plugging (\ref{bd_abxbz_I}) and (\ref{bd_ABXBeps}) into  (\ref{eq_Atheta}) completes the proof. 
	\end{proof}

	\medskip

	\begin{lemma}\label{lem_comp_Zbeta}
	Under conditions of Theorem \ref{thm_pred}, on the event  $\E_{\W}'$ from (\ref{def_event_star}),  
	 \begin{align}\label{bd_P_comp_Zbeta}
 	    {1\over n}\left\|P_{\X\wh B}^{\perp}\Z\beta\right\|^2\le 8\gamma_w^2 \beta^\T A^+\Gamma A^{+\T}\beta + 2 \wh \psi \beta^\T (A^\T A)^{-1}\beta.
 	\end{align}
	\end{lemma}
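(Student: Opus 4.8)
The idea is to write $\Z\beta$ in terms of $\X$ and $\W$ via the factor structure, and then use that the oblique projection $P_{\X\wh B}^{\perp}$ kills everything lying in the column space of $\X\wh B$, so that only the ``orthogonal'' part of $\X$ (controlled by $\wh\psi$) and a $\W$-term (controlled on $\E_{\W}'$) remain.

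First, since $A$ has full column rank $K$ we have $A^+ = (A^\T A)^{-1}A^\T$, hence $A^\T A^{+\T} = \bI_K$, and therefore, using $\X = \Z A^\T + \W$,
\[
\Z\beta = \Z A^\T A^{+\T}\beta = (\X - \W)A^{+\T}\beta = \X A^{+\T}\beta - \W A^{+\T}\beta.
\]
Decomposing $\X A^{+\T}\beta = \X P_{\wh B}A^{+\T}\beta + \X P_{\wh B}^{\perp}A^{+\T}\beta$ and noting that $\X P_{\wh B}A^{+\T}\beta = \X\wh B\,(\wh B^+A^{+\T}\beta)$ lies in the column space of $\X\wh B$, it is annihilated by $P_{\X\wh B}^{\perp}$. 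Thus
\[
P_{\X\wh B}^{\perp}\Z\beta = P_{\X\wh B}^{\perp}\X P_{\wh B}^{\perp}A^{+\T}\beta - P_{\X\wh B}^{\perp}\W A^{+\T}\beta .
\]

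Second, apply $\|u+v\|^2 \le 2\|u\|^2 + 2\|v\|^2$ and bound each piece using that $P_{\X\wh B}^{\perp}$ is a contraction. For the first piece, $\|P_{\X\wh B}^{\perp}\X P_{\wh B}^{\perp}A^{+\T}\beta\|^2 \le \|\X P_{\wh B}^{\perp}\|_{\op}^2\,\|A^{+\T}\beta\|^2 = n\wh\psi\,\beta^\T(A^\T A)^{-1}\beta$, where we used the definition $\wh\psi = n^{-1}\sigma_1^2(\X P_{\wh B}^{\perp})$ together with $\|A^{+\T}\beta\|^2 = \beta^\T A^+A^{+\T}\beta = \beta^\T(A^\T A)^{-1}\beta$. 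For the second piece, $\|P_{\X\wh B}^{\perp}\W A^{+\T}\beta\|^2 \le \|\W A^{+\T}\beta\|^2 \le 4n\gamma_w^2\,\beta^\T A^+\Gamma A^{+\T}\beta$ on the event $\E_{\W}'$, by definition of $\E_{\W}'$ in (\ref{def_event_star}).

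Third, combining the two bounds and dividing by $n$ gives
\[
\frac1n\left\|P_{\X\wh B}^{\perp}\Z\beta\right\|^2 \le 2\wh\psi\,\beta^\T(A^\T A)^{-1}\beta + 8\gamma_w^2\,\beta^\T A^+\Gamma A^{+\T}\beta,
\]
which is exactly (\ref{bd_P_comp_Zbeta}). There is no real obstacle in this argument; the only point needing care is the observation that the in-range component $\X P_{\wh B}A^{+\T}\beta$ is removed by $P_{\X\wh B}^{\perp}$, which is precisely what lets $\wh\psi = n^{-1}\sigma_1^2(\X P_{\wh B}^{\perp})$ appear in place of the much larger $n^{-1}\|\X\|_{\op}^2$.
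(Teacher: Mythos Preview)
Your proof is correct and follows essentially the same route as the paper: the paper also writes $\Z\beta=(\X-\W)A^{+\T}\beta$, removes the in-range component by subtracting $\X\wh B G\beta$ with $G=\wh B^+A^{+\T}$ (which is exactly your observation that $\X P_{\wh B}A^{+\T}\beta$ lies in the column space of $\X\wh B$), and then applies $(a+b)^2\le 2a^2+2b^2$ together with $\E_{\W}'$ and the definition of $\wh\psi$. The only cosmetic difference is that the paper phrases the cancellation via an auxiliary matrix $G$ before specializing it, whereas you invoke $P_{\wh B}=\wh B\wh B^+$ directly.
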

	\begin{proof}
	By $\X = \Z A^\T + \W$, one has 
 	\begin{eqnarray*}
 		P_{\X\wh B}^{\perp}\Z\beta &=&	P_{\X\wh B}^{\perp} \left ( \X A^{+\T} - \W A^{+\T}\right )\beta  \\
 		& =&  -P_{\X\wh B}^{\perp}\W A^{+\T}\beta + 	P_{\X\wh B}^{\perp}\X A^{+\T}\beta \\
 		&=& -  P_{\X\wh B}^{\perp}\W A^{+\T}\beta +
 		P_{\X\wh B}^{\perp}\X\left(A^{+\T} - \wh B G\right)\beta
 	\end{eqnarray*}
 	for any matrix $G \in \RR^{q\times K}$. 
 	%Ideally, we want to choose $G$ such that 
 	%\begin{equation}\label{eq_target}
 	%A^{+\T} - \wh B G = 0.
 	%\end{equation}
 	%Simple algebra reveals that the above is only possible if $\wh B \wh B^{+} A= A$ which requires that the column space of $A$ lies in that of $\wh B$. When $A$ is known beforehand, one can choose $\wh B = A$  such that $(\ref{eq_target})$ is met. Alternatively, (\ref{eq_target}) is also guaranteed if one choose $q \ge p$ such that $\wh B$ has full row-rank. Choosing $q \ge p$ corresponds to projecting the original data matrix $\X \in \RR^{n\times p}$ to an even higher dimensional matrix $\X \wh B \in \RR^{n\times q}$. 
 	%In general, we can 
 	Choose 
 	$$
 	G = \wh B^+ A^{+\T} = \min_{G'} \left\|A^{+\T} - \wh BG'\right\|_F
 	$$ 
 	to obtain
 	\begin{align*}
 	P_{\X\wh B}^{\perp}\Z\beta &= P_{\X\wh B}^{\perp}\W A^{+\T}\beta +
 	P_{\X\wh B}^{\perp}\X P_{\wh B}^{\perp}A^{+\T} \beta.
 	\end{align*}
 	Then by the basic inequality $(a+b)^2\le 2a^2+2b^2$, 
 	\begin{align}\label{eq_bd_Zbeta}
 	\left\|P_{\X\wh B}^{\perp}\Z\beta\right\|^2&\le 2
 	\left\|P_{\X\wh B}^{\perp}\W A^{+\T}\beta \right\|^2+ 2
 	\left\|P_{\X\wh B}^{\perp}\X P_{\wh B}^{\perp} A^{+\T} \beta\right\|^2\\\nonumber
 	&\le 2
 	\left\|P_{\X\wh B}^{\perp}\right\|_{{\rm op}}^2 
 	\left\|\W A^{+\T}\beta \right\|^2+ 2
 	\left\|\X P_{\wh B}^{\perp}\right\|_{\op}^2\left\|A^{+\T} \beta\right\|^2\\\nonumber
 	&\le 2
 	\left\|\W A^{+\T}\beta \right\|^2+ 2 n \wh \psi \beta^\T (A^\T A)^{-1}\beta
 	\end{align}
 	where we invoked the definition of $\wh \psi$ in the last line. Invoke $\E_{\W}'$ from (\ref{def_event_star}) to finish the proof.
	\end{proof}
	
	\subsection{Proofs for Section \ref{sec_pred_pcr}}\label{app_proof_PCR}
	\subsubsection*{Proof of Corollary \ref{cor_PCR_k}}
	{The corollary is an application of Theorem \ref{thm_pred} with $\wh B=\U_k$.
	Given any realization of $(\X,\Y)$ and (possibly random) $k \in \{0,1, \ldots, \rank(\X)\}$, 
	we may write  the SVD of $\X$ as
	\begin{align*}
	\X = \V \D \U^\T &~= ~  \sum_{1\le j\le k} \D_{jj} \V_{\sbt j} \U_{\sbt j}^\T  + \sum_{j>k} \D_{jj} \V_{\sbt j} \U_{\sbt j}^\T\\
	& ~ \coloneqq ~ \V_k \D_k \U_k^\T + \V_{(-k)}\D_{(-k)}\U_{(-k)}^\T.
	\end{align*}
	The diagonal matrix $\D$ contains the non-increasing singular values and $\U_k$ contains the corresponding  $k$ right-singular vectors.
	Consequently, 
	\begin{align*}
	&\rank(\X \U_{k}) = \rank(\V_k \D_{k})=k,\\
	&\sigma_1^2\left(\X P_{\U_{k}}^\perp\right)  = \left\|\X \U_{(-k\ )} \U_{(-k)}^\T \right\|_{\op}^2 = \left\|\V_{(-k\ )}\D_{(-k)}\U_{(-k\ )}^\T\right\|_{{\rm op}}^2= \sigma_{k+1}^2\left(\X\right) = n\wh \lambda_{k+1},\\
	 &  \sigma_1^2\left(\X P_{\U_{k}}\right)  = \sigma_1^2\left(\V_k\D_k\U_k^\T\right) =\sigma_k^2(\X)= n \wh \lambda_k.
		\end{align*}
	Invoke Theorem \ref{thm_pred}
	with $\wh B=\U_k$, $\wh r=k$,
	$\wh \psi = \wh\lambda_{k+1}$ and $\wh \eta = \wh\lambda_k$
		to conclude the proof.\qed
		}
%	
%	
%	
%	Pick any $k \in \{0,1, \ldots, \rank(\X)\}$. 
%	From Theorem \ref{thm_pred}, it suffices to show that $\wh r = k$ and $\wh \psi = \wh\lambda_{k+1}$ and $\wh \eta = \wh\lambda_k$. First, $\wh r = k$ follows trivially as $\wh r = \rank(\X \U_{k}) = k$. Write the SVD of 
%	\begin{align*}
%	\X = \V \D \U^\T &~= ~  \sum_{1\le j\le k} \D_{jj} \V_{\cdot j} \U_{\cdot j}^\T  + \sum_{j>k} \D_{jj} \V_{\cdot j} \U_{\cdot j}^\T\\
%	& ~ \coloneqq ~ \V_k \D_k \U_k^\T + \V_{(-k)}\D_{(-k)}\U_{(-k)}^\T
%	\end{align*}
%	where  the diagonal matrix $\D$ contains the non-increasing singular values and $\U_k$ contains the $k$ right singular vectors. We thus have 
%	\begin{align*}
%	\sigma_1^2\left(\X P_{\U_{k}}^\perp\right)  & = \left\|\X \U_{(-k)} \U_{(-k)}^\T \right\|_{\op}^2 = \left\|\V_{(-k)}\D_{(-k)}\U_{(-k)}^\T\right\|_{{\rm op}}^2= \sigma_{k+1}^2\left(\X\right) = n\wh \lambda_{k+1}.
%	\end{align*}
%	Similarly, 
%	\[
%	   \sigma_1^2\left(\X P_{\U_{k}}\right) = \sigma_1^2\left(\V_k\D_k\U_k^\T\right) =\sigma_k^2(\X)= n \wh \lambda_k.
%	\]
%	Invoke Theorem \ref{thm_pred} to conclude the proof. \qed 

	\subsubsection*{Proof of Corollary \ref{cor_PCR_delta_w} \& Remark \ref{rem_K}}
	We first prove Corollary \ref{cor_PCR_delta_w}. 
	From Corollary \ref{cor_PCR_k}, it suffices to show $\PP_\theta\{\wh s \le K\}\ge 1-c/n$, which is guaranteed by proving
	\[
	    \PP_\theta \left\{{1\over n}\sigma^2_{K+1}(\X) < C_0\errw\right\} \ge 1-c/n.
	\]
	By Weyl's inequality,
	\[
	    \sigma_{K+1}(\X) \le \sigma_{K+1}(\Z A^\T) + \sigma_{1}(\W) = \sigma_{1}(\W).
	\]
	The result then follows by (\ref{bd_W_op}) and $C_0 > 1$. \qed 

    To prove Remark \ref{rem_K}, we will show
	\[
	    \PP\left\{\wh \lambda_K \gtrsim \lambda_k(A\C A^\T) - \errw\right\} \ge 1-n^{-c}.
	\]
	Note that Weyl's inequality yields 
	\[
    	\sigma_k(\X)  \ge \sigma_{k}(\Z A^\T) - \sigma_1(\W) \ge 
	    \sigma_K(\Z \C^{-1/2})\sigma_{k}(\C^{1/2} A^\T) - \sigma_1(\W).
	\]
	We obtain the desired result by invoking $\E_{\Z}$ from Lemma \ref{lem_event} and (\ref{bd_W_op}).
	\qed

	 %The proof of Proposition \ref{prop_K} mainly uses the results in \cite{rank19}. The notations defined in \cite{rank19} need some modifications.
	 
	 \subsubsection*{Proof of Proposition \ref{prop_K}}
	 
	 		    We work on the event 
	\[
	\E_{\W}'' := \left\{
	\sigma_1^2(\W) \le  n\delta_W
	\right\} \cap \left\{
	c_1\ \tr(\Gamma) \le {1\over n}\|\W\|_F^2 \le C_1\ \tr(\Gamma)
	\right\}
	\]
	with $\errw$ defined in (\ref{def_ErrW})  and some constants $C_1\ge c_1>0$, depending on $\gamma_w$.
	 We have on the event   $\E_{\W}''$,
	\begin{align}\label{bd_mu_n}\nonumber
	2\sigma_1^2(\W)  {np \over \|\W\|_F^2 } &\le 2n\errw  {np \over \|\W\|_F^2 }\\\nonumber
	&\le {2 \errw \over c_1} {np \over \tr(\Gamma)} & \textrm{by }\E_{\W}''\\ \nonumber
	&= {2c\over c_1}\left( {np\over r_e(\Gamma)} + p\right) & \textrm{by }(\ref{def_ErrW})\\\nonumber
	&\le {2c\over c_1}\left( {n\vee p\over c'} + p\right) & \textrm{by }r_e(\sw) \ge c'(n\wedge p)\\\nonumber
	&\le c_0(n+p) = \mu_n
	\end{align}
	by choosing any $c_0 \ge 2c(1+1/c')/c_1$.
	From Theorem 6 and Proposition 7 of  \cite{rank19}  with $P = \bI_n$, $E = \W$ and $m = p$, we deduce 
	\[
	\wt s \le K
	\]
	on the event $\E_{\W}''$. \\
	
To prove the lower bound $\sigma^2_{\wt s}(\X) \gtrsim n\delta_W$, we notice that, on the event $\E_{\W}''$,
	\begin{equation}\label{lbd_sigma_X}
	\sigma_{\wt s}^2(\X) \ge \mu_n {\|\X - \X_{(\wt s)}\|_F^2 \over np - \mu_n \wt s} \ge \mu_n {\|\X - \X_{(K)}\|_F^2 \over {np}}.
	\end{equation}
 	The first inequality uses (2.7)
	 in \cite{rank19}, while the second inequality uses 
	  $K\le \bar K$. 
%	$\wt s \le K\le \bar K$. 
	Further invoking (3.8) in Proposition 7 of \cite{rank19} %[Bing and Wegkamp, 2019] 
	yields 
	\[
	{\|\X - \X_{(K)}\|_F^2
		\over np-\mu_n K}\ge {\|\W\|_F^2 \over np}.
	\]
  Next, on the event $\E_{\W}''$, choosing $c_0 \ge 2c(1+1/c')/c_1$ in $\mu_n = c_0(n+p)$, we find
\begin{align*}
\mu_n{\|\W\|_F^2 \over np} &\ge \mu_n c_1  {\tr(\sw) \over p}\\
&\ge 2c\left(1 + {1\over c'}\right)  {n+p \over p}\tr(\sw)\\
%&= c_0c_1{n+p\over p} \tr(\sw) \\
&\ge 2c\left(\tr(\sw) + {1\over c'} {n+p\over p} r_e(\sw)\|\sw\|_{\op}\right)\\
&\ge 2c\left(\tr(\sw) +(n\wedge p) {n+p\over p} \|\sw\|_{\op}\right) &\textrm{ by  $r_e(\sw) \ge c'(n\wedge p)$}\\
&\ge  2c\left(\tr(\sw) + n\|\sw\|_{\op}\right)\\
&= 2 n\delta_W.
\end{align*}
Hence, combining all three previous displays, we derive
\begin{align*}
\sigma_{\wt s}^2(\X) &\ge	\mu_n {\|\X - \X_{(K)}\|_F^2 \over np}\\
&~ \ge~ \mu_n {\|\W\|_F^2 \over np} {np-\mu_n K \over np}\\
	&~ \ge ~   n\delta_W {np-\mu_n K \over np}\\
	&~ \ge ~ {1 \over 1+ \kappa}n\errw &\textrm{by $K\le \bar K$ and  (\ref{est_K})}.
	\end{align*}
Next, we prove  	$\sigma_{\wt s+1}^2(\X) \lesssim \delta_W$.
By (2.7) in \cite{rank19} once again, we have  
	\begin{align*}
	\sigma_{\wt s+1}^2(\X) &\le \mu_n {\|\X - \X_{(\wt s+1)}\|_F^2 \over np - \mu_n (\wt s+1)}.
	\end{align*}
	From (2.3) in Proposition 1 of \cite{rank19}, this inequality is equivalent to 
	\[
	\sigma_{\wt s+1}^2(\X) \le \mu_n {\|\X - \X_{(\wt s)}\|_F^2 \over np - \mu_n \wt s }.
	\]
	Since  $\wt s \le K$ on $\E_{\W}''$, we have 
	\begin{align*}
	\sigma_{\wt s+1}^2(\X) &~ \le ~ \mu_n {\|\X - \X_{(K)}\|_F^2 \over np - \mu_n K}\\
	&~ {\le} ~ \mu_n {np \over np - \mu_n K} {\|\W\|_F^2 \over np}
	&\textrm{by  (3.8) of Proposition 7 in \cite{rank19}}\\
	& ~ \le ~ (1+\kappa)\mu_n  {\|\W\|_F^2 \over np} &\textrm{by (\ref{est_K})}\\
	& ~ \le ~ (1+\kappa)c_0C_1(n+p) {\tr(\Gamma) \over p} & \textrm{by $\E_{\W}''$ and $\mu_n = c_0(n+p)$}\\
	& ~ \le ~ { (1+\kappa)c_0C_1 \over c}n\errw & \textrm{by } \tr(\Gamma) \le p\|\Gamma\|_{\op}.
	\end{align*}
	It remains to prove $1- \PP(\E_{\W}'') \lesssim 1/n$. First note that 
	\[
	{1\over n}\|\W\|_F^2 = \sum_{j=1}^p {1\over n}\W_{\sbt j}^\T \W_{\sbt j}.
	\]
	By invoking Lemma \ref{lem_bernstein} for fixed $j\in [p]$ and some absolute constant $c$,   the inequality \[
	\left|{1\over n}\W_{\sbt j}^\T \W_{\sbt j} - [\Gamma]_{jj}\right| \le c \gamma_w^2 [\Gamma]_{jj} \sqrt{\log p \over n}
	\] holds
	with probability at least  $1-2(p\vee n)^{-2}.$ Apply the union bound over $1\le j\le p$,    invoke $\log p \le C n$ for sufficiently large $C$, and  conclude 
	\[
	\PP\left\{c(\gamma_w)\ \tr(\Gamma) \le {1\over n}\|\W\|_F^2 \le C(\gamma_w)\ \tr(\Gamma)\right\}\ge 1-2(p\vee n)^{-1}.
	\]
Finally, Lemma \ref{lem_op_norm} shows that $\PP\{\sigma_1^2(\W) \le n\delta_W\}\ge 1- e^{-n}$, taking $c$ in $\delta_W$ large enough.   \qed

	\bigskip

	\subsection{Proofs for Section \ref{sec_pred_GLS_ER}}\label{app_proof_GLS_ER}
	
	\subsubsection*{Proof of Corollary \ref{cor_gls_low}}
	By Theorem 5.39 of \cite{vershynin_2012}, $\sigma_p^2(\X\sx^{-1/2})\gtrsim n$ with probability at least $1-cn^{-1}$, where we use that $\X\sx^{-1/2}$ has independent sub-Gaussian rows with sub-Gaussian constant bounded by an absolute constant, which is implied by the sub-Gaussianity of $Z$ and $W$, and that $p\log n\lesssim n$. Thus, with the same probability,
	\[\sigma_p^2(\X)\ge \lp(\sx)\sigma_p^2(\X\sx^{-1/2})\ge \lp(\sw)\sigma_p^2(\X\sx^{-1/2})\gtrsim \lp(\sw)n.\]
	Corollary \ref{cor_gls_low} then follows from Theorem \ref{thm_pred} with $\wh \psi=0$, $\wh \eta \gtrsim \lp(\sw)$, and $\wh r\le p$.
	\qed 
	
	\subsubsection*{Proof of Corollary \ref{cor_GLS}}\label{sec_proof_cor_GLS_high}
			Under conditions of Corollary \ref{cor_GLS},  \cite{bunea2020interpolation} proves that 
			\[
			\PP\left\{
			\sigma_n^2(\X) \gtrsim \tr(\Sigma_W)
			\right\} \ge 1-cn^{-1}.
			\]
		We thus have $r=n$, $\wh \psi=0$, and $\wh \eta \gtrsim \tr(\sw)/n$. Further noting that
			$$
		 \errw = \|\Gamma\|_{{\rm op}}\left(
			1 + {r_e(\Gamma)  \over n}
			\right) \asymp   {\tr(\Gamma) \over n},
			$$
			such that $\errw / \wh \eta \asymp 1$, we conclude
			\begin{align*}
			\R^*(\bI_p)	-\sigma^2&\lesssim  {K + \log n\over n}\sigma^2 + 
			{n \over r_e(\Gamma)}\sigma^2 +   {\tr(\Gamma) \over n} \beta^\T(A^\T A)^{-1}\beta \\
			&\lesssim  {K + \log n\over n}\sigma^2 + 
			{n \over r_e(\Gamma)}	\sigma^2+  {r_e(\Gamma) \over n}\|\Gamma\|_{{\rm op}}\ \beta^\T(A^\T A)^{-1}\beta .
			\end{align*}
		\qed

	\subsubsection*{Proof of Theorem \ref{thm_pred_A}}\label{sec_proof_pred_A}
      Instead of directly applying Theorem \ref{thm_pred}, we slightly modify the proofs of Theorem \ref{thm_pred} to obtain a sharp result for $\R(\wh A)$. 
      
      From the proof of Theorem \ref{thm_pred}, display (\ref{eq_decomp}) gives
      \[
       \R(\wh A) 	-\sigma^2\le \left\|
	    \C^{1/2}\left(A^\T \wh \alpha_{\wh A}-\beta \right)
	        \right\|^2 + \|\Gamma\|_{\op} \left\|
	        \wh \alpha_{\wh A}
	        \right\|^2.
      \]
      We then point out the modifications of the proof of Lemmas \ref{lem_t1} and \ref{lem_t2}. Recall $\wh A \in \RR^{p\times \wh K}$. We work on the event $\E^*$ defined in the proof of Theorem \ref{thm_pred} intersected with the event that $\wh K = K$ and 
      $$
      \|\wh A - A\|_{\rm op}^2 \le \|\wh A- A\|_F^2 \lesssim \|A_J\|_0{\log\pn \over n}.
      $$
      The last two events holds with probability at least  $1-c\pn^{-1}$ for some constant $c > 0$ \citep{LOVE}. 
      In display (\ref{eq_bd_bxbzbeta}) of Lemma \ref{lem_t1} for bounding $\|
	        \wh \alpha_{\wh A}
	        \|^2$, we use 
	  \begin{align*}
		\left\|
		\wh B (\X \wh B)^+\Z\beta
		\right\|^2 &\le 3	\left\| \wh B (\X \wh B)^+\X \wh B \wh B^{+}A^{+\T}  \beta
		\right\|^2+3\left\|\wh B (\X \wh B)^+\X P^{\perp}_{\wh B}A^{+\T} \beta\right\|^2\\\nonumber
		&\quad  + 3\left\|\wh B (\X \wh B)^+\W A^{+\T}\beta\right\|^2\\\nonumber
		&\le 3	\left\| \wh B (\X \wh B)^+\X \wh B \wh B^{+}\right\|_{{\rm op}}^2\left\|A^{+\T}  \beta
		\right\|^2+3\left\|\wh B (\X \wh B)^+\right\|_{{\rm op}}^2\left\|\X P^{\perp}_{\wh B} A^{+\T} \beta\right\|^2\\\nonumber
		&\quad  + 3\left\|\wh B (\X \wh B)^+\right\|_{{\rm op}}^2\left\|\W A^{+\T}\beta\right\|^2.
	\end{align*}
    We change the way to bound the second term on the right hand side. Specifically, set $\wh B = \wh A$ and use $(a+b)^2 \le 2a^2 + 2b^2$ twice to obtain
    \begin{align*}
        \left\|\X P^{\perp}_{\wh A} A^{+\T} \beta\right\|^2 & \le  2\left\|\Z A P^{\perp}_{\wh A} A^{+\T} \beta\right\|^2 + 2\left\|\W P^{\perp}_{\wh A} A^{+\T} \beta\right\|^2\\
        &\le 2\left\|\Z\Omega^{1/2}\right\|_{\rm op}^2 \left\|\C^{1/2}(A-\wh A)^\T P^{\perp}_{\wh A} A^{+\T} \beta\right\|^2 & (\textrm{by }\wh A^\T \wh P_{\wh A}^{\perp} = 0)\\
        &\quad + 4\left\|\W A^{+\T} \beta\right\|^2 + 4\left\|\W P_{\wh A} A^{+\T} \beta\right\|^2     & (\textrm{by } P^{\perp}_{\wh A}  = \bI_p - P_{\wh A}).
    \end{align*} 
    By $\E_{\Z}$, $\E_{\W}'$ and Lemma \ref{lem_WPA}, after a bit algebra, we conclude 
    \begin{align}\label{bd_XAbeta}\nonumber
        {1\over n}\left\|\X P^{\perp}_{\wh A} A^{+\T} \beta\right\|^2 &\lesssim  \left(\|A_J\|_0{\log\pn \over n} + \delta_{W,J}\right)\beta^T(A^\T A)^{-1}\beta+ \beta^\T A^+\Gamma A^{+\T} \beta\\
        &\lesssim 
        \left(\|A_J\|_0{\log\pn \over n} + \|\Gamma\|_{\op}\right)\beta^T(A^\T A)^{-1}\beta+ \beta^\T A^+\Gamma A^{+\T} \beta.
    \end{align}
    with probability at least  $1-cn^{-1}$. In the last step, we used the fact that $\|\Gamma\|_{\op}$ is bounded and $\|A_J\|_{\ell_0/\ell_2} \le \|A_J\|_0$. Together with the proofs of Lemma \ref{lem_t1}, one can deduce that
    \[
        \|\wh \alpha_{\wh A}\|^2 \lesssim {(K +\log n)\sigma^2\over n\wh \eta}+  \beta^\T(A^\T A)^{-1}\beta + \wh \eta^{-1} \left(\wh \psi  \beta^{\top}      (A^\T A)^{-1}\beta +  \beta^\T A^+\Gamma A^{+\T}\beta\right).
    \]
    where 
    \[
        \wh \psi \lesssim \|\Gamma\|_{\op} + \|A_J\|_0{\log\pn \over n}.
    \]
    To bound $\|\C^{1/2}(A^\T \wh \alpha_{\wh A} - \beta)\|^2$, we modify two places in the proof of Lemma \ref{lem_t2}. Display (\ref{eq_bd_lem_t2}) is bounded by 
    \begin{align*}
		\left\|
		\C^{1/2}[A^\T \wh A (\X \wh A)^+\Z -\bI_K]\beta 
		\right\|^2
		&\lesssim {1\over n}\left\|P_{\X \wh A}^{\perp}\Z \beta 
		\right\|^2+{1\over n}\left\|\W \wh A(\X \wh A)^+\Z \beta 
		\right\|^2\\
		&\lesssim {1\over n}\left\|P_{\X \wh A}^{\perp}\Z \beta 
		\right\|^2+{1\over n}\left\|\W P_{\wh A}\right\|_{\rm op}^2\left\|\wh A(\X \wh A)^+\Z \beta 
		\right\|^2
	\end{align*}
	where we will invoke Lemma \ref{lem_WPA}. For the first term of the right hand side, by (\ref{eq_bd_Zbeta}), we have 
	\begin{align*}
 	\left\|P_{\X\wh B}^{\perp}\Z\beta\right\|^2&\le 2
 	\left\|P_{\X\wh B}^{\perp}\W A^{+\T}\beta \right\|^2+ 2
 	\left\|P_{\X\wh B}^{\perp}\X P_{\wh B}^{\perp} A^{+\T} \beta\right\|^2\\\nonumber
 	&\le 2 
 	\left\|\W A^{+\T}\beta \right\|^2+ 2
 	\left\|\X P_{\wh B}^{\perp}A^{+\T} \beta\right\|^2
 	\end{align*}
 	which can be further bounded by using  (\ref{bd_XAbeta}) and invoking the event $\E_{\W}'$. Collecting all these ingredients, we conclude 
 	\begin{align*}
 	    \left\|
	    \C^{1/2}\left(A^\T \wh \alpha_{\wh A}-\beta \right)
	        \right\|^2 &\lesssim \left(1+ {\delta_{W,J} \over \wh \eta}\right)\left( {K+\log n\over n}\sigma^2 +\beta^\T A^+\Gamma A^{+\T}\beta\right)\\
		&\quad + \left[\left(1+ {\delta_{W,J} \over \wh \eta}\right)\wh \psi + \delta_{W,J}\right]\beta^\T(A^\T A)^{-1}\beta.
 	\end{align*}
 	It then remains to lower bound $\wh \eta$ by bounding $\sigma_K(\X P_{\wh A})$ from below. By Weyl's inequality, $\rank(\wh A) = K$, we have 
	\begin{align*}
	\sigma_K\left(\X P_{\wh A}A(A^\T A)^{-1/2}  \right)  &\ge \sigma_K\left(\X A(A^\T A)^{-1/2} \right) - \left\|
	\X P_{\wh A}^{\perp} A(A^\T A)^{-1/2}
	\right\|_{{\rm op}}\\
	&\ge \sigma_K\left(\X AN^{-1/2}N^{1/2}(A^\T A)^{-1/2} \right) - \left\|
	\X P_{\wh A}^{\perp} A(A^\T A)^{-1/2}
	\right\|_{{\rm op}}\\
	&\ge  \sigma_K\left(\X AN^{-1/2}\right)\sigma_K\left(N^{1/2}(A^\T A)^{-1/2}\right) - \left\|
	\X P_{\wh A}^{\perp} A(A^\T A)^{-1/2}
	\right\|_{{\rm op}}.
	\end{align*}
	by writing $N = A^\T \Sigma A$.  
	To lower bound $\sigma_K\left(\X AN^{-1/2} \right)$, using Weyl's inequality again and invoking Lemma \ref{lem_op_norm_diff} yield
	\begin{align*}
	&\lambda_K\left(
	N^{-1/2}A^\T 	{1\over n}\X^\T \X A N^{-1/2}
	\right)\\
	& \gtrsim  \lambda_K\left( N^{-1/2}A^\T \Sigma A N^{-1/2}
	\right)- \left\|
	N^{-1/2}A^\T \left({1\over n}\X^\T \X - \Sigma \right)AN^{-1/2}
	\right\|_{{\rm op}}\\
	&\gtrsim 1 - \sqrt{K\log n\over n} - {K\log n \over n} \gtrsim 1
	\end{align*}
	with probability at least  $1-cn^{-C}$. On the other hand,  by $\X = \Z A^\T + \W$, 
	\begin{align*}
	    \left\|
	\X P_{\wh A}^{\perp} A(A^\T A)^{-1/2}
	\right\|_{{\rm op}} &\le  \left\|
	\Z A^\T  P_{\wh A}^{\perp} A(A^\T A)^{-1/2}
	\right\|_{{\rm op}} +\left\|
	\W  P_{\wh A}^{\perp} A(A^\T A)^{-1/2}
	\right\|_{{\rm op}}\\
	&\le \left\|
	\Z (A-\wh A)^\T
	\right\|_{{\rm op}} +\left\|
	\W A(A^\T A)^{-1/2}
	\right\|_{{\rm op}} + \left\|
	\W  P_{\wh A}A(A^\T A)^{-1/2}
	\right\|_{{\rm op}}\\
	&\le \left\|
	\Z \Omega^{1/2}\right\|_{\rm op} \sigma_1(\C) \left\|(A-\wh A)^\T
	\right\|_{{\rm op}} +\left\|
	\W A(A^\T A)^{-1/2}
	\right\|_{{\rm op}} + \left\|
	\W  P_{\wh A}
	\right\|_{{\rm op}}.
    \end{align*}
    By $\E_{\Z}$ and Lemmas \ref{lem_WPA} and \ref{lem_op_norm}, we have 
    \[
        {1\over n}\left\|
	\X P_{\wh A}^{\perp} A(A^\T A)^{-1/2}
	\right\|_{{\rm op}}  \lesssim \delta_{W,J} + {\|A_J\|_0 \log\pn \over n}\lesssim \|\Gamma\|_\op +   {\|A_J\|_0 \log\pn \over n}
    \]
    with probability at least  $1-cn^{-1}$.
	Provided that 
	\[
	    \lambda_K(A\C A^\T) \ge C \left(\|\Gamma\|_\op +   {\|A_J\|_0 \log\pn \over n}\right)
	\]
	for sufficiently small constant $C>0$, we then conclude that
	\[
	\sigma_K^2\left(\X P_{\wh A} A(A^\T A)^{-1/2} \right) \gtrsim {n}\lambda_K(A\C A^\T) 
	\]
	from noting 
	$
	\sigma_K^2\left(N^{1/2}(A^\T A)^{-1/2}\right)= \lambda_K(A\C A^\T).
	$ 
	This concludes $\wh \eta \gtrsim \lambda_K(A\C A^\T) $. The result then follows by collecting terms. \qed 
	
	\bigskip

	The following lemma provides upper bounds for the operator norm of $\W P_{\wh A}$. 
	Recall that $\|A_J\|_{\ell_0/\ell_2} = \sum_{j\in J}1_{\{\|A_{j\sbt}\|_2 \ne 0\}}$.
	\begin{lemma}\label{lem_WPA}
	Under conditions of Theorem \ref{thm_pred_A}, with probability at least  $1-c\pn^{-1}$, one has 
	    \[
	        {1\over n} \left\|\W P_{\wh A}\right\|_{\rm op}^2 \lesssim \|\Gamma\|_{\op}\left(1 + {\|A_J\|_{\ell_0/\ell_2} \over n}\right) := \delta_{W,J}.
	    \]
	\end{lemma}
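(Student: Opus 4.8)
The plan is to bound $\|\W P_{\wh A}\|_{\op}$ by passing through the population-scale quantity $\| \W A (A^\T A)^{-1/2} \|_{\op}$ and then controlling the correction term coming from the estimation error $\wh A - A$. First I would write $P_{\wh A} = P_A + (P_{\wh A} - P_A)$, or, more efficiently, use the fact that $P_{\wh A}$ projects onto a $K$-dimensional subspace and that $\wh A = A + (\wh A - A)$, so that on the event $\{\wh K = K\}$ one has $\| \W P_{\wh A}\|_{\op} \le \|\W P_{\wh A} \wh A (\wh A^\T \wh A)^{-1/2}\|_{\op}$ up to constants; the cleanest route is to note $P_{\wh A} = \wh A(\wh A^\T\wh A)^{-1}\wh A^\T$, so $\W P_{\wh A} = \W \wh A (\wh A^\T \wh A)^{-1}\wh A^\T$ and $\|\W P_{\wh A}\|_{\op} = \|\W \wh A (\wh A^\T \wh A)^{-1/2}\|_{\op}$. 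Then split $\W \wh A = \W A + \W(\wh A - A)$, giving
\[
\|\W P_{\wh A}\|_{\op} \le \Big( \|\W A (A^\T A)^{-1/2}\|_{\op} \,\big\| (A^\T A)^{1/2}(\wh A^\T\wh A)^{-1/2}\big\|_{\op} + \|\W(\wh A - A)\|_{\op}\,\big\|(\wh A^\T\wh A)^{-1/2}\big\|_{\op}\Big).
\]

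Next I would control each piece on the good event. For the first term, $\|\W A (A^\T A)^{-1/2}\|_{\op}^2 \lesssim n\|\Gamma\|_{\op}$ with high probability: the columns of $\W A(A^\T A)^{-1/2}$ are sub-Gaussian with covariance $(A^\T A)^{-1/2}A^\T \Gamma A (A^\T A)^{-1/2}$, whose operator norm is at most $\|\Gamma\|_{\op}$, so a standard operator-norm bound for matrices with independent sub-Gaussian rows (Theorem 5.39 of \cite{vershynin_2012}, applied to the $n \times K$ matrix, using $K \le Cn/\log n$) gives this. For the factor $\|(A^\T A)^{1/2}(\wh A^\T\wh A)^{-1/2}\|_{\op}$ and $\|(\wh A^\T\wh A)^{-1/2}\|_{\op}$, I would use $\|\wh A - A\|_{\op}^2 \lesssim \|A_J\|_0 \log(p\vee n)/n$ from \eqref{eqn_a_est} together with Weyl's inequality to show $\sigma_K(\wh A) \asymp \sigma_K(A)$ under the signal condition, so these factors are $O(1)$ relative to what is needed; actually, since $\|\Gamma\|_{\op}$ appears, it suffices that these are bounded by an absolute constant, which follows once $\|\wh A - A\|_{\op} \le \tfrac12 \sigma_K(A)$, guaranteed by $\lambda_K(A\C A^\T) \ge c\,\psi_n(A_J)$. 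For the second term $\|\W(\wh A - A)\|_{\op}$, the key observation is that $\wh A - A$ is supported (row-wise) on $J$, so $\W(\wh A - A) = \W_{\sbt J}(\wh A - A)_{J\sbt}$ and $\|\W(\wh A - A)\|_{\op} \le \|\W_{\sbt J}\|_{\op}\|\wh A - A\|_{\op}$; here $\|\W_{\sbt J}\|_{\op}^2 \lesssim n\|\Gamma\|_{\op} + \|A_J\|_{\ell_0/\ell_2}\|\Gamma\|_{\op}$ follows from Lemma \ref{lem_op_norm} applied to the submatrix with $|J\cap \mathrm{supp}|$ columns (effective rank bounded by $\|A_J\|_{\ell_0/\ell_2}$ and $\tr \le \|A_J\|_{\ell_0/\ell_2}\|\Gamma\|_{\op}$, since $\Gamma$ is diagonal under (A3)), and combined with $\|\wh A - A\|_{\op}^2 \lesssim \|A_J\|_0 \log(p \vee n)/n$ this term is of smaller or comparable order to $\|\Gamma\|_{\op}(1 + \|A_J\|_{\ell_0/\ell_2}/n)$ under the stated hypotheses (note $\|A_J\|_{\ell_0/\ell_2} \le \|A_J\|_0$).

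Collecting the bounds, dividing by $n$, and using that $\|\wh A - A\|$ and the inverse-Gram factors are all $O(1)$ on the high-probability event yields
\[
\tfrac1n\|\W P_{\wh A}\|_{\op}^2 \lesssim \|\Gamma\|_{\op}\Big(1 + \tfrac{\|A_J\|_{\ell_0/\ell_2}}{n}\Big) = \delta_{W,J},
\]
and a union bound over the constituent events (each failing with probability $O(\pn^{-1})$ or $e^{-n}$) gives the claimed $1 - c\pn^{-1}$ probability. The main obstacle I anticipate is the careful bookkeeping of the $\W_{\sbt J}$ operator-norm bound: one must argue that the relevant effective rank is $\|A_J\|_{\ell_0/\ell_2}$ rather than $|J|$, which uses that $(\wh A - A)_{J\sbt}$ only has nonzero rows where $A_{j\sbt}$ (and its estimate) are nonzero, combined with the \cite{LOVE} guarantee on the support recovery of $\wh A$ — so the sparsity pattern, not merely the norm, of $\wh A - A$ must be invoked.
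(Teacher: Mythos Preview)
Your argument is correct but takes a different route from the paper. You decompose $\W\wh A = \W A + \W(\wh A - A)$ and then control the perturbation factors $\|(A^\T A)^{1/2}(\wh A^\T\wh A)^{-1/2}\|_{\op}$ and $\|(\wh A^\T\wh A)^{-1/2}\|_{\op}$ via Weyl's inequality, which in turn requires the signal condition $\lambda_K(A\C A^\T)\ge c\,\psi_n(A_J)$ (together with Assumption~\ref{ass_C}) to guarantee $\|\wh A-A\|_{\op}\lesssim \sigma_K(A)$. The paper instead exploits the exact recovery $\wh A_I=A_I$ from \cite{LOVE} and splits by rows, writing $\W\wh A=\W_{\sbt I}A_I+\W_{\sbt J}\wh A_J$; the key gain is the purely algebraic bound $\|(A_I^\T A_I)^{1/2}\wh A^+\|_{\op}\le 1$ and $\|\wh A_J\wh A^+\|_{\op}\le 1$, which follows immediately from $\wh A^\T\wh A=A_I^\T A_I+\wh A_J^\T\wh A_J$ and requires no perturbation argument or signal condition. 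Both routes land on the same operator-norm bounds for $\W_{\sbt I}A_I(A_I^\T A_I)^{-1/2}$ and $\W_{\sbt J}$ via Lemma~\ref{lem_op_norm}, and both ultimately need the support information on $\wh A_J$ (as you correctly flag) to replace $|J|$ by $\|A_J\|_{\ell_0/\ell_2}$ in the trace term. The paper's split is shorter and self-contained for this lemma; your approach is more generic and would transfer to settings without an exactly recovered sub-block, at the cost of importing the signal condition into the lemma.
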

	\begin{proof}
	    We work on the event $\wh K = K$ and $\wh A_{I} = A_I$ which holds with probability at least  $1-c\pn^{-c'}$ \citep{LOVE}. Then 
	    \begin{align*}
	        \left\|\W P_{\wh A}\right\|_{\rm op} = \left\|\W \wh A \wh A^+\right\|_{\rm op} &\le \left\|\W_{\sbt I}A_I \wh A^+\right\|_{\rm op} +  \left\|\W_{\sbt J}\wh A_J \wh A^+\right\|_{\rm op}\\
	        &\le \left\|\W_{\sbt I}A_I(A_I^\T A_I)^{-1/2}\right\|_{\rm op} \left\|(A_I^\T A_I)^{1/2} \wh A^+\right\|_{\rm op} + \|\W_{\sbt J}\|_{\rm op}\left\|\wh A_J \wh A^+\right\|_{\rm op}.
	    \end{align*}
	    Since 
	    \[
	        \left\|(A_I^\T A_I)^{1/2} \wh A^+\right\|_{\rm op}^2 = \left\|(A_I^\T A_I)^{1/2}(\wh A^\T \wh A)^{-1}(A_I^\T A_I)^{1/2}\right\|_{\rm op}\le 1
	   \]
	   by noting $\wh A^\T \wh A = A_I^\T A_I + \wh A_J^\T \wh A_J$,
	   and similar arguments yield
	   \begin{align*}
	    \left\|\wh A_J \wh A^+\right\|_{\rm op}^2 &  =  \left\|\wh A_J(\wh A^\T \wh A)^{-1}\wh A_J^\T \right\|_{\rm op} = \left\|(\wh A^\T \wh A)^{-1/2}\wh A_J^\T\wh A_J (\wh A^\T \wh A)^{-1/2}\right\|_{\rm op}\le 1,
	   \end{align*}
	   invoking Lemma \ref{lem_op_norm} to bound $\|\W_{\sbt I}A_I(A_I^\T A_I)^{-1/2}\|_{\rm op}$ and $\|W_{\sbt J}\|_{\rm op}$ gives
	   \[
	        {1\over n}\left\|\W_{\sbt I}A_I(A_I^\T A_I)^{-1/2}\right\|_{\rm op}^2 \lesssim \left\|\Psi_{II}\right\|_{\rm op}+ {\tr(\Psi_{II}) \over n}, 
	   \]
	   $$
	    {1\over n}\|\W_{\sbt J}\|_{\rm op}^2 \lesssim \left\|
	        [\Gamma]_{JJ}
	    \right\|_{\rm op} + {\tr( [\Gamma]_{JJ}) \over n} \le \delta_{W,J},
	   $$
	   with probability at least  $1-2e^{-n}$,
	   where 
	   \[
	        \Psi_{II} = (A_I^\T A_I)^{-1/2}A_I^\T [\Gamma]_{II} A_I(A_I^\T A_I)^{-1/2}.
	   \]
	   The result then follows by using $\|\Psi_{II}\|_{\rm op} \le \|[\Gamma]_{II}\|_\op$, $\tr(\Psi_{II}) \le K \|\Psi_{II}\|_{\rm op} \le  K \|[\Gamma]_{II}\|_{\op}$ and $K\log n \lesssim n$.
	\end{proof}

	\bigskip

	\subsection{Proof of Theorem \ref{thm:split} in Section \ref{sec_data_split}}\label{proof:split}
    For any $\alpha \in \RR^p$, let
	 \[\wh \R(\alpha) = \frac{2}{n}\sum_{i\in D_1}[Y_i - X_i^{\top}      \alpha ]^2\]
	 so that for all $m\in [M]$, by the definition of $\wh m$, $\wh S(\wh \alpha)  \le \wh S(\wh \alpha_m)$.
	 Also let 
	  \[\wh S(\alpha) = \frac{2}{n}\sum_{i\in D_1}[Z_i^{\top}     \beta - X_i^{\top}      \alpha ]^2.\]
	 Finally, for any fixed or random $\alpha$ define
	 \[S(\alpha) = \EE_{(Z_*,X_*)}(Z_*^{\top}     \beta - X_*^{\top}     \alpha)^2,\hspace{1cm}\R(\alpha) = S(\alpha) + \sigma^2,\]
	 where the expectation is over $(Z_*,X_*)$ that are independent of $\alpha$.
	 
	 We have
	 \begin{align*}
	     S(\wh \alpha) &= \R(\wh \alpha) - \sigma^2\\
	     &= (1+a)[\wh \R(\wh\alpha) - \frac{2}{n}\sum_{i\in D_1}^n\eps_i^2]  + [\R(\wh\alpha) - (1+a)\wh \R(\wh\alpha) - (\sigma^2 - (1+a)\frac{2}{n}\sum_{i\in D_1}\eps_i^2)].
	 \end{align*}
	  Using $\wh \R(\wh \alpha)  \le \wh \R(\wh \alpha_m)$ in the first term of the above, we have for any $m\in [M]$,
	 \begin{align}
	     S(\wh \alpha) &\le (1+a)[\wh \R(\wh\alpha_m) - \frac{2}{n}\sum_{i\in D_1}^n\eps_i^2] \nonumber\\
	     &\hspace{5mm} + \max_m[\R(\wh\alpha_m) - (1+a)\wh \R(\wh\alpha_m)  -(\sigma^2 - (1+a)\frac{2}{n}\sum_{i\in D_1}\eps_i^2)] \nonumber\\
	     &= (1+a)[\wh \R(\wh\alpha_m) - \frac{2}{n}\sum_{i\in D_1}^n\eps_i^2]\nonumber\\
	     &\hspace{5mm} + \max_m[S(\wh\alpha_m) -(1+a)\wh S(\wh\alpha_m) +  2(1+a)\frac{2}{n}\sum_{i\in D_1}\eps_i(X_i^{\top}     \wh\alpha_m - Z_i^{\top}     \beta)]\nonumber\\
	     &\le (1+a)[\wh \R(\wh\alpha_m) - \frac{2}{n}\sum_{i\in D_1}^n\eps_i^2]+ \max_m[S(\wh\alpha_m) -(1+\frac{a}{2})\wh S(\wh\alpha_m) ]\nonumber\\
	     &\hspace{5mm}+ \max_m[ 2(1+a)\frac{2}{n}\sum_{i\in D_1}\eps_i(X_i^{\top}     \wh\alpha_m - Z_i^{\top}     \beta) - \frac{a}{2}\wh S(\wh\alpha_m) ].\label{eqn_s_1}
	 \end{align}
	 The first term in the above can be further re-written as
	 \begin{align*}
        \wh \R(\wh\alpha_m) - \frac{2}{n}\sum_{i\in D_1}^n\eps_i^2 &= (1+a)S(\wh \alpha_m) + [\wh \R(\alpha_m) - (1+a)S(\wh\alpha_m) - \frac{2}{n}\sum_{i\in D_1}\eps_i^2]\\
        &= (1+a)S(\wh \alpha_m) + [\wh S(\wh\alpha_m) - (1+a)S(\wh\alpha_m)+\frac{4}{n}\sum_{i\in D_1}\eps_i(Z_i^\top\beta - X^\top_i\wh\alpha_m)]\\
        &\le (1+a)S(\wh \alpha_m) + \max_m[ (1+\frac{a}{2})\wh S(\wh\alpha_m) - (1+a)S(\wh\alpha_m)]\\
        &\hspace{5mm}+\max_m[\frac{4}{n}\sum_{i\in D_1}\eps_i(Z_i^\top\beta - X^\top_i\wh\alpha_m) - \frac{a}{2}\wh S(\wh\alpha_m)].
	 \end{align*}
	 Using this result in (\ref{eqn_s_1}), we find that for any $m\in [M]$,
	 \begin{align}
	     S(\wh\alpha) &\le  (1+a)^2 S(\wh \alpha_m) \nonumber\\
	     &\hspace{5mm} +  (1+a)\max_m[ (1+\frac{a}{2})\wh S(\wh\alpha_m) - (1+a)S(\wh\alpha_m)]\nonumber\\
	     &\hspace{5mm}+(1+a)\max_m[\frac{4}{n}\sum_{i\in D_1}\eps_i(Z_i^\top\beta - X^\top_i\wh\alpha_m) - \frac{a}{2}\wh S(\wh\alpha_m)]\nonumber\\
	     &\hspace{5mm}+ \max_m[S(\wh\alpha_m) -(1+\frac{a}{2})\wh S(\wh\alpha_m) ]\nonumber\\
	     &\hspace{5mm}+ \max_m[ 2(1+a)\frac{2}{n}\sum_{i\in D_1}\eps_i(X_i^{\top}     \wh\alpha_m - Z_i^{\top}     \beta) - \frac{a}{2}\wh S(\wh\alpha_m) ]\nonumber\\
	     &=:  (1+a)^2 S(\wh \alpha_m) + (1+a)T_1+(1+a)T_2  + T_3+T_4.\label{eqn_s_2}
	 \end{align}
	 Below we prove that
	 	 \begin{equation}\label{eqn:t1t3}
	   \PP_\theta\left((1+a)T_1 +T_3\le c_1 \frac{(2+a)^3}{a}\cdot\frac{\max_mS(\wh\alpha_m)\log(nM)}{n}\right)\ge 1- c_1'n^{-1},
	 \end{equation}
	 and
    \begin{equation}\label{eqn:t2t4}
    \PP_\theta\left\{(1+a)T_{2} + T_{4} \le c_2 \frac{(1+a)^3}{a}\sigma^2\frac{\log (nM)}{n} \right\} \ge 1- c_2'n^{-1},
	 \end{equation}
	 where $c_1$ and $c_2$ depend only on $\gamma_z,\gamma_w,\gamma_\eps$ from Definition \ref{frm}, and $c_1,c_2>0$ are absolute constants. The final result follows from taking a minimum over $m$ in (\ref{eqn_s_2}) and combining (\ref{eqn:t1t3}) and (\ref{eqn:t2t4}) with a union bound.
	 
	 \subsubsection*{Bounding $T_1$ and $T_3$}
% 	 We will prove
% 	 \begin{equation}
% 	     \PP\left(T_1 +T_3\lesssim \frac{(2+a)^2}{a}\cdot\frac{\max_m\R(\wh\alpha_m)\log(nM)}{n}\right)\ge 1- 4/n.
% 	 \end{equation}
% 	 For the remainder of the proof all expectations and probability measures are conditional on the data $\{(X_i,Y_i):i\in D_2\}$, so we can treat $\wh\alpha_1,\ldots,\wh\alpha_M$ as fixed vectors. 
    Since $\wh\alpha_1,\ldots,\wh\alpha_2$ are independent of $\{X_i:i\in D_1\}$, we will prove (\ref{eqn:t1t3}) for the case when $\wh\alpha_1,\ldots,\wh\alpha_2$ are non-random without loss of generality.
    
	We first consider $T_3$. For all $t,b >0$, the following holds:
	 \begin{equation}\label{t3id}
	     S - \wh S \le \sqrt{t}\sqrt{S}\hspace{3mm}\Rightarrow\hspace{3mm} S\le (1+b)\wh S + t\frac{1+b}{b},
	 \end{equation}
	 where we write $S=S(\wh\alpha_m)$ and $\wh S =\wh S(\wh\alpha_m)$.
	 To prove this, suppose the left hand side holds true and consider the cases $\sqrt{S}\le \frac{1+b}{b}\sqrt{t}$, which implies $S\le\wh S+ t\frac{1+b}{b}$, and $\sqrt{S}> \frac{1+b}{b}\sqrt{t}$, which implies $S\le \wh S + \frac{b}{1+b}S$ and thus $S\le (1+b)\wh S$. Thus,
	 \begin{align}
	     \PP_\theta\left(T_3> t\frac{1+a/2}{a/2}\right) &\le M \max_m\PP_\theta\left( S(\wh\alpha_m) - (1+\frac{a}{2})\wh S(\wh\alpha_m) > t\frac{1+a/2}{a/2}\right)\nonumber\\
	     &\le M\max_m\PP_\theta\left(\frac{S(\wh\alpha_m) - \wh S(\wh\alpha_m)}{\sqrt{S(\wh\alpha_m)}}> \sqrt{t}\right) &(\text{by } (\ref{t3id}))\nonumber\\
	     &\le M\max_m\PP_\theta\left(\bigg |\frac{2}{n}\sum_{i\in D_1}[\EE[g_i(m)]  - g_i(m)]\bigg |> \sqrt{t}\right),\label{t3 bern}
	 \end{align}
	 where we let $g_i(m) \coloneqq (Z_i^\top\beta - X_i^\top\wh\alpha_m)^2/\sqrt{S(\wh\alpha_m)}$ in the last step. Recalling that for any random variable $U$, $\|U^2\|_{\psi_1} = \|U\|_{\psi_2}^2$, and using the assumption that $\wh\alpha_m$ is a fixed vector, we find
	 \begin{align*}
	     &\|(Z_i^\top\beta - X_i^\top\wh\alpha_m)^2\|_{\psi_1}\\
	     &= \|Z_i^\top\beta - X_i^\top\wh\alpha_m\|^2_{\psi_2}\\
	     &\le \|Z_i^\top\beta - Z_i^\top A^\top \wh\alpha_m\|^2_{\psi_2} + \|W_i^\top \wh\alpha_m\|^2_{\psi_2} &(\text{since } X_i = AZ_i+W_i)\\
	     &= \|(\sz^{-1/2}Z_i)^\top(\sz^{1/2}[\beta - A^\top\wh\alpha_m])\|_{\psi_2}^2 +\|(\sw^{-1/2} W)^\top(\sw^{1/2}\wh\alpha_m)\|_{\psi_2}^2\\
	     &=\|\sz^{1/2}(\beta - A^\top\wh\alpha_m)\|^2\|(\sz^{-1/2}Z_i)^\top u)\|_{\psi_2}^2&(\text{with } \|u\|=\|v\|=1)\\
	     &\hspace{10mm}+\|\sw^{1/2}\wh\alpha_m\|^2\|(\sw^{-1/2} W)^\top v\|_{\psi_2}^2 \\
	     &\le c_1 \|\sz^{1/2}(\beta - A^\top\wh\alpha_m)\|^2+c_1\|\sw^{1/2}\wh\alpha_m\|^2 &(\text{by Definition }(\ref{frm}))\\
	     &= c_1S(\wh\alpha_m),
	 \end{align*}
	 where $c_1 = c_1(\gamma_z,\gamma_w)$. Thus, 
	 \[\|\EE g_i(m) - g_i(m) \|_{\psi_1}\lesssim \|g_i(m)\|_{\psi_1}\le c_1  \sqrt{S(\wh\alpha_m)},\] 
	 so by Bernstein's inequality \citep{vershynin_2012},
	 \begin{equation}\label{eqn:bern app}
	     \PP_\theta\left(\bigg|\frac{2}{n}\sum_{i\in D_1}[\EE[g_i(m)]  - g_i(m)]\bigg|> \sqrt{t}\right) \le 2\exp\left(-n \left(\frac{t}{c_1S(\wh\alpha_m)} \wedge \sqrt{\frac{t}{c_1S(\wh\alpha_m)}}\right)\right).
	 \end{equation}
	 Choosing $t = c_1\max_m S(\wh\alpha_m)\log(nM)/n$, and combining with (\ref{t3 bern}), for  $\log(M)<cn$,
	 \begin{equation}\label{t3final}
	     \PP_\theta\left(T_3> \frac{1+a/2}{a/2}\cdot c_1\frac{\max_mS(\wh\alpha_m)\log(nM)}{n}\right) \le 2/n.
	 \end{equation}
	 
	 We next consider $T_1$. For $t,b >0$, we have
	 \begin{equation*}
	     \wh S - S \le \sqrt{t}\sqrt{S}\hspace{3mm}\Rightarrow\hspace{3mm} \wh S\le \left(1+\frac{b}{1+b}\right) S+ t\frac{1+b}{b}.
	 \end{equation*}
    To prove this, suppose the left hand side holds and consider the cases $\sqrt{S}\le \frac{1+b}{b}\sqrt{t}$, which implies $\wh S \le S+\frac{1+b}{b}t$, and $\sqrt{S}> \frac{1+b}{b}\sqrt{t}$, which implies $\wh S\le [1+b/(1+b)]S$. Multiplying the right hand inequality by $(1+b)$, and choosing $b=a/2$, we find
    \begin{equation}\label{eqn:t1id}
        \left(1+\frac{a}{2}\right)\wh S- (1+a)S > t\frac{(1+a/2)^2}{a/2} \hspace{3mm}\Rightarrow\hspace{3mm}  \wh S - S > \sqrt{t}\sqrt{S}
    \end{equation}
    Recalling
    \[T_1= \max_m[ (1+\frac{a}{2})\wh S(\wh\alpha_m) - (1+a)S(\wh\alpha_m)],\]
    an application of (\ref{eqn:t1id}) gives
    \begin{align*}
        \PP_\theta\left(T_1> t\frac{(1+a/2)^2}{a/2}\right)&\le M\max_m\PP_\theta(\wh S(\wh\alpha_m) - S(\wh\alpha_m) > \sqrt{t}\sqrt{S})\\
        &\le M\max_m\PP_\theta\left(\bigg |\frac{2}{n}\sum_{i\in D_1}[\EE[g_i(m)]  - g_i(m)]\bigg |> \sqrt{t}\right)
        % &\le 2 M \max_m \exp(-c tn/S(\wh\alpha_m)). & (\text{by }(\ref{eqn:bern app}))
    \end{align*}
	 Choosing $t = c_1\max_m S(\wh\alpha_m)\log(nM)/n$ and applying (\ref{eqn:bern app}) with $\log(M) < cn$, we conclude
	 \begin{equation}\label{t1final}
	     \PP_\theta\left(T_1 >  \frac{(1+a/2)^2}{a/2}\cdot c_1\frac{\max_m S(\wh\alpha_m)\log(nM)}{n}\right) \le 2/n.
	 \end{equation}
	 Combining (\ref{t3final}) and (\ref{t1final}) with a union bound and some algebra proves (\ref{eqn:t1t3}).

	 \subsubsection*{Bounding $T_2$ and $T_4$}
% 	 Recalling
% 	 	\[T_{2} = \max_m[\frac{4}{n}\sum_{i\in D_1}\eps_i(Z_i^\top\beta - X_i^\top\wh\alpha_m) - \frac{a}{2} \wh \R(\wh\alpha_m)],\]
% 	and
% 	\[T_{4} = \max_m[(1+a)\frac{4}{n}\sum_{i\in D_1}\eps_i(X_i^\top\wh\alpha_m - Z_i^\top\beta) - \frac{a}{2}\wh \R(\wh\alpha_m)],\]
% 	we prove
% 	 \begin{equation}\label{eqn:t2t4}
%     \PP_\theta\left\{T_{2} + T_{4} \lesssim \frac{(1+a)^2}{a}\sigma^2\frac{\log (nM)}{n} \right\} \ge 1- cn^{-1}.
% 	 \end{equation}

    For each $i\in D_1$, define $h_i(m) = (Z_i^\top\beta - X_i^\top\wh \alpha_m)/[\wh S(\wh \alpha_m)]^{1/2}$. Using the inequality $2|xy| \le x^2/c + cy^2$ for $c>0$, we have that
	\begin{align*}
	    \frac{4}{n}\sum_{i\in D_1}\eps_i(Z_i^\top\beta - X_i^\top\wh \alpha_m) - \frac{a}{2} \wh S(\wh\alpha_m) &=2[\wh S(\wh \alpha_m)]^{1/2} \frac{2}{n}\sum_{i\in D_1} \eps_i h_i(m) -  \frac{a}{2} \wh S(\wh \alpha_m) \\
	    &\le 2[\wh S(\wh \alpha_m)]^{1/2} \bigg |\frac{2}{n}\sum_{i\in D_1} \eps_i h_i(m)\bigg | - \frac{a}{2} \wh S(\wh \alpha_m) \\
	    &\le \frac{2}{a}\bigg |\frac{2}{n}\sum_{i\in D_1} \eps_i h_i(m)\bigg|^2
	\end{align*}
	Similarly,
	\[2(1+a)\frac{2}{n}\sum_{i\in D_1}\eps_i(X_i^\top\wh\alpha_m - Z_i^\top\beta) - \frac{a}{2}\wh S(\wh\alpha_m) \le \frac{2(1+a)^2}{a}\bigg |\frac{2}{n}\sum_{i\in D_1} \eps_i h_i(m)\bigg|^2.\]
	Thus, 
	\[T_2+T_4 \lesssim \max_m\frac{(1+a)^2}{a} \bigg|\frac{2}{n}\sum_{i\in D_1} \eps_i h_i(m)\bigg|^2,\]
	so
	\[\PP_\theta \left(T_2+T_4 \ge t \frac{(1+a)^2}{a}\right) \le M\max_{m}\PP_\theta\left(\bigg|\frac{2}{n}\sum_{i\in D_2} \eps_i h_i(m)\bigg| \ge \sqrt{t} \right)\]
	Since $\{\eps_i\}_{i\in D_1}$ is independent of $(Z_i, X_i)_{i\in D_2}$, $\EE[\eps_i h_i(m)] = 0$ for all $i\in D_2$. Furthermore, $\|\eps_i\|_{\psi_2}\lesssim \sigma$ and $|h_i(m)|$ is bounded by $1$, so $\|\eps_i h_i(m)\|_{\psi_2} \le \sigma/c_2$, where $c_2=c_2(\gamma_\eps)$. Thus by Hoeffding's inequality \citep{vershynin_2012},
	\[\PP_\theta\left(\bigg|\frac{2}{n}\sum_{i\in D_2} \eps_i h_i(m)\bigg| \ge \sqrt{t}\right)\le 2\exp(-c_2tn/\sigma^2). \]
    Choosing $t = \sigma^2\log(nM)/(c_2n)$ completes the proof of (\ref{eqn:t2t4}).
	\qed
	
	\medskip

	\section{Auxiliary lemmas}\label{sec_proof_aux}
	
	The following lemma is used in our analysis. The tail inequality is for a quadratic form of sub-Gaussian random vectors. It is a slightly simplified version of Lemma 30 in \cite{Hsu2014}. 
	\begin{lemma}\label{lem_quad}
		Let $\xi\in \RR^d$ be a $\gamma_\xi$ sub-Gaussian random vector. For all symmetric positive semi-definite matrices $H$, and all $t\ge 0$, 
		\[
		\PP\left\{
		\xi^\T H\xi > \gamma_\xi^2\left(
		\sqrt{{\rm tr}(H)}+ \sqrt{2\|H\|_{\op}t}
		\right)^2
		\right\} \le e^{-t}.
		\] 
	\end{lemma}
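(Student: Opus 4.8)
The plan is to bound the moment generating function of the quadratic form $\xi^\top H\xi$ and then apply a Chernoff argument; the only genuine idea is to linearize the quadratic form with an auxiliary standard Gaussian vector, after which the problem reduces to the explicitly computable MGF of a Gaussian quadratic form.

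First I would linearize. Let $g\sim N(0,\bI_d)$ be independent of $\xi$ and fix $\theta\ge 0$. Using the Gaussian identity $e^{\|v\|^2/2}=\EE_g\, e^{\langle g,v\rangle}$ with $v=\sqrt{2\theta}\,H^{1/2}\xi$ and the symmetry of $H^{1/2}$, together with Tonelli (the integrand is nonnegative) and the sub-Gaussianity of $\xi$ (in the sense $\EE\, e^{\langle u,\xi\rangle}\le e^{\gamma_\xi^2\|u\|^2/2}$ for all $u\in\RR^d$), I get
\[
\EE_\xi\, e^{\theta\,\xi^\top H\xi}=\EE_g\EE_\xi\, e^{\sqrt{2\theta}\,\langle H^{1/2}g,\,\xi\rangle}\le\EE_g\, e^{\theta\gamma_\xi^2\, g^\top Hg}=\det\!\big(\bI_d-2\theta\gamma_\xi^2 H\big)^{-1/2},
\]
which is finite precisely when $0\le\theta<1/(2\gamma_\xi^2\|H\|_{\op})$. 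Writing $\lambda_1\ge\cdots\ge\lambda_d\ge 0$ for the eigenvalues of $H$ (so $\lambda_1=\|H\|_{\op}$), the right-hand side equals $\prod_i(1-2\theta\gamma_\xi^2\lambda_i)^{-1/2}$.

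Next I would apply Markov's inequality to $e^{\theta\xi^\top H\xi}$ and take logarithms to get $\log\PP\{\xi^\top H\xi>u\}\le -\theta u-\tfrac12\sum_i\log(1-2\theta\gamma_\xi^2\lambda_i)$. Bounding each logarithm via $-\log(1-x)\le x+\frac{x^2}{2(1-x)}$ for $x\in[0,1)$ and using $\lambda_i\le\|H\|_{\op}$ gives
\[
-\tfrac12\textstyle\sum_i\log(1-2\theta\gamma_\xi^2\lambda_i)\ \le\ \theta\gamma_\xi^2\tr(H)+\frac{(\theta\gamma_\xi^2)^2\tr(H^2)}{1-2\theta\gamma_\xi^2\|H\|_{\op}}.
\]
Taking $u=\gamma_\xi^2\big(\tr(H)+2\sqrt{\tr(H^2)t}+2\|H\|_{\op}t\big)$ and choosing $\theta$ so that $2\theta\gamma_\xi^2\|H\|_{\op}=s/(1+s)$ with $s=2\|H\|_{\op}\sqrt{t/\tr(H^2)}$ — which lies in the admissible range — a short computation (an AM--GM in the choice of $s$) makes the exponent equal to exactly $-t$, so that $\PP\{\xi^\top H\xi>\gamma_\xi^2(\tr(H)+2\sqrt{\tr(H^2)t}+2\|H\|_{\op}t)\}\le e^{-t}$; the case $H=0$ is trivial. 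Finally, since $H\succeq 0$ one has $\tr(H^2)\le\|H\|_{\op}\tr(H)$, hence $2\sqrt{\tr(H^2)t}\le 2\sqrt{2\,\tr(H)\|H\|_{\op}t}$ and therefore $\tr(H)+2\sqrt{\tr(H^2)t}+2\|H\|_{\op}t\le\big(\sqrt{\tr(H)}+\sqrt{2\|H\|_{\op}t}\big)^2$, which upgrades the previous display to the claimed bound.

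The step I expect to be the main obstacle is the final tuning of $\theta$: one must match the two-scale profile ($\sqrt t$ and $t$) of $u$ while keeping $\theta<1/(2\gamma_\xi^2\|H\|_{\op})$ so that every Gaussian MGF above stays finite. The linearization, though it is the conceptual heart of the argument, is mechanical once the Gaussian integral identity is in hand. This is exactly Lemma 30 of \cite{Hsu2014}; here we only need the slightly weaker and cleaner form stated above.
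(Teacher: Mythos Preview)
Your proof is correct. The paper's own proof is a two-liner: it cites Lemma~8 of \cite{Hsu2014} for the intermediate tail bound
\[
\PP\left\{\xi^\top H\xi>\gamma_\xi^2\big(\tr(H)+2\sqrt{\tr(H^2)\,t}+2\|H\|_{\op}t\big)\right\}\le e^{-t},
\]
and then concludes via $\tr(H^2)\le\|H\|_{\op}\tr(H)$, exactly as you do in your final step. What you have done is reprove the cited Hsu--Kakade--Zhang inequality from scratch, using Gaussian linearization of the quadratic form, the Chernoff bound, and an explicit tuning of $\theta$ (your choice $2\theta\gamma_\xi^2\|H\|_{\op}=s/(1+s)$ does indeed give exponent exactly $-t$, as I checked). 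So the logical structure is identical --- HKZ tail bound followed by the $\tr(H^2)$ simplification --- but your argument is self-contained whereas the paper defers the substantive step to the citation.
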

	\begin{proof}
		From Lemma 8 in \cite{Hsu2014}, one has
		\[
		\PP\left\{
		\xi^\T H\xi > \gamma_\xi^2\left(
		\tr(H) + 2\sqrt{\tr(H^2)t}+2\|H\|_{\op}t
		\right)
		\right\} \le e^{-t},
		\] 
		for all $t\ge 0$. The result then follows from $\tr(H^2) \le \|H\|_{\op}\tr(H)$.
	\end{proof}

	The following lemma provides an upper bound on the operator norm of $\G H \G^\T$ where  $\G\in \RR^{n\times d}$ is a random matrix and its rows are independent sub-Gaussian random vectors. It differs from \citet[Theorem 10]{bunea2020interpolation} in the sense that 
	independence across columns of $\G$ is not required. 
	\begin{lemma}\label{lem_op_norm}
	    Let $\G$ be $n$ by $d$ matrix whose rows are independent $\gamma$ sub-Gaussian  random vectors with identity covariance matrix. Then for all symmetric positive semi-definite matrices $H$, 
		\[
		\PP\left\{{1\over n}\| \G H \G^\T \|_{{\rm op}} \le \gamma^2\left( \sqrt{{\rm tr}(H) \over n} + \sqrt{6\|H\|_{\op}}
		\right)^2\right\} \ge  1 -  e^{-n}
		\]
	\end{lemma}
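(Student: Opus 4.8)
The plan is to reduce the bound on $\|\G H\G^\T\|_{\op}$ to a union-bound-plus-concentration argument over a net of the sphere, using the sub-Gaussianity of the rows of $\G$. Write $\G^\T = (g_1,\dots,g_n)$ where the $g_i \in \RR^d$ are independent $\gamma$ sub-Gaussian with $\EE[g_ig_i^\T] = \bI_d$. For a unit vector $u\in\RR^n$, we have $u^\T \G H \G^\T u = \|H^{1/2}\G^\T u\|^2 = \|H^{1/2}\sum_i u_i g_i\|^2$, and the vector $\sum_i u_i g_i$ is a $\gamma$ sub-Gaussian random vector in $\RR^d$ (a linear combination of independent sub-Gaussians, with variance proxy $\gamma^2\sum_i u_i^2 = \gamma^2$). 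Thus for each fixed $u$, Lemma \ref{lem_quad} applied with $\xi = \sum_i u_i g_i$ and the same $H$ gives, for all $t\ge 0$,
\[
\PP\left\{ u^\T \G H \G^\T u > \gamma^2\left(\sqrt{\tr(H)} + \sqrt{2\|H\|_{\op}t}\right)^2\right\} \le e^{-t}.
\]

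Next I would discretize. Let $\N$ be a $1/4$-net of the unit sphere $S^{n-1}$; a standard volumetric bound gives $|\N| \le 9^n$. A standard net argument (e.g. Lemma 5.4 in \cite{vershynin_2012}) shows $\|\G H \G^\T\|_{\op} \le 2\max_{u\in\N} u^\T \G H \G^\T u$, since $\G H\G^\T$ is symmetric positive semidefinite. Taking a union bound over $\N$ in the displayed inequality with $t = n + \log|\N| \le n(1 + \log 9) \le c n$ for an absolute constant $c$, we get with probability at least $1 - e^{-n}$ that
\[
\|\G H \G^\T\|_{\op} \le 2\gamma^2\left(\sqrt{\tr(H)} + \sqrt{2cn\|H\|_{\op}}\right)^2.
\]
Dividing by $n$ and pulling out the factor $n$ inside the square, $\tfrac1n\|\G H\G^\T\|_{\op} \le 2\gamma^2\big(\sqrt{\tr(H)/n} + \sqrt{2c\|H\|_{\op}}\big)^2$, and absorbing the constants into the claimed form $\gamma^2(\sqrt{\tr(H)/n} + \sqrt{6\|H\|_{\op}})^2$ (after adjusting the net resolution and the constant in $t$ so that the numerical constants line up — using a finer net makes the leading constant arbitrarily close to $1$, and $6$ comfortably absorbs the cross term) completes the argument.

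The only mildly delicate point is bookkeeping the constants so that the final bound comes out in exactly the stated clean form with $6$ rather than a generic constant; this is handled by choosing the net fine enough (resolution $\eps$ small) that the $\|\cdot\|_{\op}$-to-net-max blowup factor $(1-2\eps)^{-1}$ is close to $1$, and by noting $\log|\N(\eps)| \le n\log(1 + 2/\eps)$, so that $t = n + \log|\N|$ stays $O(n)$. I do not expect any genuine obstacle: the sub-Gaussianity of rows (not entries) is exactly what the net argument needs, since at each net point we only see a one-dimensional combination of the rows, and Lemma \ref{lem_quad} already packages the quadratic-form concentration we need. The contrast with \citet[Theorem 10]{bunea2020interpolation} is precisely that this route never requires independence across the columns of $\G$.
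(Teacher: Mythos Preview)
Your proposal is correct and follows essentially the same route as the paper: reduce $\|\G H\G^\T\|_{\op}$ to a maximum over a net of $S^{n-1}$, observe that for each fixed unit vector $u$ the vector $\G^\T u=\sum_i u_i g_i$ is $\gamma$ sub-Gaussian, apply Lemma~\ref{lem_quad} with $\xi=\G^\T u$, and union-bound. The paper uses a $1/2$-net with $|\N|\le 5^n$ and picks $t=3n$ (yielding the $\sqrt{6}$), whereas you use a $1/4$-net and $t=n+\log|\N|$; these are cosmetic differences. Your remark that the net blow-up factor $(1-2\eps)^{-1}$ (or the factor $2$ for a $1/2$-net) has to be absorbed is apt --- the paper's proof also carries a leading factor $2$ that is silently folded into the displayed constant.
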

	\begin{proof}
	    By definition and the property of the $1/2$-net $\N$,
	    \[
	        \|\G H \G^\T \|_{\op} = \sup_{u\in \S^{n-1}} u^\T \G H \G^\T  u \le 2 \sup_{u\in \N}   u^\T \G H \G^\T  u.
	    \]
	    For fixed $u\in \N$, since $\G^\T u$ is a $\gamma$ sub-Gaussian random vector, an application of Lemma \ref{lem_quad} with $\xi = \G^\T u$, $\gamma_{\xi} = \gamma$ and $H = H$ yields 
	    \[
	        \PP\left\{
	            u^\T \G H \G^\T u > \gamma^2\left(
	            \sqrt{\tr(H)} + \sqrt{2\|H\|_{\op}t}\right)^2
	        \right\} \le e^{-t}.
	    \]
	    Since $|\N| \le 5^n$, see \citet[Lemma 5.2]{vershynin_2012}, choosing $t = 3n$ and taking a union bound over $u\in \N$ completes the proof. 
	\end{proof}
	
	Another useful concentration inequality of the operator norm of the random matrices with i.i.d. sub-Gaussian rows is stated in the following lemma. This is an immediate result of \citet[Remark 5.40]{vershynin_2012}.
	
	\begin{lemma}\label{lem_op_norm_diff}
 		Let $\G$ be $n$ by $d$ matrix whose rows are i.i.d. $\gamma$ sub-Gaussian  random vectors with covariance matrix $\Sigma_Y$. Then for every $t\ge 0$, with probability at least  $1-2e^{-ct^2}$,
 		\[
 		\left\|	{1\over n}\G^\T \G - \Sigma_Y\right\|_{{\rm op}}\le \max\left\{\delta, \delta^2\right\} \left\|\Sigma_Y\right\|_{{\rm op}},
 		\]
 		with $\delta = C\sqrt{d/n}+ t/\sqrt n$ where $c = c(\gamma)$ and $C=C(\gamma)$ are positive constants depending on $\gamma$.
 		%\\
 		%Consequently, for any $\alpha > 0$, with probability $1-2n^{-\alpha d}$, one has
 		%\begin{align*}
 		%\left\|	{1\over n}\Y^\T \Y - \Sigma_Y\right\|_{{\rm op}}\le \max\left\{\delta', \delta'^2\right\} \left\|\Sigma_Y\right\|_{{\rm op}},\qquad \left\|	{1\over n}\Y^\T \Y\right\|_{{\rm op}} \le \max\left\{1, \delta'^2\right\} \left\|\Sigma_Y\right\|_{{\rm op}}
		%\end{align*}
 		%with $\delta' = \sqrt{C'\alpha d\log n / n}$ where $C' = C'(c, C)$ is some positive constant. 
 	\end{lemma}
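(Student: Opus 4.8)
The plan is to obtain the bound as a direct consequence of the matrix deviation estimate for random matrices with independent sub-Gaussian rows, namely Theorem~5.39 and Remark~5.40 of \cite{vershynin_2012}; essentially all that is needed is to record the whitening reduction and the passage between singular-value control and the stated operator-norm bound.

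First I would reduce to the isotropic case. Write each row of $\G$ as $\Sigma_Y^{1/2}$ times a mean-zero, isotropic, $\gamma$ sub-Gaussian vector (as in the construction of $W$ and $Z$ in Definition~\ref{frm}), and let $\wt\G$ be the $n\times d$ matrix with these whitened rows, so that $\G = \wt\G\,\Sigma_Y^{1/2}$ and hence
\[
\tfrac1n \G^\T \G - \Sigma_Y \;=\; \Sigma_Y^{1/2}\Bigl(\tfrac1n \wt\G^\T \wt\G - \bI_d\Bigr)\Sigma_Y^{1/2},
\]
so that $\|\tfrac1n \G^\T \G - \Sigma_Y\|_{\op}\le \|\Sigma_Y\|_{\op}\,\|\tfrac1n \wt\G^\T \wt\G - \bI_d\|_{\op}$. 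If $\Sigma_Y$ is rank deficient one replaces $\Sigma_Y^{1/2}$ by $\Sigma_Y^{+1/2}$ and works inside the range of $\Sigma_Y$, which only lowers the ambient dimension $d$.

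Next I would apply Theorem~5.39 of \cite{vershynin_2012} to $\wt\G$, whose rows are i.i.d.\ isotropic and $\gamma$ sub-Gaussian: for every $t\ge 0$, with probability at least $1-2e^{-ct^2}$ one has $1-\delta\le \sigma_{\min}(\wt\G)/\sqrt n \le \sigma_1(\wt\G)/\sqrt n\le 1+\delta$ with $\delta=C\sqrt{d/n}+t/\sqrt n$, where $c=c(\gamma)$ and $C=C(\gamma)$. On this event, applying $|s^2-1|=|s-1|\,|s+1|$ to every singular value $s$ of $\wt\G/\sqrt n$ gives $\|\tfrac1n\wt\G^\T\wt\G-\bI_d\|_{\op}\le (2+\delta)\,\delta\lesssim \max\{\delta,\delta^2\}$; combined with the display above this yields the stated inequality. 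This repackaging into the $\max\{\delta,\delta^2\}$ form is precisely Remark~5.40 of \cite{vershynin_2012}.

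There is no genuine obstacle here: the substance is entirely contained in Vershynin's $\epsilon$-net argument over the sphere $\S^{d-1}$ together with a Bernstein bound for sums of independent sub-exponential variables, exactly as in the proof of Lemma~\ref{lem_op_norm}. The only two points needing a line of care are that the whitened rows retain a sub-Gaussian constant controlled by $\gamma$ (immediate under the convention used throughout the paper that a $\gamma$ sub-Gaussian row equals $\Sigma_Y^{1/2}$ times an isotropic $\gamma$ sub-Gaussian vector) and the rank-deficient case noted above; the $\max\{\delta,\delta^2\}$ form conveniently absorbs the lower-order term in $(2+\delta)\delta$.
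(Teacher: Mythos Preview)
Your proposal is correct and is exactly the approach the paper takes: the paper simply states that the lemma is an immediate result of \citet[Remark 5.40]{vershynin_2012}, and what you have written is precisely the whitening reduction plus the singular-value repackaging that underlies that remark. Your added details (the rank-deficient caveat and the absorption of $(2+\delta)\delta$ into $\max\{\delta,\delta^2\}$ via the constants) are fine and do not depart from the paper's argument.
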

%  	\begin{proof}
%  		The first statement follows from \cite[Remark 5.40]{vershynin_2012}. together with the triangle inequality of the operator norm. %The second result follows by taking $t = \sqrt{\alpha d\log(n) / c}$. 
%  	\end{proof}

    The deviation inequalities of the inner product of two random vectors with independent sub-Gaussian elements are well-known; we state the one in \cite{ER} for completeness. 
	
	\begin{lemma}\cite[Lemma 10]{ER}\label{lem_bernstein}
		Let $\{X_t\}_{t=1}^n$ and $\{Y_t\}_{t=1}^n$ be any two sequences, each with zero mean independent $\gamma_x$ sub-Gaussian and $\gamma_y$ sub-Gaussian elements. Then, for some absolute constant $c>0$, we have 
		\[
		\PP\left\{{1\over n}\left|\sum_{t=1}^n\left(X_t Y_t - \EE[X_t Y_t]\right)\right| \le \gamma_x \gamma_y t \right\}\ge 1-2\exp\left\{-c\min\left( t^2,t \right)n\right\}.
		\]
		In particular, when $\log p\le n$, one has
		\[
		\PP\left\{{1\over n}\left|\sum_{t=1}^n\left(X_t Y_t - \EE[X_t Y_t]\right)\right| \le C\sqrt{\log\pn \over n} \right\}\ge 1-2\pn^{-c}
		\]
		where $c \ge 2$ and $C = C(\gamma_x,\gamma_y,c)$ are some positive constants.
	\end{lemma}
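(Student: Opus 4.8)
The plan is to derive the lemma from Bernstein's inequality for a sum of independent, centered, sub-exponential random variables, after verifying that each centered product is sub-exponential with scale of order $\gamma_x\gamma_y$. Write $Z_t := X_tY_t - \EE[X_tY_t]$; these are mean zero and independent across $t$, and the task is to bound $\PP\{|n^{-1}\sum_{t=1}^n Z_t| > \gamma_x\gamma_y\, t\}$.

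The first step is to control Orlicz norms. With the standard correspondence between the MGF notion of ``$\gamma$ sub-Gaussian'' used in Definition \ref{frm} and the $\psi_2$-norm, $\|X_t\|_{\psi_2}\le c_0\gamma_x$ and $\|Y_t\|_{\psi_2}\le c_0\gamma_y$ for an absolute constant $c_0$. The product inequality $\|UV\|_{\psi_1}\le \|U\|_{\psi_2}\|V\|_{\psi_2}$ (a consequence of $|uv|\le \tfrac12 u^2 + \tfrac12 v^2$ applied inside the exponential, after rescaling) gives $\|X_tY_t\|_{\psi_1}\le c_0^2\gamma_x\gamma_y$; centering costs only an absolute factor, since by the triangle inequality $\|Z_t\|_{\psi_1}\le \|X_tY_t\|_{\psi_1} + |\EE[X_tY_t]|\,\|1\|_{\psi_1}$ and $|\EE[X_tY_t]|\le \EE|X_tY_t|\lesssim \|X_tY_t\|_{\psi_1}$. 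Hence $\|Z_t\|_{\psi_1}\le K$ with $K = c_1\gamma_x\gamma_y$ for an absolute $c_1$.

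Next, Bernstein's inequality for a sum of independent, mean-zero, sub-exponential variables (e.g.\ Proposition~5.16 of \cite{vershynin_2012}) yields, for an absolute $c_2>0$,
\[
\PP\left\{\left|\frac1n\sum_{t=1}^n Z_t\right| > s\right\}\le 2\exp\left(-c_2\, n\min\left\{\frac{s^2}{K^2},\ \frac{s}{K}\right\}\right).
\]
Taking $s = \gamma_x\gamma_y\, t$ and $K = c_1\gamma_x\gamma_y$, the exponent is $c_2\, n\min\{t^2/c_1^2,\ t/c_1\}\ge c_2(c_1^2\vee c_1)^{-1}\, n\min\{t^2,t\}$, and absorbing these absolute constants into a single $c>0$ gives the first displayed inequality. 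For the ``in particular'' part, apply this with $t = (C/(\gamma_x\gamma_y))\sqrt{\log\pn/n}$, so that $\gamma_x\gamma_y\, t = C\sqrt{\log\pn/n}$. Because $\log p\le n$ implies $\log\pn\le n$, this $t$ is bounded (by $C/(\gamma_x\gamma_y)$), so $\min\{t^2,t\}\ge t^2/(1\vee C/(\gamma_x\gamma_y))$; a short computation then gives $c\, n\min\{t^2,t\}\ge c'\log\pn$ for a constant $c' = c'(\gamma_x,\gamma_y,C,c)$ that can be made $\ge 2$ by taking $C$ large enough depending on $\gamma_x,\gamma_y,c$. Substituting yields the stated bound $1 - 2\pn^{-c}$ after renaming the exponent.

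I do not expect a genuine obstacle here: the proof is a direct invocation of Bernstein's inequality, and the only care needed is tracking the absolute constants through the $\psi_2\times\psi_2\to\psi_1$ product bound and the centering step, together with the benign two-case analysis of $\min\{t^2,t\}$ in the last step — which is trivial precisely because $\log p\le n$ forces the relevant value of $t$ to be $O(1)$.
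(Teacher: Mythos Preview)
Your proof is correct and follows the standard route: bound the $\psi_1$-norm of each centered product via $\|UV\|_{\psi_1}\le\|U\|_{\psi_2}\|V\|_{\psi_2}$, then invoke Bernstein's inequality for sums of independent sub-exponential variables. The paper does not supply its own proof of this lemma --- it is quoted verbatim from \cite{ER} --- so there is nothing to compare against beyond noting that your argument is exactly the expected one.
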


    \section{The LOVE algorithm}\label{app_love}
	For the reader's convenience, we give the specifics of estimating $\wh A$ in the Essential Regression model, as developed in \cite{LOVE}. The first step is estimation of the number of latent factors, $K$, and the partition of pure variables, $\I$, which is achieved by Algorithm \ref{alg1} below.
	
	{\begin{algorithm}[ht]
			\caption{Estimate the partition of the pure variables $\I$ by $\wh \I$}\label{alg1}
			\begin{algorithmic}[1]
				\Procedure {PureVar}{$\wh \Sigma$, $\delta$}
				\State $\wh \I \gets \emptyset$.
				\ForAll {$i\in [p]$} 
				\State $\wh I^{(i)} \gets \bigl\{l\in [p]\setminus\{i\}: \max_{j\in [p]\setminus\{i\}}|\wh \Sigma_{ij}| \le |\wh \Sigma_{il}|+2\delta\bigr\}$
				\State $Pure(i) \gets True$.
				\ForAll {$j \in \wh I^{(i)}$}
				\If {$\bigl||\wh \Sigma_{ij}|- \max_{k\in [p]\setminus\{j\}}|\wh\Sigma_{jk}|\bigr| > 2\delta$}   
				\State $Pure(i) \gets False$,
				\State \textbf{break}
				\EndIf	
				\EndFor
				\If {$Pure(i)$}
				\State $\wh I^{(i)} \gets \wh I^{(i)}\cup \{i\}$
				\State $\wh\I \gets$ \textsc{Merge($\wh I^{(i)},\ \wh \I$)}
				\EndIf
				\EndFor
				\State \Return $\wh \I$ and $\wh K$ as the number of sets in $\wh \I$
				\EndProcedure
				\Statex
				\Function {Merge}{$\wh I^{(i)}$, $\wh \I$}\label{alg2}
				\ForAll {$G \in \wh \I$}
				\Comment $\wh \I$ is a collection of sets
				\If {$G \cap \wh I^{(i)}\ne \emptyset$} 
				\State  $G\gets G\cap \wh I^{(i)}$
				\Comment Replace $G\in \wh \I$ by $G\cap \wh I^{(i)}$
				\State \Return $\wh \I$
				\EndIf
				\EndFor
				\State $\wh I^{(i)} \in \wh \I$
				\Comment add $\wh I^{(i)}$  in $\wh \I$
				\State \Return $\wh \I$
				\EndFunction
			\end{algorithmic}
	\end{algorithm}}
	Given estimates $\wh K$ and $\wh \I$ as outputs of Algorithm 1, we compute, for each $a\in[\wh K]$ and $b\in [\wh K]\setminus\{a\}$,
	\begin{equation}\label{Chat}
	\left[\whC\right]_{aa} = 
	\frac{1}{|\wh I_a|(|\wh I_a|-1)}\sum_{i, j\in \wh I_a, i\ne j}\rs|\wh \Sigma_{ij}|,
	\quad \left[\whC\right]_{ab} = 
	\frac{1}{|\wh I_a||\wh I_b|}\sum_{i\in \wh I_a, j\in \wh I_b}\rs \wh A_{ia}\wh A_{ib}\wh \Sigma_{ij},
	\end{equation}
	to form the estimator  $\whC$ of $\C$.\\ 
	
	The submatrix $\wh A_{\wh I}$ is then constructed as follows. For each $k\in [\wh K]$ and the estimated pure variable set $\wh I_k$,  
	\begin{align}\label{est_AI_a}
	&\textrm{Pick an element $i\in \wh I_k$ at random, and set $\wh A_{i\cdot }=e_k$;}\\\label{est_AI_b}
	&\textrm{For the remaining $j\in \wh I_k\setminus\{i\}$, set $\wh A_{j\cdot } = \textrm{sign}(\wh \Sigma_{ij})\cdot e_k$.}
	\end{align}
	Letting $\wh J = [p]\setminus \wh I$, to construct the remaining submatrix $\wh A_{\wh J}$, we use the Dantzig-type estimator $ \wh A_D$ proposed in \cite{LOVE} given by
	\begin{equation}\label{est_AJ}
	\wh A_{j\cdot} = \arg\min_{\beta^j}\left\{\|\beta^j\|_1:\  \left\|\whC \beta^j - (\wh A_{\wh I}^{\top}     \wh A_{\wh I})^{-1}\wh A_{\wh I}^{\top}     \wh\Sigma_{\wh Ij}\right\|_\i \le \mu\right\}
	\end{equation}
	for any $j\in \wh J$, with tuning parameter $\mu = O(\sqrt{\log\pn / n})$.
	The estimator $\wh A$ enjoys the optimal convergence rate of $\max_{j\in [p]}\|\wh A_{j\cdot} -A_{j\cdot}\|_q$ for any $1\le q\le \i$ \citep[Theorem 5]{LOVE}.\\

	\section{More existing literature on factor models}\label{app_literature}
    We discuss in this section some related work on factor models which might be used to establish results of the excess risk of PCR. 
    
 		By treating $X$ and $Y$ jointly from model \ref{main_model} as an augmented factor model
 		 \[
 		    \wt X := \begin{bmatrix}
 		        Y \\ X
 		    \end{bmatrix} = 
 		    \begin{bmatrix}
 		         \beta^\T \\A 
 		    \end{bmatrix} Z + \begin{bmatrix}  \eps \\ W \end{bmatrix},
 		 \]
 		 the fit $\wh \Y$ is constructed by regressing $\Y$ onto $\wt \X \wt \U_K$ where $\wt \U_K$ is the matrix of the first $K$ right singular vectors of $\wt \X = (\wt \X_{1\sbt}^\T, \ldots, \wt \X_{n\sbt}^\T)^\T$. \cite{Bai-factor-model-03} shows that
 		\begin{equation}\label{def_Vt}
 		V_t^{-1/2}\left(
 		\wh \Y_t - \Z_{t\sbt}^\T \beta
 		\right) \to N(0, 1),\qquad \textrm{for any $1\le t\le n$}
 		\end{equation}
 		for a variance term $V_t$. The uniform convergence rate of $\wh \Y_t - \Z_{t\sbt}^\T\beta$ over  $1\le t\le n$  is further derived in
 		\cite{fan2013large}. These element-wise results for  {\it in-sample} prediction could, in principle,  be extended to out-of-sample prediction, via additional arguments, but is not treated in the aforementioned works. 
 		
 		We now comment on the main differences between our Corollary \ref{cor_PCR_delta_w} and the aforementioned results. The existing results are all established under conditions including $K =O(1)$, $\|\beta\|_2^2=O(1)$,  $p\to\i$, and (\ref{cond_PCR}), 
 		The uniform consistency in \cite{fan2013large} additionally requires  $n = o(p^2)$. As a result, all previous results are asymptotic statements as $n,p\to \i$.
		
 		By contrast, our  Corollaries \ref{cor_PCR_k}, \ref{cor_PCR_delta_w} and \ref{cor_PCR_s_tilde} are non-asymptotic statements which hold for any finite $K$, $n$ and $p$. Moreover, they only requires the sub-Gaussian tail assumptions in Definition \ref{frm} and $K\log n \lesssim n$. %and $\lambda_K(A\C A^\T) \gtrsim \errw$. 
 		As detailed in Section \ref{sec_existing}, our conditions on the signal $\lambda_K(A \C A^\T)$ are much weaker than (\ref{cond_PCR}) to derive the risk of PCR-$K$. %For instance, the condition number of $A\C A^\T$ has no restriction and we only require  $ 		    \lambda_K(A\C A^\T) \gtrsim 1 \vee (p/n) 		 $ to ensure  $\lambda_K(A\C A^\T) \gtrsim \errw$ provided that $\og \lesssim 1$.
 		
 		 Under condition (\ref{cond_PCR}), as assumed in the aforementioned literature,
 		the prediction risk in our Corollary \ref{cor_PCR_delta_w} reduces to 
 		\begin{align*}
 		\R(\U_K) -\sigma^2&= O_p\left({\sigma^2\over n}+ {\|\Gamma\|_{{\rm op}}\over p} + { \|\Gamma\|_{{\rm op}} \over n} \right).
 		\end{align*}
 		This rate coincides with that of $V_t$, introduced in (\ref{def_Vt}). Under conditions in \cite{fan2013large}, their results (see, for instance, Corollary 3.1) imply 
 		\[
 		\max_{1\le t\le n}\left|\wh \Y_t- \Z_{t\sbt}^\T\beta  \right |^2  = O_p\left(
 		\left(\log n\right)^{2/r_2}{\log p \over n} + {n^{1/2} \over p}
 		\right)
 		\]
 		for some constant $r_2>0$, which is slower than our rate.

\end{document}